\documentclass[accepted]{uai2026} 

\usepackage[american]{babel}

\usepackage{xcolor}
\newif\ifcredits\creditstrue 
\providecommand{\cred}[1]{\ifcredits\textcolor{blue}{#1}\else#1\fi}
\newenvironment{credblock}{\par\ifcredits\color{blue}\fi}{\par} 
\providecommand{\dslp}[2][blue]{{\color{#1}#2}} 

\usepackage{natbib}
\usepackage{mathtools} 
\usepackage{amssymb,amsthm}
\usepackage{bbm}

\usepackage{booktabs}
\usepackage{graphicx}
\usepackage{caption}
\usepackage{etoolbox}
\AtBeginEnvironment{table}{\footnotesize}

\usepackage{algorithm}
\usepackage{algpseudocode}

\newcommand{\E}{\mathbb{E}}
\newcommand{\R}{\mathbb{R}}
\newcommand{\N}{\mathcal{N}}
\newcommand{\eps}{\varepsilon}
\newcommand{\del}{\delta}
\newcommand{\norm}[1]{\left\lVert #1 \right\rVert}
\newcommand{\KL}{\mathrm{KL}}
\newcommand{\ind}{\mathbbm{1}}
\providecommand{\codexedit}[1]{\ifcredits\textcolor{magenta}{#1}\else#1\fi}
\newenvironment{codexblock}{\par\ifcredits\color{magenta}\fi}{\par}
\providecommand{\greenedit}[1]{\ifcredits\textcolor{green!40!black}{#1}\else#1\fi}
\definecolor{newcitehl}{rgb}{1.0,0.86,0.45}
\newcommand{\newcite}[1]{\ifcredits\par\noindent\colorbox{newcitehl}{\parbox{\dimexpr\linewidth-2\fboxsep\relax}{#1}}\par\else#1\fi}

\theoremstyle{definition}
\newtheorem{definition}{Definition}
\newtheorem{assumption}{Assumption}

\theoremstyle{plain}
\newtheorem{theorem}{Theorem}
\newtheorem{lemma}{Lemma}
\newtheorem{proposition}{Proposition}
\newtheorem{corollary}{Corollary}

\theoremstyle{remark}
\newtheorem{remark}{Remark}

\title{Workload-Preserving Differentially Private Synthetic Data\\for Causal Inference via Maximum-Entropy Calibration}

\author[1]{\href{mailto:amir.asiaeetaheri@vumc.org?Subject=Your UAI 2026 paper}{Amir~Asiaee}{}}
\author[2]{Kaveh~Aryan}
\affil[1]{%
    Department of Biostatistics\\
    Vanderbilt University Medical Center\\
    Nashville, Tennessee, USA%
}
\affil[2]{%
    Department of Informatics\\
    King's College London\\
    London, UK%
}

\creditsfalse                       
\renewcommand{\dslp}[2][blue]{#2}   
\colorlet{green}{black}             

\begin{document}
\maketitle

\begin{abstract}
\greenedit{Workload-based differentially private (DP) synthetic data methods privately measure aggregate queries and post-process the noisy answers into synthetic records.}
\greenedit{Generic workloads can achieve strong distributional fidelity, but causal estimands such as the average treatment effect (ATE) depend on treatment-arm balance and outcome moments that generic marginals need not preserve.}
\greenedit{We propose \emph{causal workloads}: DP query sets designed around the orthogonal moments used by doubly robust causal estimators.}
\greenedit{The released workload can be used directly by stable moment-map estimators or reconstructed by maximum-entropy calibration into reusable synthetic data; our theory decomposes ATE error into sampling, privacy, workload-approximation, Monte Carlo, and calibration terms.}
\greenedit{We also introduce \textsc{Causal-AIM}, an adaptive workload selector, and a noise-aware multiple-imputation (NA+MI) procedure for confidence intervals from DP synthetic data.}
\greenedit{Because the workload is released once, the same DP synthetic table can support ATE, ATT, and subgroup analyses without additional privacy spending.}
\codexedit{Empirically, causal workloads are most useful at strict privacy budgets and for calibrated uncertainty, while generic workloads often retain an advantage for point RMSE as privacy relaxes.}
\codexedit{The broader lesson is a tradeoff: distributional fidelity can help point accuracy, but valid causal inference requires preserving causal moments and propagating DP noise rather than treating synthetic rows as real.}
\end{abstract}

\section{Introduction}\label{sec:intro}

Differential privacy is increasingly \greenedit{used for public release} of sensitive tabular datasets in domains such as health, education, and social science \citep{dwork2014foundations}.
A common deployment pattern is to release DP synthetic microdata so that downstream analysts can reuse existing statistical and machine learning workflows \citep{liang_generating_2026}.
State-of-the-art DP synthesis methods often follow a select--measure--reconstruct pipeline: select a set of low-dimensional queries, measure them privately, and reconstruct a distribution (often maximum entropy / graphical model) from the noisy answers \citep{mckenna2021nist,mckenna2022aim}.
However, \emph{causal inference} introduces additional structure and fragility.
Even if synthetic data match many marginals, causal estimands can be biased by subtle distortions in overlap, confounding structure, or conditional outcome models.
Recent work emphasizes that \dslp[red]{DP noise inflates the variance of causal estimators and invalidates naive confidence intervals unless it is explicitly modeled} \citep{farzam2024causal,schroder2025private,dp-cate2025,ohnishi2024covbal}.

This paper asks a concrete question:
\begin{quote}
\emph{Which DP queries must a synthetic data mechanism preserve to enable valid causal inference, and what are the fundamental privacy--causal tradeoffs?}
\end{quote}
\greenedit{We answer by designing \emph{causal workloads} and giving explicit bounds for stable workload-based estimators and for the calibrated synthetic-data route.}

\paragraph{Key idea: causal workloads as orthogonal moments.}
Modern causal estimators, including \codexedit{doubly robust (DR) estimators \citep{robins1994aipw} and double/debiased machine learning \citep{chernozhukov2018dml}}, rely on score functions whose expectation identifies the target estimand.
\greenedit{We propose to choose a workload that directly measures these orthogonal moments, or a basis approximation thereof, under DP.}
\greenedit{The released workload can then be used in two ways: the direct $q$-route evaluates a stable estimator as a function of the released moments, while the synthetic-data route reconstructs a maximum-entropy distribution matched to those moments and samples reusable synthetic records.}
\greenedit{Our theory controls the moment-level error in both routes, with additional calibration and Monte Carlo terms for synthetic-data analysis.}

\paragraph{Why synthetic data at all?}
If the goal is ATE only, one could release a DP ATE estimate directly.
Synthetic data is valuable when many analysts ask many questions, or when we want to preserve a whole \emph{class} of causal estimands (ATE, average treatment effect on the treated (ATT) $\E[Y(1)-Y(0)\mid T{=}1]$, subgroup effects, balancing diagnostics, etc.) without repeated access to the private data.
Causal workloads formalize this ``class of analyses'' viewpoint\greenedit{; Section~\ref{sec:experiments} provides direct evidence that a single release supports ATE, ATT, and subgroup analyses with no additional privacy spending, a reuse property structurally unavailable to direct DP ATE estimators}.

\paragraph{Our contributions.}
{\ifcredits\color{green!40!black}\fi
\begin{enumerate}
\setlength{\itemsep}{1pt}
\setlength{\topsep}{1pt}
\setlength{\parsep}{0pt}
\item We define \emph{causal workloads}: DP query sets built from orthogonal-score moments, with two downstream routes: direct moment plug-ins and maximum-entropy synthetic-data reconstruction.
\item We prove finite-sample error bounds that connect ATE error to released-workload error, and decompose synthetic-data error into sampling, privacy, workload-approximation, Monte Carlo, and calibration terms. All proofs are collected in Appendix~\ref{app:proofs}.
\item We introduce \textsc{Causal-AIM}, an adaptive causal workload selector, and NA+MI, a noise-aware multiple-imputation procedure for confidence intervals from DP synthetic data.
\item Empirically, Causal + NA+MI is the only private method with near-nominal coverage on all four benchmarks ($99.8{-}100\%$ at $\eps\le1$), while naive synthetic-data analyses remain at or below $35.2\%$; the same release also supports ATE, ATT, and subgroup analyses without extra privacy spending.
\end{enumerate}
}

\section{Related Work}\label{sec:related}

\begin{credblock}
\paragraph{A brief DP-synthesis primer.}
A mechanism is $(\eps,\del)$-differentially private if replacing one record changes its output distribution by at most a factor $e^{\eps}$, up to slack $\del$ \citep{dwork2014foundations}.
Workload-based \codexedit{DP synthetic-data methods first choose a set of aggregate queries (the \emph{workload}), release noisy answers to those queries, and then reconstruct a distribution whose query answers match the noisy measurements.}
\codexedit{The final synthetic records are sampled from this reconstructed distribution; reconstruction and sampling are post-processing, so they spend no additional privacy budget.}
\codexedit{This select--measure--reconstruct template underlies MWEM (Multiplicative Weights Exponential Mechanism; \citealp{hardt2012mwem}), which iteratively fixes poorly matched queries; Private-PGM, which fits a graphical model to noisy low-order marginals \citep{mckenna2019privatepgm}; MST (Maximum-Spanning-Tree marginal selection with Private-PGM reconstruction; \citealp{mckenna2021nist}), which won the NIST synthetic-data challenge; and AIM (Adaptive and Iterative Mechanism; \citealp{mckenna2022aim}), which adaptively selects marginals against a target workload.}
\end{credblock}

\paragraph{Workload-based DP synthetic data.}
A large class of methods measures a set of marginals / low-dimensional queries and reconstructs a distribution \codexedit{by imposing structure such as a graphical-model factorization or an optimization objective over candidate distributions}, including Private-PGM \citep{mckenna2019privatepgm} and its NIST-MST/MST instantiations \citep{mckenna2021nist}, the adaptive AIM mechanism \citep{mckenna2022aim}, and recent optimal-transport based approaches \citep{donhauser2024privpgd}.
Benchmarks suggest these methods can outperform GAN-style synthesis on tabular data for many metrics \citep{tao2022benchmark,bowen2020comparative}.
These methods are designed for general-purpose fidelity; none explicitly targets causal estimands.

\paragraph{Inference from DP synthetic data.}
Naively treating DP synthetic data as real can produce invalid inference, e.g.\ inflated type-I error \citep{perez2024mwutest}.
\codexedit{Noise-aware procedures} combine Bayesian modeling of the DP noise with multiple imputation to recover calibrated uncertainty \citep{raisa2023noiseaware}.
\codexedit{For record-level synthetic microdata---tables whose rows represent individual units rather than only aggregate summaries---classical multiple-imputation inference dates back to} \citet{reiter2003inference} and \citet{raghunathan2003multiple}.
Our noise-aware multiple imputation (NA+MI) procedure builds on this line but specializes it to causal estimands by choosing workloads informed by the orthogonal score structure.

\paragraph{Causal inference under DP.}
Recent papers propose DP causal estimators with point and interval estimation guarantees.
PrivATE \citep{schroder2025private} develops doubly robust DP confidence intervals for ATE using output perturbation.
\citet{ohnishi2024covbal} propose DP covariate balancing with finite-sample guarantees.
DP-CATE \citep{dp-cate2025} extends orthogonal learning to heterogeneous treatment effects under DP.
\citet{farzam2024causal} study causal inference under DP broadly \codexedit{and analyze how DP mechanisms can inflate ATE variance and distort individual-level effects}.
Earlier, \citet{niu2022dp_cate} proposed DP estimation of heterogeneous causal effects, and concurrent work by \citet{lebeda2025model_agnostic} develops a model-agnostic DP causal framework.
Our work \codexedit{focuses instead} on \emph{synthetic data} as the release object: rather than releasing a single DP estimate, we release a DP synthetic dataset that supports a whole class of causal analyses.

\paragraph{Double/debiased machine learning.}
Our causal workload design draws on the orthogonal moment framework of \citet{chernozhukov2018dml}, which shows that Neyman-orthogonal scores enable root-$n$ inference for treatment effects when nuisance functions are estimated at slower rates.
We \codexedit{use this structure to choose} workloads that preserve the moments needed for orthogonal scores.

\newcite{
  \paragraph{Causal prior for synthetic data.}
  A related line encodes causal or analysis-specific prior knowledge \emph{into the synthetic generation process itself}: reward-guided generation steers a generator toward a target downstream utility such as regression-coefficient recovery \citep{jackson2026reward}, and post-hoc constraint wrappers inject partial causal structure, such as trusted or forbidden edges and monotonicity, into arbitrary tabular generators \citep{asiaee2026causalwrap}. Our objective is different: rather than encoding a prior during generation, we design the DP measurements so that the released dataset preserves a target causal estimand, and we account for the DP noise in downstream inference.}

\section{Problem Setup}\label{sec:setup}

We observe i.i.d.\ data $D=\{(X_i,T_i,Y_i)\}_{i=1}^n$, where $T\in\{0,1\}$ is treatment.
The target is the average treatment effect (ATE)
\(
\tau := \E[ Y(1)-Y(0)].
\)

\begin{assumption}[Unconfoundedness and positivity]
\label{assump:causal}
$(Y(1),Y(0))\perp T \mid X$ and the propensity score \citep{rosenbaum1983propensity} $e(x):=\Pr(T=1\mid X=x)$ satisfies $e(x)\in[\eta,1-\eta]$ almost surely for some $\eta>0$.
\end{assumption}

Under Assumption~\ref{assump:causal},
\(\tau = \E[m_1(X)-m_0(X)]\), where \(m_t(x):=\E[Y\mid T=t,X=x]\).

\section{Causal Workloads and Maximum-Entropy Calibration}\label{sec:method}

\cred{Figure~\ref{fig:pipeline} gives an overview of the full release-and-inference procedure.}

\subsection{A Workload That Targets Orthogonal Moments}

Let $\phi:\mathcal{X}\to\R^p$ be a feature map (e.g.\ spline basis, random Fourier features, tree-based leaves, or indicators for a discretization of $X$).
Define the \emph{moment vectors}
\begin{align} \label{eq:1}
q_{t}^{(0)}(D) &:= \frac{1}{n}\sum_{i=1}^n \ind\{T_i=t\}\,\phi(X_i) \in \R^p, \\ \label{eq:2}
q_{t}^{(1)}(D) &:= \frac{1}{n}\sum_{i=1}^n \ind\{T_i=t\}\,Y_i\,\phi(X_i) \in \R^p.
\end{align}
\codexedit{Here, $q_t^{(0)}$ records treatment-arm feature masses, and $q_t^{(1)}$ records outcome-weighted feature moments.}
\codexedit{As a simple bin-indicator example, each coordinate of $q_t^{(1)}$ is the joint arm-bin mass times the mean outcome in that bin, with the mass scaled by $1/n$.}
\codexedit{These four blocks are the private measurements used in the experimental pipeline.}
\codexedit{Second-order Gram moments are useful for some direct regression plug-ins, but they are not part of the default release; Appendix~\ref{app:feature-encodings} gives the feature-encoding example and the extension.}

\begin{definition}[Causal workload]
Fix a feature map $\phi$.
\codexedit{The \emph{causal workload} $\mathcal{Q}_\phi$ used in our main procedure is the ordered collection of query functions whose answers are $\{q_{0}^{(0)},q_{1}^{(0)},q_{0}^{(1)},q_{1}^{(1)}\}$.}
\codexedit{Its stacked answer is}
\[
\codexedit{q(D):=\bigl[q_0^{(0)}(D),q_1^{(0)}(D),q_0^{(1)}(D),q_1^{(1)}(D)\bigr]\in\R^{4p}.}
\]
\codexedit{Here ``workload'' means the collection of DP queries, while $q(\cdot)$ denotes the realized answer vector.}
\end{definition}

\codexedit{This base $4p$ workload is what we release in the experiments. It is \emph{one principled choice}, not a claimed optimum: it targets the first-order treatment and outcome moments used by orthogonal-score inference on nuisances projected onto the $\phi$ basis (Proposition~\ref{prop:sufficiency}, Appendix~\ref{app:sufficiency}), without asserting minimax optimality, uniqueness, or a lower bound.}

\begin{credblock}
\begin{figure}[t]
\centering
\includegraphics[width=\columnwidth]{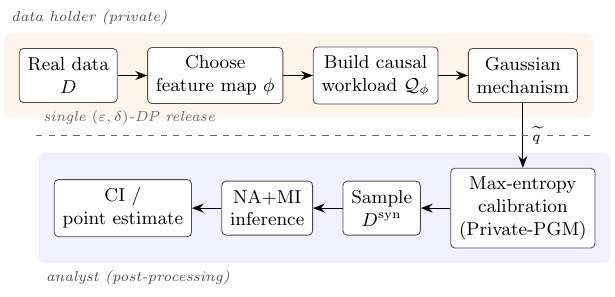}
\caption{End-to-end pipeline. The data holder chooses the feature map $\phi$, builds the causal workload, and releases Gaussian-mechanism-noised moments once. \codexedit{Downstream analysis is post-processing: one may use the noisy moments directly in a stable moment-map estimator, or reconstruct a maximum-entropy distribution (via Private-PGM), sample synthetic data, and run NA+MI inference.}}
\label{fig:pipeline}
\end{figure}
\end{credblock}

\subsection{Private Measurement of the Workload}

\codexedit{Let $W=(X,T,Y)$ and let $m=4p$ be the dimension of the causal workload $q(D)$ defined above.}
\codexedit{For $a \in [m]$, let $h_a(W_i)$ denote record $i$'s contribution to coordinate $a$, so $q_a(D)=n^{-1}\sum_i h_a(W_i)$; for any distribution $P$ over $W$, $q_a(P)=\E_P h_a(W)$.}
\codexedit{We use $q(D)$ for the exact, non-released empirical answer on the confidential data; $q^\star=q(P^\star)$ for the population answer under the true data-generating law $P^\star$; and $\widetilde q$ for the DP noisy release.}
\codexedit{The fixed causal mechanism measures all $m$ coordinates once with the Gaussian mechanism:}
\[
\codexedit{\widetilde q=q(D)+Z,\qquad Z\sim\N(0,\sigma^2 I_m).}
\]
\codexedit{Assume $Y$ is clipped to $[-B,B]$ and $\norm{\phi(X)}_2\le \phi_{\max}$.}
\codexedit{Let $\Delta_2:=\sup_{D\sim D'}\norm{q(D)-q(D')}_2$ be the replacement-adjacency $\ell_2$ sensitivity.}
\(
\codexedit{\Delta_2 \le \frac{2\phi_{\max}\sqrt{1+B^2}}{n}.}
\)
\codexedit{The standard Gaussian mechanism \citep{dwork2014foundations} uses}
\(
\codexedit{
\sigma=\frac{\Delta_2\sqrt{2\log(1.25/\del)}}{\eps}.}
\)
\codexedit{For an adaptive mechanism, the same notation is applied to the measured coordinate set $S\subset[m]$: $\widetilde q_S$ denotes the stored noisy answers on $S$.}

\subsection{Two Routes from the Released Workload for Causal Estimation}

\codexedit{The DP object produced by measurement is a noisy aggregate vector, not yet a synthetic dataset.}
\codexedit{It can be used in two ways.}
\codexedit{In the \emph{direct moment route} (or $q$-route), an estimator that can be written as a stable function $\widehat\tau_{\mathrm{est}}(q)$ is evaluated directly at $\widetilde q$ (or at the measured coordinates $\widetilde q_S$); Section~\ref{sec:theory} analyzes this route because it exposes how moment error becomes causal-estimation error.}
\codexedit{In the \emph{synthetic-data route}, we reconstruct a distribution $P^{\mathrm{syn}}$ whose workload answers match the noisy measurements, sample synthetic records from that distribution, and then run standard causal estimators such as \cred{DR / augmented inverse-propensity weighting (AIPW)} and NA+MI.}
\codexedit{The experiments use this synthetic-data route; the theory controls the part of this route that is expressible through the reconstructed workload answer $q(P^{\mathrm{syn}})$, plus finite synthetic-sampling error.}
\codexedit{The exponential-family form below is precisely the reconstruction step: it turns noisy moments into a distribution, and sampling from that distribution produces the synthetic data.}
\codexedit{We present each route in two versions: a fixed mechanism that measures all queries at once ($S=[m]$), and an adaptive mechanism, \textsc{Causal-AIM}, that builds $S$ over rounds.}

\subsection{Maximum-Entropy Reconstruction for Synthetic Data}

Let $\mathcal{P}$ be a class of distributions over $W=(X,T,Y)$ (discrete support or parametric).
\codexedit{Given a measured coordinate set $S\subset[m]$ (with $S=[m]$ for the fixed full workload), we define the calibrated synthetic distribution as an information projection:}
{\small
\begin{equation}
\label{eq:iprojection}
P^{\mathrm{syn}}
=
\arg\min_{P\in\mathcal{P}}
\KL(P\ \|\ P_0)
\ \text{s.t.} \
\codexedit{\forall a \in S,\ q_a(P)=\widetilde{q}_a,}
\end{equation}} 
where \codexedit{$P_0$ is a reference distribution, such as a uniform distribution on the discretized support, an independent product of noisy one-way marginals, or a public population prior.}
In the discrete case, the solution is an exponential family:
\[
\codexedit{p^{\mathrm{syn}}(w)\propto p_0(w)\exp\Big(\sum_{a\in S} \xi_a h_a(w)\Big)}
\]
for Lagrange multipliers \codexedit{$\xi$}.
\codexedit{Appendix~\ref{app:maxent-details} gives the derivation and the relaxed form used when noisy constraints are infeasible.}
This is the same mathematical object used in Private-PGM \citep{mckenna2019privatepgm,mckenna2021nist}.
\codexedit{Thus reconstruction means fitting the maximum-entropy distribution $p_\xi$ whose measured workload answers match $\widetilde q_S$ up to the DP noise scale; synthetic-data generation is the subsequent sampling of records from $p_\xi$.}
\codexedit{The fixed synthesis procedure is given in Appendix~\ref{app:maxent-details}.}

\subsection{Adaptive Query Selection: \textsc{Causal-AIM}}

AIM selects queries adaptively to reduce worst-case error in a generic workload \citep{mckenna2022aim}.
\codexedit{The fixed mechanism measures the whole causal workload once, so $S=[m]$.}
\codexedit{\textsc{Causal-AIM} is the adaptive variant: it builds $S$ over rounds by selecting the next feature group to measure using a private estimate of causal utility.}
\codexedit{Here a candidate $\phi_r$ can be a single feature or a group such as all bins of one covariate; let $I(r)\subset[m]$ denote the corresponding coordinates of the released workload.}
At iteration $k$, \textsc{Causal-AIM} \codexedit{(i) starts from the current measured set $S_{k-1}$ and its reconstructed model $P_{k-1}$, (ii) evaluates a private upper bound on ATE error using an orthogonal score residual $\psi(W;\theta,\zeta)$, where $\psi$ is the ATE influence function and $\zeta$ collects nuisance functions, and (iii) selects the next feature group $\phi_{r_k}$, or its corresponding moment coordinates $I(r_k)$, before refitting to obtain $P_k$; Appendix~\ref{app:causal-aim} gives details.}
\codexedit{Selected groups are removed from the candidate set: once $I(r)$ has been measured, its noisy answer is reused in later refits rather than measured again.}
\codexedit{After any round, the stored noisy vector $\widetilde q_{S_k}$ can be used directly in a moment-map estimator or passed through maximum-entropy reconstruction to generate synthetic data; Algorithm~\ref{alg:causalaim} displays the synthetic-data route used in our experiments.}
This yields a causal-utility-driven workload that is usually much smaller than ``all marginals up to order 2''.

\begin{algorithm}[t]
\caption{\textsc{Causal-AIM} (high-level)}
\label{alg:causalaim}
\begin{algorithmic}[1]
\Require \codexedit{Data $D$, privacy budget $(\eps,\del)$, feature groups $\mathcal R$, iterations $K$, reference distribution $P_0$}
\State \codexedit{Initialize selected coordinates $S_0\leftarrow\emptyset$, available set $A_0\leftarrow\mathcal R$, and current reconstruction $P^{(0)}\leftarrow P_0$}
\For{\codexedit{$k=1,\dots,\min(K,|\mathcal R|)$}}
    \State \codexedit{Compute an internal utility $u_k(r)$ for each unmeasured group $r\in A_{k-1}$, estimating the ATE-error reduction from adding coordinates $I(r)$}
    \State \codexedit{Select $r_k$ using the exponential mechanism applied to $\{u_k(r):r\in A_{k-1}\}$, then update $S_k\leftarrow S_{k-1}\cup I(r_k)$ and $A_k\leftarrow A_{k-1}\setminus\{r_k\}$}
    \State \codexedit{Measure only the new coordinates $I(r_k)$ with the Gaussian mechanism and store their noisy answers}
    \State \codexedit{Refit $P_k$ by solving \eqref{eq:iprojection} on all stored noisy coordinates $S_k$}
\EndFor
\State \codexedit{Output stored noisy moments $\widetilde q_{S_K}$; for synthetic release, output $P_K$ and sample $D^{\mathrm{syn}}\sim P_K$ (post-processing)}
\end{algorithmic}
\end{algorithm}

\section{Theory: Causal Error Bounds and Tradeoffs}\label{sec:theory}

\codexedit{This section tracks how error in the released workload affects ATE estimation along the two routes.}
\codexedit{We first analyze direct moment plug-ins, where the estimator is a function of the released workload, and then add the reconstruction-calibration and finite-synthetic-sample terms needed for the synthetic-data route.}
\codexedit{Thus the theoretical object is an estimator with an explicit workload argument, $\widehat\tau_{\mathrm{est}}(q)$; this is narrower than an arbitrary downstream learner run on synthetic rows.}
\codexedit{We state the main decomposition for the ATE; the same weighted-estimand argument gives ATT and subgroup effects when the corresponding weights are represented by the workload (Appendix~\ref{app:weighted-ate}).}

\subsection{Bounding ATE Error by Moment Error}

\codexedit{The two routes in Section~\ref{sec:method} share the same first question: how accurately does DP release the workload moments?}
\codexedit{For the direct moment route, the second question is how sensitive an ATE estimator is to perturbing those moments.}
\codexedit{For the second question, call a direct moment-map estimator $\widehat\tau_{\mathrm{est}}(q)$ \emph{$L_{\mathrm{est}}$-stable} on a workload domain if}
\[
\codexedit{
|\widehat\tau_{\mathrm{est}}(q)-\widehat\tau_{\mathrm{est}}(q')|
\le
L_{\mathrm{est}}\norm{q-q'}_\infty
\quad\forall \text{admissible }q,q'.}
\]
\codexedit{Here $q$ may be the population answer $q^\star=q(P^\star)$, the empirical answer $q(D)$, the DP release $\widetilde q$, or the reconstructed answer $q(P^{\mathrm{syn}})$.}
\codexedit{The formal stability examples below analyze the direct route; the later decomposition composes the same stability logic with the calibration and Monte Carlo terms that appear when the released workload is routed through maximum-entropy reconstruction and synthetic sampling.}
\codexedit{This separates the generic moment-level guarantee from software-specific choices about the reconstruction solver and downstream nuisance fitting.}

\begin{credblock}
\paragraph{Example 1: projected ridge plug-in.}
\codexedit{Some direct regression plug-ins require the arm-specific Gram moment $G_t(D):=n^{-1}\sum_i\ind\{T_i=t\}\phi(X_i)\phi(X_i)^\top$; write $q_t^{(2)}(D):=\operatorname{vec}\{G_t(D)\}$.}
\codexedit{For a workload-answer vector containing the needed blocks, write $G_t(q)$ for the Gram block, $r_t(q)=q_t^{(1)}$, and $\bar\phi(q)=q_0^{(0)}+q_1^{(0)}$.}
\codexedit{Here ``projection'' means the ridge least-squares approximation of the arm-specific outcome regression $m_t(x)$ in the span of $\phi$.}
\codexedit{The arm-$t$ coefficient is $\widehat\beta_{t,\lambda}(q)=\{G_t(q)+\lambda I\}^{-1}r_t(q)$, with corresponding projected \cred{conditional average treatment effect (CATE)} and ATE:}
\begin{align}
\label{eq:projected-cate}
\codexedit{\widehat\tau_\lambda(x;q)}
&\codexedit{:=\phi(x)^\top\{\widehat\beta_{1,\lambda}(q)-\widehat\beta_{0,\lambda}(q)\},}\\
\label{eq:projected-ate}
\codexedit{\widehat\tau_\lambda(q)}
&\codexedit{:=\bar\phi(q)^\top\{\widehat\beta_{1,\lambda}(q)-\widehat\beta_{0,\lambda}(q)\}.}
\end{align}

\begin{theorem}[Projected ridge moment stability]
\label{thm:lipschitz}
\codexedit{Assume $Y\in[-B,B]$ and $\norm{\phi(X)}_2\le \phi_{\max}$.}
\codexedit{Fix $\lambda>0$ and use $\widehat\tau_\lambda(q)$ as defined in \eqref{eq:projected-ate}.}
\codexedit{Suppose that for each $q_\circ\in\{q,q'\}$ and $t\in\{0,1\}$, $G_t(q_\circ)\succeq\kappa I$ for some $\kappa\ge0$ and $\norm{\widehat\beta_{t,\lambda}(q_\circ)}_2\le R_\beta$.}
\codexedit{Then for two admissible workload-answer vectors $q,q'$ of the same type (same coordinate ordering and same rule for $G_t$),}
{\small 
\[
\codexedit{
|\widehat{\tau}_\lambda(q)-\widehat{\tau}_\lambda(q')|
\;\le\;
L_\phi \norm{q-q'}_\infty,\
L_\phi \le C_\phi\left(1+\frac{1}{\kappa+\lambda}\right).}
\]}
\codexedit{The constant $C_\phi$ depends on $(\phi_{\max},R_\beta,p)$ but not on $n$, $\eps$, or the particular DP noise draw; Appendix~\ref{app:proof-lipschitz} gives one explicit choice.}
\end{theorem}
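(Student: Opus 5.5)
The plan is a direct perturbation argument on $\widehat\tau_\lambda(q)=\bar\phi(q)^\top\Delta\beta(q)$, where $\Delta\beta(q):=\widehat\beta_{1,\lambda}(q)-\widehat\beta_{0,\lambda}(q)$. First I will telescope:
\[
\widehat\tau_\lambda(q)-\widehat\tau_\lambda(q')=\{\bar\phi(q)-\bar\phi(q')\}^\top\Delta\beta(q)+\bar\phi(q')^\top\{\Delta\beta(q)-\Delta\beta(q')\}.
\]
The first term is at most $\norm{\bar\phi(q)-\bar\phi(q')}_2\cdot 2R_\beta\le 2\sqrt{p}\,\norm{q-q'}_\infty\cdot 2R_\beta$. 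Here $\bar\phi(q)-\bar\phi(q')$ is a sum of two $p$-dimensional block differences, and I pass from $\ell_2$ to $\ell_\infty$ at the cost of $\sqrt p$. For the second term I need a bound on $\norm{\bar\phi(q')}_2$. For genuine answers (empirical, population, or reconstructed distributions) this is at most $\phi_{\max}$, since it is an average of $\phi(X)$. For a noisy $\widetilde q$ I will include boundedness of $\bar\phi$ in the meaning of ``admissible'', with the bound absorbed into $C_\phi$, for instance a bound $\phi_{\max}$ after projection onto the ball.

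Next I bound each arm's coefficient perturbation using the resolvent identity. Write $A=G_t(q)+\lambda I$ and $A'=G_t(q')+\lambda I$. Then
\[
\widehat\beta_t(q)-\widehat\beta_t(q')=A^{-1}\{r_t(q)-r_t(q')\}-A^{-1}\{G_t(q)-G_t(q')\}\widehat\beta_t(q').
\]
By assumption $A\succeq(\kappa+\lambda)I$, so $\norm{A^{-1}}_{op}\le 1/(\kappa+\lambda)$. I then use $\norm{r_t(q)-r_t(q')}_2\le\sqrt p\,\norm{q-q'}_\infty$ and $\norm{G_t(q)-G_t(q')}_{op}\le\norm{\cdot}_F\le p\,\norm{q-q'}_\infty$, the latter because the Gram block has $p^2$ coordinates. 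Together with $\norm{\widehat\beta_t(q')}_2\le R_\beta$ this gives
\[
\norm{\widehat\beta_t(q)-\widehat\beta_t(q')}_2\le\frac{\sqrt p+pR_\beta}{\kappa+\lambda}\norm{q-q'}_\infty .
\]
Summing over both arms and multiplying by $\phi_{\max}$ bounds the second term.

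Collecting terms yields $L_\phi\le 4\sqrt p R_\beta+2\phi_{\max}(\sqrt p+pR_\beta)/(\kappa+\lambda)$. Taking $C_\phi=\max\{4\sqrt pR_\beta,\,2\phi_{\max}(\sqrt p+pR_\beta)\}$ gives the stated form $C_\phi(1+1/(\kappa+\lambda))$, and this $C_\phi$ depends only on $(\phi_{\max},R_\beta,p)$.

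The main obstacle is bookkeeping rather than analysis. The rule for $G_t$ must make the map $q\mapsto G_t(q)$ Lipschitz from $\ell_\infty$ to operator norm. If $G_t$ is read off from the vec block, the Frobenius bound above suffices. If instead $G_t$ is built by some other rule, such as a diagonal Gram from bin indicators $q_t^{(0)}$, I would verify the analogous constant case by case; the diagonal case is easier, since its operator norm is at most the $\ell_\infty$ norm. The second delicate point is the boundedness of $\bar\phi(q')$ for noisy releases noted above.
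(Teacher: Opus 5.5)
Your proposal is correct and follows essentially the same route as the paper's proof: the same split of $\bar\phi^\top\{\widehat\beta_{1,\lambda}-\widehat\beta_{0,\lambda}\}$, the same resolvent identity closed with $\norm{\widehat\beta_{t,\lambda}(q')}_2\le R_\beta$, and the same $\sqrt p$ and $p$ conversions from $\norm{q-q'}_\infty$ (the paper packages these as Lemma~\ref{lem:ridge-full}). Your constants differ only cosmetically. Your factor $2$ for $\bar\phi=q_0^{(0)}+q_1^{(0)}$ is the more careful accounting, since the paper writes $\norm{\bar\phi-\bar\phi'}_2\le\sqrt p\,\delta_q$. You also rightly flag that $\norm{\bar\phi(q')}_2\le\phi_{\max}$ needs an admissibility assumption when $q'$ is a noisy release; the paper glosses this point by appealing to averaging.
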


\begin{remark}[Ridge and rank]
\codexedit{The useful quantity is $\kappa+\lambda$: ridge compensates for small eigenvalues of the arm-specific Gram matrices.}
\codexedit{If $\lambda=0$, the same statement requires $\kappa>0$; if $\kappa=0$, one must take $\lambda>0$ to make the inverse stable.}
\end{remark}

\paragraph{Example 2: clipped IPW/AIPW moment maps.}
\codexedit{Inverse-propensity weighting (IPW) estimates $\E[Y(1)]$ and $\E[Y(0)]$ by reweighting observed treated and control outcomes by inverse treatment probabilities.}
\codexedit{With partition-cell features, those reweighted sums can be written directly as functions of the released moments.}
\codexedit{For partition-cell features, let $p_{tj}(q)=q_{tj}^{(0)}$, $r_{tj}(q)=q_{tj}^{(1)}$, $s_j(q)=p_{0j}(q)+p_{1j}(q)$, and define the clipped cell propensity $\widehat e_j(q)=\operatorname{clip}\{p_{1j}(q)/s_j(q),\eta,1-\eta\}$.}
\codexedit{The direct clipped IPW moment map is}
\[
\codexedit{
\widehat\tau_{\mathrm{IPW}}(q)
=
\sum_{j}
\left\{
\frac{r_{1j}(q)}{\widehat e_j(q)}
-
\frac{r_{0j}(q)}{1-\widehat e_j(q)}
\right\}.}
\]
\codexedit{AIPW adds the usual outcome-regression augmentation to the same cell-level IPW score; Appendix~\ref{app:ipw-aipw-stability} gives the full expression.}
\begin{proposition}[Clipped IPW/AIPW moment stability]
\label{prop:aipw-stability}
\codexedit{For partition-cell features, bounded outcomes, lower-bounded cell masses, and propensities clipped to $[\eta,1-\eta]$, the direct moment-map IPW and AIPW estimators are $L_{\mathrm{IPW}}$- and $L_{\mathrm{AIPW}}$-stable.}
\codexedit{The constants scale polynomially in $1/\eta$ and the inverse minimum cell mass; Appendix~\ref{app:ipw-aipw-stability} gives explicit bounds.}
\codexedit{This is standard clipped-score stability \citep{rosenbaum1983propensity,robins1994aipw,chernozhukov2018dml}; the paper-specific step is plugging the resulting $L_{\mathrm{est}}$ into the DP moment-release decomposition.}
\end{proposition}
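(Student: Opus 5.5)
The plan is to treat each cell term as a smooth function of a few moment coordinates on a compact admissible domain, bound its partial derivatives (or difference quotients, since $\operatorname{clip}$ is only Lipschitz), and sum over cells. Write the admissible domain as the set of workload answers with $p_{tj}(q)\ge \underline p>0$ for every arm and cell (hence $s_j(q)\ge 2\underline p$) and with $|r_{tj}(q)|\le B\,s_j(q)$, or more loosely $|r_{tj}(q)|\le B$. This is the meaning I would give to ``lower-bounded cell masses'' and ``bounded outcomes''. Let $J$ be the number of cells. With partition-cell indicator features, every coordinate of $q$ lies in $[-B,B]$ or $[0,1]$. Each cell term depends only on the four coordinates $(p_{0j},p_{1j},r_{0j},r_{1j})$. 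So $|q_j-q'_j|\le\norm{q-q'}_\infty$ for every such coordinate, and a per-cell Lipschitz constant $\ell_j$ gives $L_{\mathrm{IPW}}\le \sum_j \ell_j\le J\max_j\ell_j$.

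\emph{Step 1 (propensity map).} The map $(p_{0j},p_{1j})\mapsto p_{1j}/s_j$ has gradient $(-p_{1j}/s_j^2,\ p_{0j}/s_j^2)$, whose $\ell_1$ norm is $1/s_j\le 1/(2\underline p)$. The clip is $1$-Lipschitz. Hence
\[
|\widehat e_j(q)-\widehat e_j(q')|\le \norm{q-q'}_\infty/\underline p .
\]

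\emph{Step 2 (IPW cell term).} Use the telescoping split
\[
\frac{r}{e}-\frac{r'}{e'}=\frac{r-r'}{e}+r'\,\frac{e'-e}{ee'} .
\]
With $e,e'\ge\eta$ and $|r'|\le B$, this gives
\[
\Bigl|\frac{r}{e}-\frac{r'}{e'}\Bigr|\le \Bigl(\frac1\eta+\frac{B}{\eta^2\underline p}\Bigr)\norm{q-q'}_\infty .
\]
The control term $r/(1-e)$ obeys the same bound. So $\ell_j\le 2\bigl(\eta^{-1}+B\eta^{-2}\underline p^{-1}\bigr)$, and
\[
L_{\mathrm{IPW}}\le 2J\bigl(\eta^{-1}+B\eta^{-2}\underline p^{-1}\bigr).
\]
This is polynomial in $1/\eta$ and $1/\underline p$, as claimed.

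\emph{Step 3 (AIPW).} I take the cell-level AIPW map to be
\[
\sum_j\Bigl[s_j(\widehat m_{1j}-\widehat m_{0j})+\frac{r_{1j}-p_{1j}\widehat m_{1j}}{\widehat e_j}-\frac{r_{0j}-p_{0j}\widehat m_{0j}}{1-\widehat e_j}\Bigr],
\qquad \widehat m_{tj}=\frac{r_{tj}}{p_{tj}} .
\]
Its ingredients are each bounded and Lipschitz on the domain.
\begin{itemize}
\item $\widehat m_{tj}$ is bounded by $B$ and has Lipschitz constant at most $(1+B)/\underline p$, by the same quotient-gradient computation as in Step 1.
\item $s_j$ is bounded by $1$ and is $2$-Lipschitz.
\item $1/\widehat e_j$ is bounded by $1/\eta$ and has constant at most $\eta^{-2}\underline p^{-1}$.
\end{itemize}
Applying the product rule $|fg-f'g'|\le |f|\,|g-g'|+|g'|\,|f-f'|$ term by term then gives $\ell_j^{\mathrm{AIPW}}=O\bigl((1+B)(\eta^{-2}\underline p^{-1}+\eta^{-1}\underline p^{-1})\bigr)$, and summing over cells gives $L_{\mathrm{AIPW}}$. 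In the exact-cell case the augmentation $r_{tj}-p_{tj}\widehat m_{tj}$ vanishes identically. Stability still holds, and the bound only becomes looser than necessary.

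\emph{Main obstacle.} I expect the main difficulty to be domain bookkeeping rather than analysis. The DP release $\widetilde q$ can have negative or tiny cell masses, so the statement must be read on the admissible set. The clean way is to project noisy masses onto $[\underline p,1]$ first (and $r$ onto $[-B,B]$), which is $1$-Lipschitz coordinatewise in $\ell_\infty$. Composing with this projection makes the bounds hold for all $q,q'$. I would state this explicitly, and I would also note the factor $J$ that comes from using $\ell_\infty$ on the input. That factor could be removed by using an $\ell_1$ norm, but the statement's choice of $\norm{\cdot}_\infty$ forces it.
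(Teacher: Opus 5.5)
Your proof is correct and uses the same perturbation calculus as the paper: add-and-subtract (product rule), $|1/u-1/v|\le|u-v|/\eta^2$ on $[\eta,1-\eta]$, a nonexpansive clip, and a quotient bound for the cell propensity under a mass floor. The differences are in how you handle the sum over cells and in how general the nuisance map is.

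\textbf{Summing over cells.} The paper does not bound each cell and then sum. It uses the aggregate partition-basis facts $\sum_j(|r_{0j}|+|r_{1j}|)\le B$ and $\sum_{t,j}p_{tj}\le 1$. As a result the propensity-perturbation term is $B\Delta_e/\eta^2\le 3B\delta_q/(\eta^2\rho)$, with no cell count in it; only the direct terms $2p\delta_q/\eta$ carry a factor $p$.

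Your remark that the $\norm{\cdot}_\infty$ norm ``forces'' the factor $J$ is therefore true only for those direct terms. Your bound $2J(\eta^{-1}+B\eta^{-2}\underline p^{-1})$ is looser by a factor of order $J$ on the propensity term, though still polynomial.

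\textbf{Generality of the nuisance map.} The paper keeps $\widehat m_{tj}(q)$ abstract, assuming only $|\widehat m_{tj}|\le B_m$ and $\Delta_m\le L_m\delta_q$. It needs only $s_j\ge\rho$ for the propensity.

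You instantiate $\widehat m_{tj}=r_{tj}/p_{tj}$. This forces a per-arm mass floor, and it makes the augmentation vanish identically, so your ``AIPW'' is really the stratified outcome plug-in. The paper's form also covers nuisances such as the diagonal-Gram ridge fit $r_{tj}/(p_{tj}+\lambda)$, where the augmentation is nonzero. Your product-rule step extends to that case verbatim once you posit $B_m$ and $L_m$.

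\textbf{What your version buys.} You get a fully explicit constant with no nuisance hypotheses. You also treat the admissible domain explicitly, which the paper leaves implicit.

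\textbf{One small repair.} The claims $|\widehat m_{tj}|\le B$ and Lipschitz constant $(1+B)/\underline p$ need the domain condition $|r_{tj}|\le B\,p_{tj}$. Empirical and population answers satisfy this; your stated condition $|r_{tj}|\le B\,s_j$ does not imply it. Box-projecting $r$ onto $[-B,B]$ does not enforce it either. You have two fixes:
\begin{enumerate}
\item Clip $\widehat m_{tj}$ to $[-B,B]$, which is also $1$-Lipschitz.
\item Accept the weaker bounds $|\widehat m_{tj}|\le B/\underline p$ and Lipschitz constant $1/\underline p+B/\underline p^{2}$. These follow from $|r_{tj}|\le B$ and are still polynomial in $1/\underline p$.
\end{enumerate}
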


\codexedit{For either example, setting $q=q(D)$ and $q'=\widetilde q$ converts the deterministic stability bound into a privacy-noise bound.}
\codexedit{For ridge, the following corollary also records the conditioning requirement for the noisy Gram blocks.}

\begin{corollary}[Noise-calibrated ridge]
\label{cor:noise_ridge}
\codexedit{Suppose the workload coordinates entering $\widehat\tau_\lambda(q)$ have independent Gaussian noise with coordinate scale $\sigma$, and set $\delta_m=c\sigma\sqrt{\log(m/\gamma)}$.}
\codexedit{On the event $\norm{\widetilde q-q(D)}_\infty\le\delta_m$ and $\max_t\norm{\widetilde G_t-G_t}_{\mathrm{op}}\le\lambda/2$, the noisy ridge matrices remain well conditioned and}
\[
\codexedit{
|\widehat\tau_\lambda(\widetilde q)-\widehat\tau_\lambda(q(D))|
\lesssim
C_\phi\left(1+\frac{1}{\kappa+\lambda}\right)\delta_m.}
\]
\codexedit{For full joint-cell bases with diagonal Gram matrices, the operator-norm condition follows from $\lambda\gtrsim\delta_m$; for explicitly released dense Gram blocks, standard Gaussian-matrix bounds give the analogous sufficient choice $\lambda\gtrsim\sigma\sqrt{p+\log(1/\gamma)}$.}
\end{corollary}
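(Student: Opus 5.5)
The plan is to reduce Corollary~\ref{cor:noise_ridge} to Theorem~\ref{thm:lipschitz} applied with $q=q(D)$ and $q'=\widetilde q$. That theorem needs three things for both vectors: a lower eigenvalue bound on the arm Gram blocks, a bound $R_\beta$ on the ridge coefficients, and the sup-norm distance between them. The proof therefore has three steps.

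\textbf{Step 1: the noise event.} Since $Z_a\sim\N(0,\sigma^2)$ independently, a union bound over the $m$ coordinates gives
\[
\Pr\bigl(\norm{\widetilde q-q(D)}_\infty>\delta_m\bigr)\le 2m\exp\{-\delta_m^2/(2\sigma^2)\}\le\gamma
\]
for a suitable absolute constant $c$. This justifies the choice $\delta_m=c\sigma\sqrt{\log(m/\gamma)}$, although the corollary is stated on the event, so only the deterministic part is strictly required.

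\textbf{Step 2: conditioning of the noisy Gram matrices.} On the operator-norm event, Weyl's inequality gives
\[
\lambda_{\min}\bigl(\widetilde G_t+\lambda I\bigr)\ge\lambda_{\min}(G_t)+\lambda-\lambda/2\ge \kappa+\lambda/2 .
\]
So the noisy ridge matrix is invertible with $\norm{(\widetilde G_t+\lambda I)^{-1}}_{\mathrm{op}}\le(\kappa+\lambda/2)^{-1}\le 2/(\kappa+\lambda)$. The noisy Gram $\widetilde G_t$ may fail to be PSD, so we cannot invoke the hypothesis $G_t(q')\succeq\kappa I$ literally. Instead I will re-enter the proof of Theorem~\ref{thm:lipschitz}, which I expect uses only these two regularized-inverse bounds together with the resolvent identity
\[
A^{-1}-B^{-1}=A^{-1}(B-A)B^{-1}.
\]
Replacing $\kappa+\lambda$ by $\kappa+\lambda/2$ costs only a factor of $2$, which is absorbed into $\lesssim$.

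\textbf{Step 3: coefficient bound.} The noisy coefficient satisfies
\[
\norm{\widehat\beta_{t,\lambda}(\widetilde q)}_2\le \frac{2\,\norm{\widetilde r_t}_2}{\kappa+\lambda}\le \frac{2\bigl(\norm{r_t}_2+\sqrt p\,\delta_m\bigr)}{\kappa+\lambda},
\]
where $\norm{r_t}_2\le B\phi_{\max}$. This supplies an $R_\beta$ of the same type as for $q(D)$, up to constants once $\delta_m\lesssim 1$. Plugging Steps 2 and 3 into the Lipschitz argument and using $\norm{\widetilde q-q(D)}_\infty\le\delta_m$ yields the displayed bound.

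\textbf{Sufficient conditions for the operator-norm event.} For full joint-cell bases, $G_t$ is diagonal with entries equal to cell masses, which are coordinates of $q_t^{(0)}$. Hence $\norm{\widetilde G_t-G_t}_{\mathrm{op}}\le\delta_m$, and it suffices that $\lambda\ge 2\delta_m$. For dense released Gram blocks, the noise matrix is a symmetric Gaussian matrix, and the standard bound
\[
\norm{E}_{\mathrm{op}}\lesssim\sigma\bigl(\sqrt p+\sqrt{\log(1/\gamma)}\bigr)
\]
with probability at least $1-\gamma$ gives the stated choice of $\lambda$.

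The main obstacle is Step 2: the theorem's PSD hypothesis must be replaced by the weaker Weyl-type conditioning. This requires checking that the constant $C_\phi$ in the proof of Theorem~\ref{thm:lipschitz} depends only on $\norm{(G_t+\lambda I)^{-1}}_{\mathrm{op}}$ and not on positivity of $G_t$ itself.
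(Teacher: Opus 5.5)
Your proposal is correct and follows the paper's own proof. Both use a Gaussian union bound for the sup-norm event, the Weyl-type bound $\lambda_{\min}(\widetilde G_t+\lambda I)\ge\kappa+\lambda/2$, the diagonal-versus-dense sufficient conditions for the operator-norm event, and then the ridge perturbation argument; the obstacle you flag is resolved as you expect, because Lemma~\ref{lem:ridge-full} only uses $G'_t+\lambda I\succeq(\kappa+\lambda)I/2$ on the perturbed input. Your Step~3 goes beyond the paper, which simply inherits the radius $R_\beta$ as a hypothesis; since your derived radius scales like $1/(\kappa+\lambda)$, it is cleaner to transfer the clean radius through the same resolvent identity, $\norm{\widehat\beta_{t,\lambda}(\widetilde q)}_2\le 2R_\beta+2\sqrt p\,\delta_m/(\kappa+\lambda)$, which is $O(R_\beta+\sqrt p)$ once $\lambda\gtrsim\delta_m$ and keeps $C_\phi$ free of $\kappa+\lambda$.
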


\codexedit{Here and below, SNR means the coordinatewise signal-to-noise ratio $|\widetilde q_a|/\sigma_a$.}
\codexedit{Appendix~\ref{app:conditioning-ridge} gives the conditioning intuition and explains how ridge complements the SNR calibration filter.}
\end{credblock}

\codexedit{Theorem~\ref{thm:lipschitz} and Proposition~\ref{prop:aipw-stability} are two examples of the same moment-stability template: ridge covers direct regression plug-ins that may require Gram moments, while IPW/AIPW covers cell-based causal scores computable from the base treatment and outcome moments.}

\subsection{Accuracy of the DP Moment Release}

\codexedit{The previous subsection is deterministic once a moment-error bound is available.}
\codexedit{The next result supplies that bound for the Gaussian mechanism: it converts the sensitivity of the released workload into the coordinate error used by the stability examples and the decomposition below.}

\begin{theorem}[DP moment accuracy]
\label{thm:dp_moment}
\codexedit{Let $q(D)=n^{-1}\sum_i h(W_i)\in\R^m$ be the released workload answer, and suppose $\norm{h(W)}_2\le H_q$ for every record.}
\codexedit{Measuring $q(D)$ with the Gaussian mechanism under $(\eps,\del)$-DP yields, with probability at least $1-\gamma$,}
{\small
\[
\codexedit{\norm{\widetilde{q}-q(D)}_\infty
=
O\!\left(\frac{H_q\sqrt{\log(m/\gamma)\log(1/\del)}}{n\eps}\right).}
\]}
\codexedit{For the base causal workload, $H_q=\phi_{\max}\sqrt{1+B^2}$.}
\codexedit{If optional Gram blocks are appended for a direct regression plug-in, one may instead take $H_q=\{\phi_{\max}^2(1+B^2)+\phi_{\max}^4\}^{1/2}$.}
\end{theorem}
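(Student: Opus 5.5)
The proof has three steps: bound the $\ell_2$ sensitivity of $q(D)$ by $H_q$, plug that into the Gaussian mechanism, and control the maximum of $m$ Gaussian coordinates by a union bound.

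\textbf{Sensitivity.} Let $D$ and $D'$ differ in one record, say $W_i$ replaced by $W_i'$. Then
\[
q(D)-q(D')=n^{-1}\{h(W_i)-h(W_i')\}.
\]
The triangle inequality gives $\Delta_2\le 2H_q/n$. The factor $2$ is absorbed into the $O(\cdot)$.

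\textbf{Noise scale.} The Gaussian mechanism with $\sigma=\Delta_2\sqrt{2\log(1.25/\del)}/\eps$ is $(\eps,\del)$-DP \citep{dwork2014foundations}, since $\eps<1$ is the standard regime. The noise scale therefore satisfies
\[
\sigma\le \frac{2\sqrt2\,H_q\sqrt{\log(1.25/\del)}}{n\eps}.
\]
We use $\log(1.25/\del)\lesssim\log(1/\del)$ for $\del\le 1/2$, say.

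\textbf{Tail bound.} Each coordinate of $Z$ is $\N(0,\sigma^2)$, so $\Pr(|Z_a|>t)\le 2e^{-t^2/(2\sigma^2)}$. A union bound over the $m$ coordinates gives
\[
\Pr\bigl(\norm{Z}_\infty>\sigma\sqrt{2\log(2m/\gamma)}\bigr)\le\gamma.
\]
Independence of the coordinates is not needed. Since $\log(2m/\gamma)\lesssim\log(m/\gamma)$ for $m\ge1$ and $\gamma<1$, combining with the bound on $\sigma$ yields the stated rate. Because $\widetilde q-q(D)=Z$ exactly, no further terms appear.

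\textbf{Values of $H_q$.} For the base workload, record $i$ contributes $\ind\{T_i=t\}\phi(X_i)$ and $\ind\{T_i=t\}Y_i\phi(X_i)$ for a single arm $t$, with the other arm's blocks equal to zero. Hence
\[
\norm{h}_2^2=\norm{\phi}_2^2(1+Y^2)\le\phi_{\max}^2(1+B^2).
\]
Appending the Gram block adds $\norm{\ind\{T=t\}\operatorname{vec}(\phi\phi^\top)}_2^2=\norm{\phi}_2^4\le\phi_{\max}^4$, which gives the second expression.

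\textbf{Adaptive variant.} If the adaptive variant is meant, the same argument applies to the measured set $S$, with $m$ replaced by $|S|\le m$.

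\textbf{Main obstacle.} There is no deep obstacle. The only care needed is constant bookkeeping: the replacement factor $2$, the mismatch between $\log(1.25/\del)$ and $\log(1/\del)$, and the arm indicator that makes the per-record $h$ nonzero in only one arm's blocks.
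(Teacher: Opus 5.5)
Your proposal is correct and matches the paper's proof step for step: the replacement sensitivity $\Delta_2\le 2H_q/n$, the standard Gaussian-mechanism scale, a union bound over the $m$ coordinates, and the same per-record computation of $H_q$ using that only one arm's blocks are active. Your two-sided threshold $\sigma\sqrt{2\log(2m/\gamma)}$ is slightly more careful than the paper's $\sigma\sqrt{2\log(m/\gamma)}$, but absorbing the extra factor $2$ into $\log(m/\gamma)$ needs $m\ge 2$ or $\gamma$ bounded away from $1$ (e.g.\ $\gamma\le 1/2$), not merely $\gamma<1$.
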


\emph{Remark on norms.}
\cred{Theorems~\ref{thm:lipschitz} and~\ref{thm:dp_moment} both use $\norm{\phi(X)}_2\le \phi_{\max}$; indicator features additionally satisfy $\norm{\phi(X)}_\infty\le 1$, and for $p$-dimensional indicator features both hold simultaneously with $\phi_{\max}\le\sqrt{p}$.}

\subsection{Overall ATE Error Decomposition}

\codexedit{This subsection does not introduce a third route; it composes the direct-route moment-stability bound with the two extra errors introduced when the released workload is converted into synthetic microdata.}
\codexedit{At the population-reconstruction level, the controlled estimand is $\widehat\tau_{\mathrm{est}}\{q(P^{\mathrm{syn}})\}$.}
\codexedit{When this quantity is estimated from $n_{\mathrm{syn}}$ sampled synthetic records, write the result as $\widehat{\tau}^{\mathrm{syn}}$ and treat finite synthetic sample size as an additional sampling term.}
\codexedit{The experiments instantiate this release--reconstruct--analyze path with DR/AIPW on synthetic microdata; the theorem formalizes the workload-matching error budget that this pipeline is designed to control.}

\begin{credblock}
Because the iterative max-entropy solver satisfies the noisy constraints only approximately, the bound carries an explicit calibration term.
\codexedit{For a measured or retained coordinate set $S$ (with $S=[m]$ for the fixed full workload when no thresholding is used), let $\Pi_S$ denote coordinate projection: $\Pi_S v=(v_a:a\in S)$.}
\codexedit{In implementation, $\Pi_S$ is the Boolean mask over retained moment coordinates.}
\codexedit{Define}
\[
\codexedit{\mathrm{CalGap}(S,\sigma) := L_{\mathrm{est}}\,\bigl\lVert \Pi_S\{\widetilde q - q(P^{\mathrm{syn}})\}\bigr\rVert_2,}
\]
where $\codexedit{L_{\mathrm{est}}}$ is the appropriate estimator Lipschitz constant ($\codexedit{L_\phi}$ for the ridge plug-in of Theorem~\ref{thm:lipschitz}, or the IPW/AIPW constants in Proposition~\ref{prop:aipw-stability} and Appendix~\ref{app:ipw-aipw-stability}) and $q(P^{\mathrm{syn}})$ the workload moments of the solver output.

\begin{theorem}[ATE error decomposition]
\label{thm:ate_bound}
\codexedit{Under Assumption~\ref{assump:causal}, the boundedness conditions above, and an estimator Lipschitz condition $|\widehat\tau_{\mathrm{est}}(q)-\widehat\tau_{\mathrm{est}}(q')|\le L_{\mathrm{est}}\norm{q-q'}_\infty$ for the retained workload coordinates, let $\widehat\tau^{\mathrm{syn}}$ be the empirical estimate of $\widehat\tau_{\mathrm{est}}\{q(P^{\mathrm{syn}})\}$ formed from $n_{\mathrm{syn}}$ i.i.d.\ draws from the solver output $P^{\mathrm{syn}}$.}
\codexedit{This condition holds for the ridge plug-in by Theorem~\ref{thm:lipschitz} and for partition-feature clipped IPW/AIPW by Proposition~\ref{prop:aipw-stability}. Then}
{\small
\begin{align} \nonumber
|\widehat{\tau}^{\mathrm{syn}}-\tau|
&\le
\underbrace{\codexedit{O_p\!\bigl(L_{\mathrm{est}}n^{-1/2}\bigr)}}_{\text{sampling}} 
+
\underbrace{\codexedit{O_p\!\!\left(\frac{L_{\mathrm{est}}H_q\sqrt{\log m\,\log(\frac{1}{\del})}}{n\eps}\right)}}_{\text{privacy}}\\ \nonumber
&+
\underbrace{\mathrm{Approx}(\phi;S)}_{\text{workload}}
\;+\;
\underbrace{O_p\!\bigl(n_{\mathrm{syn}}^{-1/2}\bigr)}_{\text{Monte Carlo}}
+
\underbrace{\mathrm{CalGap}(S,\sigma)}_{\text{calibration}},
\end{align}}
where $\mathrm{Approx}(\phi;S)$ is the approximation error of representing $m_t(x)$ and $e(x)$ using the retained workload coordinates.
When $P^{\mathrm{syn}}$ satisfies the constraints exactly, the fifth term vanishes and the bound reduces to the four-term decomposition.
\end{theorem}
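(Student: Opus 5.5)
The proof is a telescoping triangle inequality along the release--reconstruct--analyze path. We insert the intermediate quantities $\widehat\tau_{\mathrm{est}}\{q(P^{\mathrm{syn}})\}$, $\widehat\tau_{\mathrm{est}}(\widetilde q)$, $\widehat\tau_{\mathrm{est}}\{q(D)\}$, $\widehat\tau_{\mathrm{est}}(q^\star)$, and the projected target $\tau_\phi:=\widehat\tau_{\mathrm{est}}(\Pi_S q^\star)$. This gives
\begin{align*}
|\widehat\tau^{\mathrm{syn}}-\tau|
&\le |\widehat\tau^{\mathrm{syn}}-\widehat\tau_{\mathrm{est}}\{q(P^{\mathrm{syn}})\}|
+|\widehat\tau_{\mathrm{est}}\{q(P^{\mathrm{syn}})\}-\widehat\tau_{\mathrm{est}}(\widetilde q)|\\
&\quad+|\widehat\tau_{\mathrm{est}}(\widetilde q)-\widehat\tau_{\mathrm{est}}\{q(D)\}|
+|\widehat\tau_{\mathrm{est}}\{q(D)\}-\widehat\tau_{\mathrm{est}}(q^\star)|
+|\widehat\tau_{\mathrm{est}}(q^\star)-\tau|.
\end{align*}
All estimator evaluations use only the retained coordinates $S$. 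Each of the five pieces is then matched to one labelled term of the statement, in the order Monte Carlo, calibration, privacy, sampling, workload.

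\emph{Deterministic pieces.} The calibration piece follows from the assumed $L_{\mathrm{est}}$-stability together with $\norm{v}_\infty\le\norm{v}_2$. Applied to $\Pi_S\{\widetilde q-q(P^{\mathrm{syn}})\}$, it gives exactly $\mathrm{CalGap}(S,\sigma)$, and this is zero when the constraints hold exactly. The last piece $|\widehat\tau_{\mathrm{est}}(q^\star)-\tau|$ is by definition the workload-approximation error $\mathrm{Approx}(\phi;S)$. The argument needs Assumption~\ref{assump:causal} here: it gives $\tau=\E[m_1(X)-m_0(X)]$, so the only discrepancy is from representing $m_t$ and $e$ in the retained basis. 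I would state this as a definition, adding a remark that for partition cells with clipping it is the usual bias of cell-level AIPW.

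\emph{Stochastic pieces.} For the privacy piece, apply stability with $q=q(D)$ and $q'=\widetilde q$, and then Theorem~\ref{thm:dp_moment}. This gives $L_{\mathrm{est}}H_q\sqrt{\log(m/\gamma)\log(1/\del)}/(n\eps)$ with probability $1-\gamma$, which is the stated $O_p$ once $\gamma$ is fixed. For the sampling piece, note that $q(D)-q^\star$ is an average of i.i.d.\ bounded vectors with $\norm{h(W)}_2\le H_q$. A coordinatewise Hoeffding bound plus a union bound over $m$ coordinates gives $\norm{q(D)-q^\star}_\infty=O_p(n^{-1/2})$, with $\sqrt{\log m}$ and $H_q$ absorbed into the constant, so stability yields $O_p(L_{\mathrm{est}}n^{-1/2})$. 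The Monte Carlo piece is handled the same way, conditionally on $P^{\mathrm{syn}}$. The synthetic empirical moments $q(D^{\mathrm{syn}})$ concentrate around $q(P^{\mathrm{syn}})$ at rate $n_{\mathrm{syn}}^{-1/2}$, and stability transfers this to the estimator. The statement writes this term without $L_{\mathrm{est}}$, so I would treat $L_{\mathrm{est}}$ as absorbed into the $O_p$ constant, which is legitimate since it is free of $n_{\mathrm{syn}}$. Because the conditional bound does not depend on the realized $P^{\mathrm{syn}}$, it also holds unconditionally.

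\emph{Main obstacle.} The key difficulty is that stability is only assumed for \emph{admissible} $q,q'$, that is, for moment vectors where the Gram matrices are well conditioned, cell masses are lower-bounded, and $\norm{\widehat\beta}\le R_\beta$. Every one of $q^\star$, $q(D)$, $\widetilde q$, $q(P^{\mathrm{syn}})$ and $q(D^{\mathrm{syn}})$ must therefore lie in the admissible domain with high probability. I would argue this first. Take $q^\star$ interior, with a margin given by positivity $\eta$ and the minimum cell mass. The $\ell_\infty$ deviations bounded above shrink to zero, so all these points lie in the domain on an event of probability $1-o(1)$; for ridge this is exactly the event of Corollary~\ref{cor:noise_ridge}. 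The calibration point $q(P^{\mathrm{syn}})$ needs care. I would use the bound $\norm{q(P^{\mathrm{syn}})-q^\star}_\infty\le\norm{\Pi_S\{\widetilde q-q(P^{\mathrm{syn}})\}}_2+\norm{\widetilde q-q^\star}_\infty$, and add the mild assumption that the calibration gap is below the admissibility margin. I would state this explicitly as a condition of the theorem rather than prove it, since the solver's accuracy is not otherwise controlled. Finally, a union bound over the five events yields the decomposition.
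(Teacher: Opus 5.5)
Your proposal matches the paper's proof: it uses the same telescoping triangle inequality through $q^\star$, $q(D)$, $\widetilde q$, and $q(P^{\mathrm{syn}})$, bounding the five pieces with Lipschitz stability, Theorem~\ref{thm:dp_moment}, concentration for the sampling and Monte Carlo terms, and $\ell_\infty\le\ell_2$ to obtain $\mathrm{CalGap}$. Your admissibility argument and the extra margin condition on the calibration residual go beyond the paper, whose proof applies the Lipschitz hypothesis exactly as the theorem states it, unrestricted on the retained coordinates; under that reading the extra condition is unnecessary, but it is the right repair if stability holds only on an admissible domain.
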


\begin{remark}[SNR thresholding controls the calibration gap]\label{rem:calibration_gap}
\codexedit{Retaining only high-signal coordinates $S_\tau=\{a: |\widetilde q_a|/\sigma_a\ge\tau_{\mathrm{SNR}}\}$ and running the solver to noise-level tolerance $|\widetilde q_a-q_a(P^{\mathrm{syn}})|\le c_{\mathrm{cal}}\sigma_a$ on $S_\tau$ yields}
$\codexedit{\mathrm{CalGap}(S_\tau,\sigma)\le L_{\mathrm{est}} c_{\mathrm{cal}}\bar\sigma\sqrt{m_{\mathrm{kept}}}}$,
where $\codexedit{m_{\mathrm{kept}}}=|S_\tau|$ and $\bar\sigma$ bounds the per-coordinate noise scale (Corollary~\ref{cor:snr_calgap}, Appendix~\ref{app:proof-ate}).
Discarded moments are not free: their effect moves into $\mathrm{Approx}(\phi;S_\tau)$, so thresholding trades a smaller, controllable calibration gap against possible approximation bias.
Empirically, which side of the tradeoff binds is data-dependent: in our workload-dimension ablation on IHDP (\cred{Figure~\ref{fig:ablation_dim}}), the variance side dominates---\cred{without thresholding, richer workloads reduce RMSE throughout the tested range, and thresholding lowers bias but costs more variance than it saves}---while wider workloads or stricter budgets push toward the bias-dominated regime the threshold guards against.
Because $\mathrm{CalGap}$ is computable from the release (the solver's residual is recorded per run), the tradeoff can be monitored rather than assumed.
\end{remark}
\end{credblock}

\begin{corollary}[Design rule for causal utility]
\label{cor:design}
To keep DP distortion below sampling noise, it suffices (up to logs) that
{\small
\begin{align*}
\codexedit{\frac{H_q\sqrt{\log m}}{n\eps} \ll \frac{1}{\sqrt{n}} 
\Longleftrightarrow \eps \gg H_q\sqrt{\frac{\log m}{n}}.}
\end{align*}}
This recovers a natural ``$\eps$ vs.\ $n$'' threshold analogous to parametric DP inference \citep{smith2011privacy}, with the outcome bound $B$ controlling the difficulty.
\end{corollary}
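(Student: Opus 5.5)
The corollary follows by comparing the sampling and privacy terms of Theorem~\ref{thm:ate_bound}. The other three terms do not depend on $\eps$, so they play no role in whether the DP distortion dominates. The plan is therefore to isolate the two leading terms, which both carry the factor $L_{\mathrm{est}}$. The sampling term is $O_p(L_{\mathrm{est}}n^{-1/2})$. The privacy term is $O_p\bigl(L_{\mathrm{est}}H_q\sqrt{\log m\,\log(1/\del)}/(n\eps)\bigr)$, inherited from Theorem~\ref{thm:dp_moment} with $\gamma$ fixed. Because $L_{\mathrm{est}}$ multiplies both terms, it cancels in the comparison. This is why the design rule is free of the estimator's stability constant.

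The key step is to write the ratio of the privacy term to the sampling term:
\[
\frac{L_{\mathrm{est}}H_q\sqrt{\log m\,\log(1/\del)}/(n\eps)}{L_{\mathrm{est}}/\sqrt n}
=\frac{H_q\sqrt{\log m\,\log(1/\del)}}{\sqrt n\,\eps}.
\]
Absorbing $\sqrt{\log(1/\del)}$ into the ``up to logs'' qualifier, this ratio is $o(1)$ exactly when $H_q\sqrt{\log m}/(n\eps)\ll n^{-1/2}$. Multiplying through by $\sqrt n\,\eps/(\text{positive quantities})$ gives the equivalent form $\eps\gg H_q\sqrt{\log m/n}$. The two displayed conditions are algebraically identical, so the ``$\Longleftrightarrow$'' is immediate. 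Sufficiency also follows directly: under the condition, the privacy term is $o_p(L_{\mathrm{est}}n^{-1/2})$, so it is dominated by the sampling term.

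To interpret the role of $B$, I will substitute $H_q=\phi_{\max}\sqrt{1+B^2}$ for the base workload, from Theorem~\ref{thm:dp_moment}. The threshold then scales as $\phi_{\max}\sqrt{1+B^2}\sqrt{\log m/n}$, so the clipping bound $B$ enters linearly for large $B$. The analogy with \citet{smith2011privacy} is only interpretive and needs no proof.

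The main subtlety is bookkeeping about what ``keep DP distortion below sampling noise'' means. The statement is an order-of-magnitude sufficiency claim, not an inequality with explicit constants. I will therefore state it as: if $\eps\ge C H_q\sqrt{\log m\,\log(1/\del)/n}$ with $C\to\infty$, then the privacy term is $o_p$ of the sampling term. One must also check that $L_{\mathrm{est}}$ is the same constant in both terms. This holds because both terms arise from applying the same stability bound, to $q(D)-q^\star$ and to $\widetilde q-q(D)$ respectively. With that noted, nothing beyond the algebra above is needed.
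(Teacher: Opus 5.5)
Your proposal is correct and matches the paper's implicit argument: the corollary is read off Theorem~\ref{thm:ate_bound} by requiring the privacy term to be of smaller order than the sampling term, with the common factor $L_{\mathrm{est}}$ cancelling and $\sqrt{\log(1/\del)}$ absorbed into ``up to logs''. One correction to your framing: the calibration term $\mathrm{CalGap}(S,\sigma)$ does depend on $\eps$ through $\sigma$ (under the noise-tolerance rule it scales like $\sigma\sqrt{m_{\mathrm{kept}}}$), so the rule controls only the privacy term, not every $\eps$-dependent contribution to the error.
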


\section{Uncertainty Quantification from DP Synthetic Data}\label{sec:uq}

{\color{green}Recent work shows} that synthetic-data inference must account for both sampling variability and DP noise \citep{raisa2023noiseaware,perez2024mwutest}.
We propose a causal version of NA+MI:
\begin{enumerate}
  \item Treat the measured DP moments $\codexedit{\widetilde{q}_S}$ as noisy observations of latent true moments $\codexedit{q_S}$, with $\codexedit{S=[m]}$ for the fixed full workload and adaptive $S$ for \textsc{Causal-AIM}.
  \item \cred{Sample $M$ posterior draws $\codexedit{q_S^{(\ell)}\sim\N(\widetilde q_S,\Sigma_S)}$, $\codexedit{\ell}=1,\dots,M$: the asymptotic Gaussian posterior for $q_S$ given $\widetilde q_S$ under a flat prior, with covariance given by the known Gaussian-mechanism noise on the measured coordinates.}
  \item For each draw, construct $\codexedit{P^{\mathrm{syn},(\ell)}}$ via \eqref{eq:iprojection} on the measured coordinates and sample a synthetic dataset $\codexedit{D^{\mathrm{syn},(\ell)}}$.
  \item Compute $\codexedit{\widehat{\tau}^{(\ell)}}$ on each synthetic dataset and combine using Rubin's MI rules \citep{rubin1987mi}.
\end{enumerate}
\cred{Appendix~\ref{app:bias-aware} gives a bias-aware variant that {\color{green}inflates} the MI variance by an estimated workload-approximation term, with a coverage guarantee; Appendix~\ref{app:na-mi} states the full procedure as pseudocode.}
The result is a confidence interval that widens as $\eps$ decreases (stronger privacy), similar in spirit to \citet{raisa2023noiseaware}.

\section{Experiments}\label{sec:experiments}

\codexedit{We evaluate three questions: whether causal workloads improve ATE accuracy over generic workloads, whether NA+MI yields calibrated confidence intervals, and when adaptive \textsc{Causal-AIM} improves over a fixed workload.}

\subsection{Setup}

{\ifcredits\color{green!40!black}\fi
\noindent\textbf{Benchmarks.}
We use five benchmark settings where the target ATE is known, so both point error and coverage can be evaluated directly.
\begin{itemize}
\setlength{\itemsep}{1pt}
\setlength{\topsep}{1pt}
\setlength{\parsep}{0pt}
  \item \codexedit{\textbf{IHDP} \citep{hill2011bayesian}: $n=747$ children from the Infant Health and Development Program benchmark, with 25 child and family covariates. Treatment indicates the program intervention, the observed outcome is a continuous cognitive score, and the standard semi-simulated release supplies potential-outcome means for the ground-truth ATE.}
  \item \codexedit{\textbf{Twins} \citep{louizos2017causal}: $n=11{,}400$ same-sex twin-pair records with 30 birth and maternal covariates. The benchmark constructs a confounded treatment indicator by sampling heavier-versus-lighter twin status from a covariate-dependent propensity; the outcome is one-year mortality, with both potential mortality outcomes observed within each pair.}
  \item \codexedit{\textbf{ACIC 2016} \citep{dorie2019acic}: $n=4{,}802$ units from the competition covariate matrix, with anonymized real covariates and three categorical fields one-hot encoded into 82 numeric columns. We use DGP~7, which simulates confounded treatment assignment and continuous potential outcomes with heterogeneous treatment effects; the ATE is computed from the exported potential outcomes.}
  \item \codexedit{\textbf{LaLonde/NSW} \citep{lalonde1986evaluating}: the National Supported Work experimental sample with $n=445$ subjects (185 treated and 260 controls). Covariates include age, education, race, marital status, and pre-program earnings; the outcome is 1978 earnings, and the experimental difference-in-means is the reference effect.}
  \item \codexedit{\textbf{ACS}: California 2018 American Community Survey person microdata from \texttt{folktables} \citep{ding2021retiring}. We use 20 demographic covariates and simulate confounded treatment and continuous potential outcomes at $n\in\{1000,5000,20000\}$, so the ATE is known by construction.}
\end{itemize}

\noindent\textbf{Baselines.}
All private methods release synthetic data and then estimate the ATE on synthetic records using doubly robust analysis.
\begin{itemize}
\setlength{\itemsep}{1pt}
\setlength{\topsep}{1pt}
\setlength{\parsep}{0pt}
  \item \textbf{Non-private DR}: an oracle doubly robust estimator with cross-fitting on the confidential data.
  \item \textbf{MST + naive DR}: generic MST-style synthesis \citep{mckenna2021nist}, followed by standard DR inference that ignores DP noise.
  \item \textbf{AIM + naive DR}: generic AIM synthesis \citep{mckenna2022aim}, followed by the same naive DR analysis.
  \item \textbf{Causal workload + naive DR}: our fixed causal workload mechanism, analyzed without the NA+MI uncertainty correction.
  \item \textbf{Causal workload + NA+MI}: the fixed causal workload mechanism with noise-aware multiple imputation.
  \item \textbf{\textsc{Causal-AIM} + NA+MI}: adaptive causal workload selection with the same NA+MI analysis.
\end{itemize}

\noindent\textbf{Defaults and metrics.}
Default runs use $L=5$ quantile bins, $M=20$ MI draws, $n_{\mathrm{syn}}=10n$, clipped standardized outcomes, DR analysis on synthetic data, and no SNR thresholding or calibration ridge in the main pipeline\cred{; we keep $n_{\mathrm{syn}}=10n$ for comparability across methods, although the ablation shows $n_{\mathrm{syn}}=n$ already suffices}.
We report ATE RMSE and empirical 95\% coverage over $\eps\in\{0.5,1,2,5\}$ with $\delta=1/n^2$, and run ablations over workload dimension, MI draws, synthetic sample size, overlap strength, and adaptive rounds $K$.
Appendix~\ref{app:exp-details} gives feature-construction details, metric definitions, and ablation grids; additional figures appear in Appendix~\ref{app:figures}.
}

\subsection{Results}

\codexedit{Table~\ref{tab:main-results} summarizes all methods at $\eps{=}1$, Figures~\ref{fig:exp1_rmse} and~\ref{fig:exp2_coverage} show RMSE and coverage across privacy budgets, and Figure~\ref{fig:exp3_adaptive} shows the adaptive fixed-workload comparison on ACIC.}

\begin{table*}[t]
\caption{ATE RMSE and 95\% CI Coverage at $\eps=1.0$, $\del=1/n^2$. Each cell is RMSE / Coverage; among private methods, bold marks the best value and underlining marks the second-best value in each dataset column.}
\centering
\begin{tabular}{lcccc}
\toprule
Method & IHDP & Twins & ACIC & LaLonde \\
\midrule
Non-private DR & 0.259/0.940 & 0.007/1.000 & 0.236/1.000 & 47.593/1.000 \\
MST + naive & \underline{20.100}/0.014 & \textbf{0.023}/0.352 & \textbf{0.583}/0.236 & \textbf{3290.027}/0.118 \\
AIM + naive & 46.863/0.002 & 0.168/0.064 & 6.206/0.060 & 21546.320/0.008 \\
Causal + naive & 43.942/0.024 & 0.291/0.078 & 16.338/0.016 & 14707.613/0.048 \\
Causal + NA+MI & \textbf{15.074}/\textbf{0.998} & 0.219/\textbf{1.000} & 8.034/\textbf{1.000} & \underline{6163.055}/\textbf{1.000} \\
Causal-AIM + NA+MI & 47.633/\underline{0.346} & \underline{0.150}/\underline{0.956} & \underline{2.981}/\underline{0.718} & 18147.736/\underline{0.678} \\
\bottomrule
\end{tabular}
\label{tab:main-results}
\end{table*}

\codexedit{\textbf{Generic workloads win on RMSE at loose budgets, but lose on coverage.}
MST + naive DR achieves the lowest private RMSE on all four benchmarks at $\eps\ge2$ and on three of four at $\eps=1$.
At strict budgets, causal workload + NA+MI becomes more competitive, matching or beating MST on two of four benchmarks at $\eps=0.5$ and on IHDP at $\eps=1$.
However, MST's point accuracy comes with invalid uncertainty quantification: at $\eps\le1$, all naive methods have coverage at most $35.2\%$ (Figure~\ref{fig:exp2_coverage}).}

\codexedit{\textbf{NA+MI restores calibrated inference.}
Causal workload + NA+MI is the only private method with near-nominal coverage at $\eps\le1$ on all four benchmarks ($99.8{-}100\%$; Table~\ref{tab:main-results}).
The price is wider intervals: NA+MI intervals are \cred{$34$--$300\times$ wider than naive intervals at strict budgets (Figure~\ref{fig:exp2_length})} because they account for DP measurement noise.
This is the intended behavior when privacy noise dominates the sampling error; the narrow naive intervals are overconfident rather than genuinely informative.}

\codexedit{\textbf{Adaptive workloads are operating-point choices.}
On ACIC at $\eps\ge1$, \textsc{Causal-AIM} substantially reduces RMSE relative to the fixed causal workload, especially at small $K$ (Figure~\ref{fig:exp3_adaptive}).
At $\eps=0.5$, additional adaptive rounds can hurt because each round consumes privacy budget that may exceed the gain from better feature targeting.
On IHDP, no tested $K$ recovers the fixed workload's calibration (Appendix~\ref{app:promoted}), so fixed causal workload + NA+MI remains the conservative default when coverage is the priority.}

\codexedit{\textbf{Robustness and reuse.}
On the ACS semi-synthetic study, NA+MI again provides the coverage advantage: at $(n=1000,\eps=0.5)$, Causal + NA+MI has coverage $1.00$ while MST + naive DR has coverage $0.07$, and NA+MI also wins RMSE in that cell ($0.88$ vs.\ $1.17$).
Ablations show that $M=20$ MI draws is conservative, synthetic sample sizes above $n_{\mathrm{syn}}=n$ add little, and SNR thresholding is mainly useful as a safeguard when moments approach the noise floor \cred{(Figures~\ref{fig:ablation_dim}, \ref{fig:ablation_mi}, \ref{fig:ablation_nsyn}, and~\ref{fig:ablation_overlap})}.
Finally, one DP synthetic release supports multiple estimands: ATE, ATT, and a subgroup effect all achieve nominal coverage in the reuse experiment (Appendix~\ref{app:promoted}).}

\codexedit{\textbf{Fidelity is not causal utility.}
Generic workloads dominate marginal-fidelity metrics such as \cred{average marginal total-variation distance (TVD)} in nearly every configuration \cred{(Figure~\ref{fig:fidelity_tradeoff})}, yet those metrics do not rank methods by ATE error or coverage.
This is the empirical counterpart of the workload argument: preserving generic low-dimensional marginals is not enough when the downstream target depends on treatment-arm outcome and balance moments.}

\codexedit{\textbf{Practical takeaway.}
Use Causal + NA+MI when valid uncertainty quantification is the primary goal, especially at strict privacy budgets.
Use \textsc{Causal-AIM} when point RMSE is prioritized and its operating point is favorable for the dataset.
Use generic workloads when the release is intended for broad marginal fidelity or exploratory analysis and calibrated causal intervals are not required.}


\begin{figure}[t]
\centering
\includegraphics[width=\columnwidth]{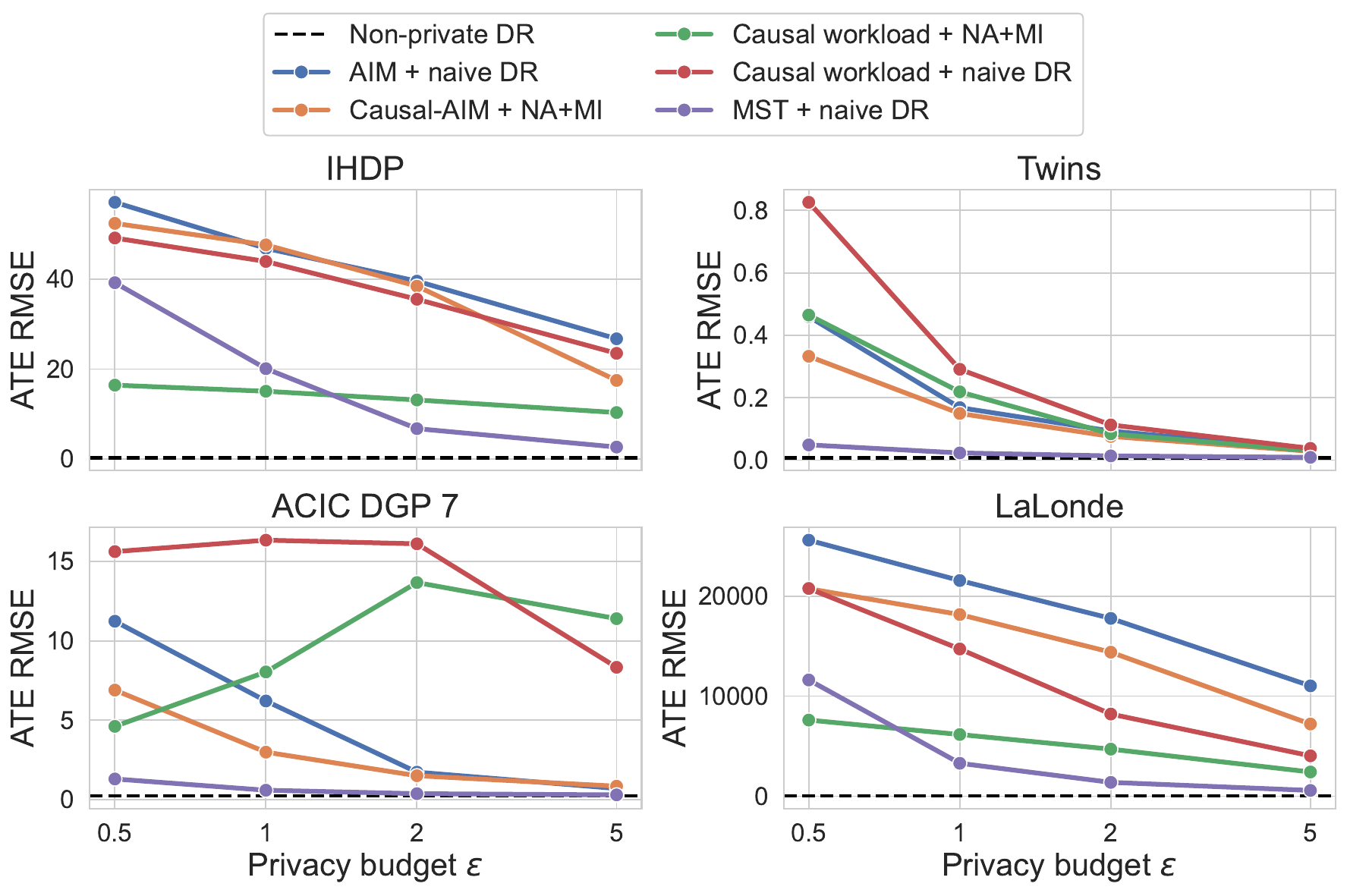}
\caption{\cred{ATE RMSE versus privacy budget ($\eps$) on four benchmarks ($n_{\mathrm{rep}}=500$, $\delta=1/n^2$). MST + naive DR leads on RMSE at $\eps\ge 2$; at $\eps=0.5$, Causal + NA+MI matches or beats it on half the benchmarks, and on IHDP at $\eps=1$. The value of causal workload methods lies chiefly in enabling calibrated coverage (Figure~\ref{fig:exp2_coverage}).}}
\label{fig:exp1_rmse}
\end{figure}

\begin{figure}[t]
\centering
\includegraphics[width=\columnwidth]{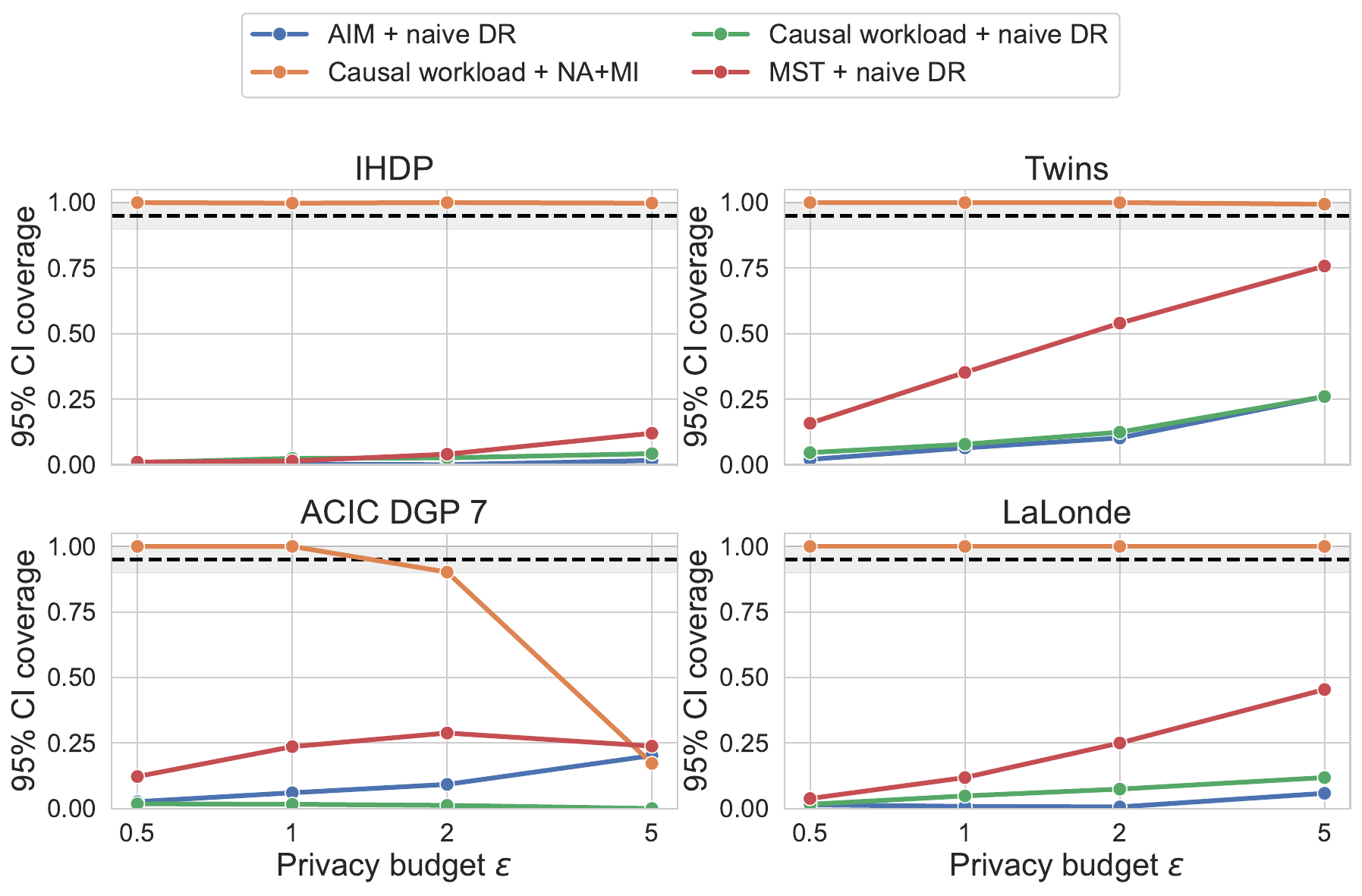}
\caption{\cred{Empirical 95\% CI coverage across privacy budgets and four datasets ($n_{\mathrm{rep}}=500$). The dashed line marks nominal 0.95. All naive methods are at or below $35.2\%$ at $\eps\le 1$. Causal workload + NA+MI achieves $99.8{-}100\%$ coverage at $\eps \le 1$ on all four benchmarks; Coverage can decline at high $\eps$ due to approximation bias (Appendix~\ref{app:discussion}).}}
\label{fig:exp2_coverage}
\end{figure}

\begin{figure}[t]
\centering
\includegraphics[width=\columnwidth]{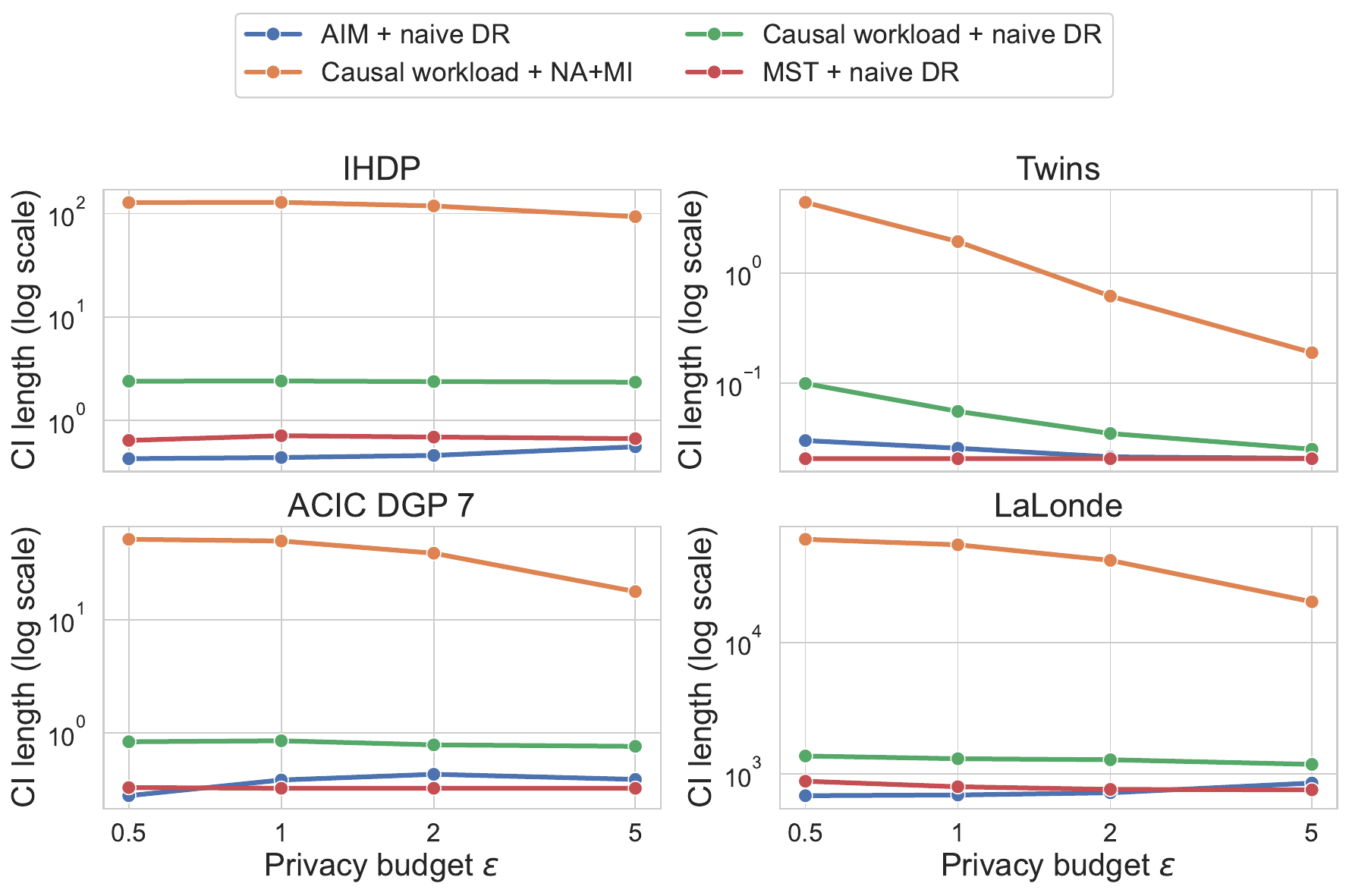}
\caption{Average 95\% CI length across methods, datasets, and privacy budgets ($n_{\mathrm{rep}}=500$).
\cred{Noise-aware MI intervals are $34$--$300\times$ wider than naive intervals at $\eps\le 1$ (depending on dataset and comparator) \dslp[red]{because they account for both DP noise and sampling variability}.
Naive intervals are narrow but far below nominal coverage (Figure~\ref{fig:exp2_coverage}), so their apparent precision is spurious.}}
\label{fig:exp2_length}
\end{figure}

\begin{figure}[!htbp]
\centering
\includegraphics[width=\columnwidth]{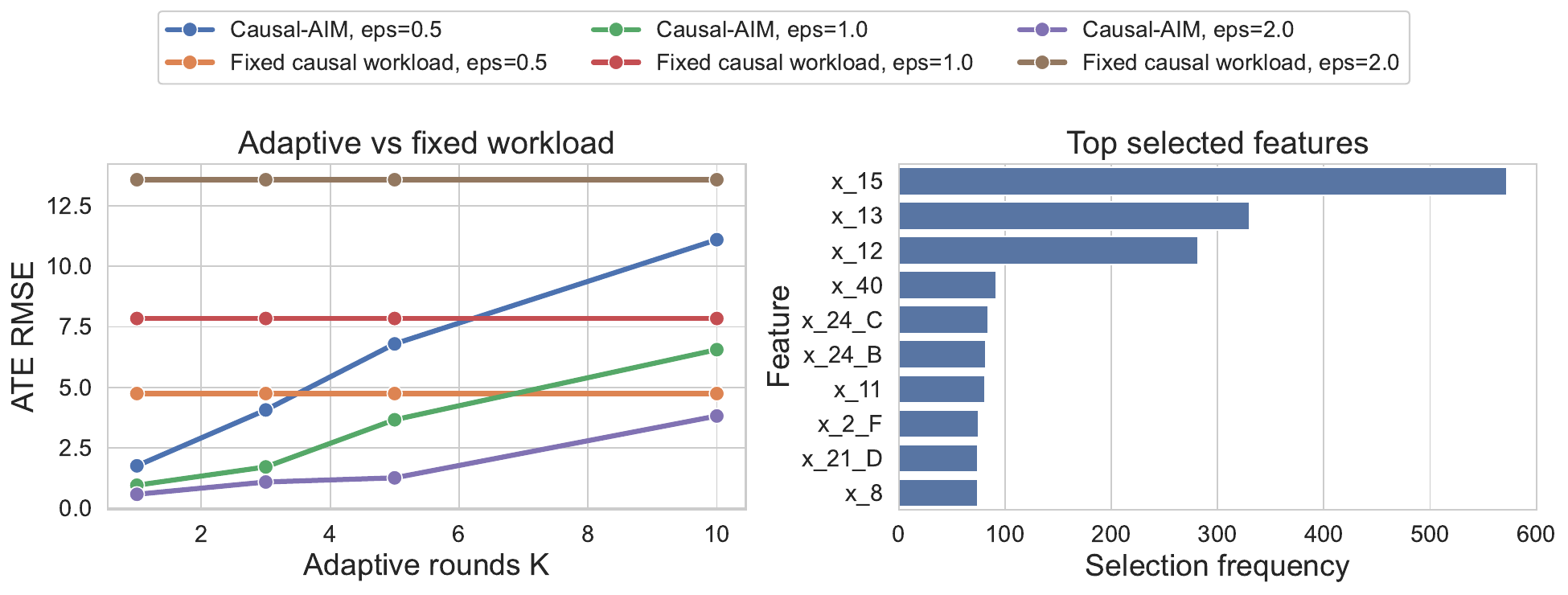}
\caption{ATE RMSE of \textsc{Causal-AIM} (adaptive) versus a fixed causal workload on ACIC DGP~7 ($n_{\mathrm{rep}}=100$).
\cred{The left panel overlays all privacy budgets: horizontal lines mark the fixed workload and curves show \textsc{Causal-AIM} as a function of adaptive rounds $K$; the right panel shows the most frequently selected features.}
\cred{At $\eps\ge 1$, \textsc{Causal-AIM} reduces RMSE by $78{-}96\%$ over the fixed workload at $K\le 3$ ($53{-}91\%$ at $K{=}5$).
At $\eps=0.5$, additional rounds degrade performance because each round consumes budget that exceeds the benefit of better feature targeting; few rounds ($K\le 3$) are recommended at low $\eps$.}}
\label{fig:exp3_adaptive}
\vspace{-.5em}
\end{figure}

\section{Conclusion}\label{sec:discussion}

\codexedit{This paper argues that DP synthetic data for causal inference should preserve the moments used by the causal estimand, not only generic distributional fidelity.}
\codexedit{The proposed causal workload releases treatment-arm feature masses and outcome-feature moments, reconstructs a maximum-entropy synthetic distribution, and supports noise-aware inference through NA+MI.}
\codexedit{The theory tracks how DP moment error, workload approximation, calibration mismatch, and synthetic Monte Carlo error enter ATE estimation; the experiments show the corresponding tradeoff: generic workloads often lead on RMSE at loose budgets, but their naive intervals undercover severely, while causal workload + NA+MI provides calibrated inference at strict privacy budgets.}
\codexedit{The distinction between the direct $q$-route and the synthetic-data route clarifies what the theory controls and what the release enables: one measured workload can be used directly by stable moment maps or converted into reusable synthetic records for ATE, ATT, subgroup effects, and model checks under the same privacy cost.}
\codexedit{The main limitations are the need to choose a feature map $\phi$, approximation bias at high $\eps$, wider intervals when DP noise dominates, and our focus on the AIM/Private-PGM reconstruction family; Appendix~\ref{app:discussion} discusses these limitations, the coverage--privacy diagnostic, and future directions such as continuous feature maps, richer subgroup/CATE workloads, and end-to-end DP-aware feature selection.}

\begin{acknowledgements}
\vspace{-.5em}
\cred{AA was partially supported by NIH grant R01MH139379 and by Patient-Centered Outcomes Research Institute (PCORI) award ME-2023C1-32148.}

\paragraph{Reproducibility.} All experiments, tables, and figures in this paper can be reproduced with the code, data-processing scripts, and notebooks available at \url{https://github.com/AsiaeeLab/causal-aim}.

\end{acknowledgements}

\bibliography{references}

@book{dwork2014foundations,
  title        = {The Algorithmic Foundations of Differential Privacy},
  author       = {Dwork, Cynthia and Roth, Aaron},
  year         = {2014},
  publisher    = {Now Publishers},
  series       = {Foundations and Trends in Theoretical Computer Science},
  volume       = {9},
  pages        = {211--407}
}

@article{mckenna2021nist,
  title        = {Winning the {NIST} Contest: A Scalable and General Approach to Differentially Private Synthetic Data},
  author       = {McKenna, Ryan and Miklau, Gerome and Sheldon, Daniel},
  journal      = {arXiv preprint arXiv:2108.04978},
  year         = {2021}
}

@article{tao2022benchmark,
  title        = {Benchmarking Differentially Private Synthetic Data Generation Algorithms},
  author       = {Tao, Yuchao and McKenna, Ryan and Hay, Michael and Machanavajjhala, Ashwin and Miklau, Gerome},
  journal      = {arXiv preprint arXiv:2112.09238},
  year         = {2021}
}

@article{mckenna2022aim,
  title        = {{AIM}: An Adaptive and Iterative Mechanism for Differentially Private Synthetic Data},
  author       = {McKenna, Ryan and Mullins, Brett and Sheldon, Daniel and Miklau, Gerome},
  journal      = {Proceedings of the VLDB Endowment},
  volume       = {15},
  number       = {11},
  pages        = {2599--2612},
  year         = {2022},
  doi          = {10.14778/3551793.3551817}
}

@inproceedings{donhauser2024privpgd,
  title        = {Privacy-Preserving Data Release Leveraging Optimal Transport and Particle Gradient Descent},
  author       = {Donhauser, Konstantin and Abad, Javier and Hulkund, Neha and Yang, Fanny},
  booktitle    = {Proceedings of the 41st International Conference on Machine Learning (ICML)},
  year         = {2024}
}

@inproceedings{raisa2023noiseaware,
  title        = {Noise-Aware Statistical Inference with Differentially Private Synthetic Data},
  author       = {R{\"a}is{\"a}, Ossi and J{\"a}lk{\"o}, Joonas and Kaski, Samuel and Honkela, Antti},
  booktitle    = {Proceedings of The 26th International Conference on Artificial Intelligence and Statistics (AISTATS)},
  series       = {Proceedings of Machine Learning Research},
  volume       = {206},
  pages        = {3620--3643},
  year         = {2023},
  publisher    = {PMLR}
}

@article{perez2024mwutest,
  title        = {Does Differentially Private Synthetic Data Lead to Synthetic Discoveries?},
  author       = {Perez, Ileana Montoya and Movahedi, Parisa and Nieminen, Valtteri and Airola, Antti and Pahikkala, Tapio},
  journal      = {Methods of Information in Medicine},
  volume       = {63},
  pages        = {35--51},
  year         = {2024},
  doi          = {10.1055/a-2385-1355}
}

@inproceedings{farzam2024causal,
  title        = {Causal Inference under Differential Privacy: Challenges and Mitigation Strategies},
  author       = {Farzam, Amirhossein and Sapiro, Guillermo},
  booktitle    = {Causal Representation Learning Workshop at NeurIPS},
  year         = {2024},
  note         = {CRL@NeurIPS 2024}
}

@article{ohnishi2024covbal,
  title        = {Differentially Private Covariate Balancing Causal Inference},
  author       = {Ohnishi, Yuki and Awan, Jordan},
  journal      = {arXiv preprint arXiv:2410.14789},
  year         = {2024}
}

@article{schroder2025private,
  title        = {{PrivATE}: Differentially Private Confidence Intervals for Average Treatment Effects},
  author       = {Schr{\"o}der, Maresa and Hartenstein, Justin and Feuerriegel, Stefan},
  journal      = {arXiv preprint arXiv:2505.21641},
  year         = {2025}
}

@inproceedings{dp-cate2025,
  title        = {Differentially Private Learners for Heterogeneous Treatment Effects},
  author       = {Schr{\"o}der, Maresa and Melnychuk, Valentyn and Feuerriegel, Stefan},
  booktitle    = {International Conference on Learning Representations (ICLR)},
  year         = {2025}
}

@article{chernozhukov2018dml,
  title        = {Double/Debiased Machine Learning for Treatment and Structural Parameters},
  author       = {Chernozhukov, Victor and Chetverikov, Denis and Demirer, Mert and Duflo, Esther and Hansen, Christian and Newey, Whitney and Robins, James},
  journal      = {The Econometrics Journal},
  volume       = {21},
  number       = {1},
  pages        = {C1--C68},
  year         = {2018}
}

@article{hill2011bayesian,
  title        = {Bayesian Nonparametric Modeling for Causal Inference},
  author       = {Hill, Jennifer L.},
  journal      = {Journal of Computational and Graphical Statistics},
  volume       = {20},
  number       = {1},
  pages        = {217--240},
  year         = {2011}
}

@inproceedings{mckenna2019privatepgm,
  title        = {Graphical-Model Based Estimation and Inference for Differential Privacy},
  author       = {McKenna, Ryan and Sheldon, Daniel and Miklau, Gerome},
  booktitle    = {Proceedings of the 36th International Conference on Machine Learning (ICML)},
  series       = {Proceedings of Machine Learning Research},
  volume       = {97},
  pages        = {4187--4196},
  year         = {2019},
  publisher    = {PMLR}
}

@book{rubin1987mi,
  title        = {Multiple Imputation for Nonresponse in Surveys},
  author       = {Rubin, Donald B.},
  year         = {1987},
  publisher    = {John Wiley \& Sons},
  address      = {New York}
}

@article{rosenbaum1983propensity,
  title        = {The Central Role of the Propensity Score in Observational Studies for Causal Effects},
  author       = {Rosenbaum, Paul R. and Rubin, Donald B.},
  journal      = {Biometrika},
  volume       = {70},
  number       = {1},
  pages        = {41--55},
  year         = {1983}
}

@article{robins1994aipw,
  title        = {Estimation of Regression Coefficients When Some Regressors Are Not Always Observed},
  author       = {Robins, James M. and Rotnitzky, Andrea and Zhao, Lue Ping},
  journal      = {Journal of the American Statistical Association},
  volume       = {89},
  number       = {427},
  pages        = {846--866},
  year         = {1994}
}

@article{dorie2019acic,
  title        = {Automated Versus Do-It-Yourself Methods for Causal Inference: Lessons Learned from a Data Analysis Competition},
  author       = {Dorie, Vincent and Hill, Jennifer and Shalit, Uri and Scott, Marc and Cervone, Dan},
  journal      = {Statistical Science},
  volume       = {34},
  number       = {1},
  pages        = {43--68},
  year         = {2019}
}

@article{lalonde1986evaluating,
  title        = {Evaluating the Econometric Evaluations of Training Programs with Experimental Data},
  author       = {LaLonde, Robert J.},
  journal      = {American Economic Review},
  volume       = {76},
  number       = {4},
  pages        = {604--620},
  year         = {1986}
}

@article{reiter2003inference,
  title        = {Inference for Partially Synthetic, Public Use Microdata Sets},
  author       = {Reiter, Jerome P.},
  journal      = {Survey Methodology},
  volume       = {29},
  number       = {2},
  pages        = {181--188},
  year         = {2003}
}

@article{raghunathan2003multiple,
  title        = {Multiple Imputation for Statistical Disclosure Limitation},
  author       = {Raghunathan, Trivellore E. and Reiter, Jerome P. and Rubin, Donald B.},
  journal      = {Journal of Official Statistics},
  volume       = {19},
  number       = {1},
  pages        = {1--16},
  year         = {2003}
}

@inproceedings{hardt2012mwem,
  title        = {A Simple and Practical Algorithm for Differentially Private Data Release},
  author       = {Hardt, Moritz and Ligett, Katrina and McSherry, Frank},
  booktitle    = {Advances in Neural Information Processing Systems (NeurIPS)},
  volume       = {25},
  year         = {2012}
}

@article{bowen2020comparative,
  title        = {Comparative Study of Differentially Private Data Synthesis Methods},
  author       = {Bowen, Claire McKay and Liu, Fang},
  journal      = {Statistical Science},
  volume       = {35},
  number       = {2},
  pages        = {280--307},
  year         = {2020}
}

@inproceedings{smith2011privacy,
  title        = {Privacy-Preserving Statistical Estimation with Optimal Convergence Rates},
  author       = {Smith, Adam},
  booktitle    = {Proceedings of the 43rd ACM Symposium on Theory of Computing (STOC)},
  pages        = {813--822},
  year         = {2011}
}

@inproceedings{niu2022dp_cate,
  title        = {Differentially Private Estimation of Heterogeneous Causal Effects},
  author       = {Niu, Fengshi and Nori, Harsha and Quistorff, Brian and Caruana, Rich and Ngwe, Donald and Kannan, Aadharsh},
  booktitle    = {Proceedings of the First Conference on Causal Learning and Reasoning (CLeaR)},
  series       = {Proceedings of Machine Learning Research},
  volume       = {177},
  pages        = {618--633},
  year         = {2022},
  publisher    = {PMLR}
}

@article{lebeda2025model_agnostic,
  title        = {Model Agnostic Differentially Private Causal Inference},
  author       = {Lebeda, Christian Janos and Even, Mathieu and Bellet, Aur{\'e}lien and Josse, Julie},
  journal      = {arXiv preprint arXiv:2505.19589},
  year         = {2025}
}

@inproceedings{louizos2017causal,
  title        = {Causal Effect Inference with Deep Latent-Variable Models},
  author       = {Louizos, Christos and Shalit, Uri and Mooij, Joris M. and Sontag, David and Zemel, Richard and Welling, Max},
  booktitle    = {Advances in Neural Information Processing Systems (NeurIPS)},
  volume       = {30},
  year         = {2017}
}

@inproceedings{ding2021retiring,
  title        = {Retiring Adult: New Datasets for Fair Machine Learning},
  author       = {Ding, Frances and Hardt, Moritz and Miller, John and Schmidt, Ludwig},
  booktitle    = {Advances in Neural Information Processing Systems (NeurIPS)},
  volume       = {34},
  pages        = {6478--6490},
  year         = {2021}
}

@article{watson2016approxmodels,
  title        = {Approximate Models and Robust Decisions},
  author       = {Watson, James and Holmes, Chris},
  journal      = {Statistical Science},
  volume       = {31},
  number       = {4},
  pages        = {465--489},
  year         = {2016},
  publisher    = {Institute of Mathematical Statistics},
  doi          = {10.1214/16-STS592},
}

@article{miller2018coarsened,
  title        = {Robust {B}ayesian Inference via Coarsening},
  author       = {Miller, Jeffrey W. and Dunson, David B.},
  journal      = {Journal of the American Statistical Association},
  volume       = {114},
  number       = {527},
  pages        = {1113--1125},
  year         = {2019},
  publisher    = {Taylor \& Francis},
  doi          = {10.1080/01621459.2018.1469995}
}

@article{grunwald2017safebayes,
  title        = {Inconsistency of {B}ayesian Inference for Misspecified Linear Models, and a Proposal for Repairing It},
  author       = {Gr{\"u}nwald, Peter and van Ommen, Thijs},
  journal      = {Bayesian Analysis},
  volume       = {12},
  number       = {4},
  pages        = {1069--1103},
  year         = {2017},
  doi          = {10.1214/17-BA1085},
}

@article{bissiri2016genbayes,
  title        = {A General Framework for Updating Belief Distributions},
  author       = {Bissiri, P. G. and Holmes, C. C. and Walker, S. G.},
  journal      = {Journal of the Royal Statistical Society: Series B (Statistical Methodology)},
  volume       = {78},
  number       = {5},
  pages        = {1103--1130},
  year         = {2016},
  doi          = {10.1111/rssb.12158}
}

@inproceedings{gillenwater2021dpquantiles,
  title        = {Differentially Private Quantiles},
  author       = {Gillenwater, Jennifer and Joseph, Matthew and Kulesza, Alex},
  booktitle    = {Proceedings of the 38th International Conference on Machine Learning (ICML)},
  series       = {Proceedings of Machine Learning Research},
  volume       = {139},
  pages        = {3713--3722},
  year         = {2021},
  publisher    = {PMLR},
}

@inproceedings{kaplan2022dpapproxquantiles,
  title        = {Differentially Private Approximate Quantiles},
  author       = {Kaplan, Haim and Schnapp, Shachar and Stemmer, Uri},
  booktitle    = {Proceedings of the 39th International Conference on Machine Learning (ICML)},
  series       = {Proceedings of Machine Learning Research},
  volume       = {162},
  pages        = {10751--10761},
  year         = {2022},
  publisher    = {PMLR},
}

@inproceedings{lei2011dpmest,
  title        = {Differentially Private {M}-Estimators},
  author       = {Lei, Jing},
  booktitle    = {Advances in Neural Information Processing Systems (NeurIPS)},
  volume       = {24},
  pages        = {361--369},
  year         = {2011},
}

@article{asiaee2026causalwrap,
  title         = {{CausalWrap}: Model-Agnostic Causal Constraint Wrappers for Tabular Synthetic Data},
  author        = {Asiaee, Amir and Liang, Zhuohui J. and Yan, Chao},
  journal       = {arXiv preprint arXiv:2603.02015},
  year          = {2026},
  eprint        = {2603.02015},
  archivePrefix = {arXiv},
  primaryClass  = {cs.LG}
}

@article{jackson2026reward,
  title   = {Reward-Guided Generation Improves the Scientific Utility of Synthetic Biomedical Data},
  author  = {Jackson, Nicholas J. and Espinosa-Dice, Natalia and Yan, Chao and Malin, Bradley A.},
  journal = {medRxiv},
  year    = {2026},
  doi     = {10.64898/2026.03.11.26348077},
  note    = {Preprint; PMCID: PMC13015641}
}

@article{liang_generating_2026,
  title   = {Generating Synthetic Multi-National Longitudinal Cohorts for Clinically Grounded {HIV} Research},
  author  = {Liang, Zhuohui J. and Li, Zhuohang and Jackson, Nicholas J. and Guo, Kevin H. and Caro-Vega, Yanink and Moreira, Ronaldo I. and Paredes, Fabio and Bernadin, Jordany and Varela, Diana and Cesar, Carina and Blasimme, Alessandro and Perkins, Jessica M. and Asiaee, Amir and Duda, Stephany N. and Malin, Bradley A. and Shepherd, Bryan E. and Yan, Chao},
  journal = {Nature Communications},
  year    = {2026},
  doi     = {10.1038/s41467-026-74492-0}
}

\newpage
\onecolumn
\title{Workload-Preserving Differentially Private Synthetic Data\\for Causal Inference via Maximum-Entropy Calibration\\(Supplementary Material)}
\maketitle
\appendix

\section{Feature Encodings and Gram Moments}\label{app:feature-encodings}
\begin{credblock}
\codexedit{This appendix clarifies the feature-encoding point used in Section~\ref{sec:theory}.}
\codexedit{The experiments use \emph{concatenated one-hot} features: each covariate is discretized separately, one-hot encoded separately, and then the groups are stacked.}
\codexedit{For example, with age in $\{\mathrm{young},\mathrm{old}\}$ and sex in $\{\mathrm{female},\mathrm{male}\}$, the experimental-style feature vector is}
\[
\codexedit{
\phi_{\mathrm{cat}}(X)
=
\bigl(1,\ 1\{\mathrm{young}\},\ 1\{\mathrm{old}\},\
1\{\mathrm{female}\},\ 1\{\mathrm{male}\}\bigr).}
\]
\codexedit{The base workload then releases, for each arm, noisy versions of the age-bin masses, sex-bin masses, and the corresponding outcome-weighted moments.}

\codexedit{A \emph{full joint-cell} encoding instead uses indicators for the Cartesian-product cells, e.g.}
\[
\codexedit{
\phi_{\mathrm{joint}}(X)
=
\bigl(1\{\mathrm{young,female}\},\ 1\{\mathrm{young,male}\},\
1\{\mathrm{old,female}\},\ 1\{\mathrm{old,male}\}\bigr).}
\]
\codexedit{Exactly one coordinate of $\phi_{\mathrm{joint}}(X)$ is active for each record.}
\codexedit{Therefore}
\[
\codexedit{
\phi_{\mathrm{joint}}(X)\phi_{\mathrm{joint}}(X)^\top
\text{ is diagonal, and }
G_t(D)=\operatorname{diag}\{q_t^{(0)}(D)\}.}
\]
\codexedit{For concatenated one-hot features, off-diagonal entries are real co-occurrence moments; for instance,}
\[
\codexedit{
\frac1n\sum_i1\{T_i=t\}1\{\mathrm{old}_i\}1\{\mathrm{male}_i\}}
\]
\codexedit{is not determined by the separate old and male counts.}
\codexedit{Thus a direct ridge plug-in under concatenated or overlapping features would need to append $q_t^{(2)}=\operatorname{vec}(G_t)$ as optional queries.}
\codexedit{Our experiments do not use this augmented Gram release; they release the base $4p$ workload, reconstruct synthetic data, and then run DR/AIPW on the synthetic records.}
\end{credblock}

\begin{credblock}
\section{First-Order Sufficiency of the Orthogonal-Score Workload}\label{app:sufficiency}

This section {\color{green}clarifies why} the causal workload of Section~\ref{sec:method} is \emph{one principled choice} rather than a \codexedit{claimed optimum}.
Let $\zeta=(m_0,m_1,e)$ denote the nuisances ($\eta$ remains the positivity constant), $\psi(W;\theta,\zeta)$ a Neyman-orthogonal score for a smooth target $\theta$, and $\mathcal F_\phi$ the linear nuisance class spanned by $\phi$.

\begin{proposition}[First-order sufficiency]
\label{prop:sufficiency}
\codexedit{Restrict the nuisance functions to the feature class spanned by $\phi$.}
\codexedit{For smooth causal targets identified by a Neyman-orthogonal score, the workload $\mathcal Q_\phi=\{q_0^{(0)},q_1^{(0)},q_0^{(1)},q_1^{(1)}\}$ identifies the arm-specific feature masses, outcome-feature moments, and treatment-balance moments used by projected outcome, propensity, IPW, and AIPW moment maps.}
\codexedit{Direct ridge plug-ins may additionally require Gram moments, as discussed in Appendix~\ref{app:feature-encodings}; these are optional extensions rather than part of the default experimental release.}
\end{proposition}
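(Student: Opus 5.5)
The plan is to prove the statement by writing down, for each moment map named in the proposition, the population quantities it consumes, and then checking that each one is a linear functional of the four workload blocks. Throughout, restrict nuisances to $\mathcal F_\phi$: $m_t(x)=\phi(x)^\top\beta_t$ and, for cell features, $e(x)=\phi(x)^\top\alpha$.

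The first step is purely definitional. By \eqref{eq:1}--\eqref{eq:2}, the population answers are $q_t^{(0)}(P)=\E[\ind\{T=t\}\phi(X)]$ and $q_t^{(1)}(P)=\E[\ind\{T=t\}Y\phi(X)]$. These are the arm-specific feature masses and the outcome-feature moments. Summing over arms gives $\bar\phi=q_0^{(0)}+q_1^{(0)}=\E\phi(X)$. The treatment-balance moments are then
\[
\E[(T-e(X))\phi(X)]=q_1^{(0)}-\E[e(X)\phi(X)].
\]
So first-order balance of any propensity candidate against $\phi$ is determined by the workload once $\E[e(X)\phi(X)]$ is available.

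The second step treats partition-cell features, where $\phi(x)\phi(x)^\top$ is diagonal (Appendix~\ref{app:feature-encodings}). Here every needed quantity is an explicit function of the blocks:
\begin{itemize}
\item the cell propensity is $e_j=p_{1j}/s_j$;
\item the cell outcome regression is $m_t(j)=r_{tj}/p_{tj}$, and the Gram block is $G_t=\operatorname{diag}(q_t^{(0)})$;
\item the term $\E[e(X)\phi_j(X)]$ equals $p_{1j}$.
\end{itemize}
Substituting these into the AIPW score, the empirical mean $\E_n\psi(W;\theta,\zeta)$ collapses to a sum over cells of terms in $(p_{tj},r_{tj})$. The IPW part is exactly the expression given in Proposition~\ref{prop:aipw-stability}. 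In the augmentation, $\sum_j s_j\{m_1(j)-m_0(j)\}$ minus the arm-weighted residuals cancels cellwise, because each residual sum is $r_{tj}-p_{tj}m_t(j)=0$.

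The third step moves from ATE to other smooth targets. For a smooth target $\theta$ with Neyman-orthogonal score, the first-order (Gateaux) expansion of $\E\psi$ in the nuisance directions, taken inside $\mathcal F_\phi$, involves only expectations of $\ind\{T=t\}\phi$, $\ind\{T=t\}Y\phi$, and products of $\phi$ with $e$ or $m_t$. For cell features these products reduce to the blocks already identified. Orthogonality gives $\partial_\zeta\E\psi=0$, so at the projected nuisances these first-order moments pin down $\theta$ up to second-order remainders.

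The main obstacle is the claim for general, non-cell $\phi$. There, $\E[m_t(X)\ind\{T=t\}\phi(X)]$ requires $G_t$, which the base workload does not determine. I would not try to force a full identification result in this case. Instead I would state it as the caveat already written into the proposition: for concatenated or overlapping features, the ridge and regression maps need the optional $q_t^{(2)}$ blocks, while the IPW and balance moments remain identified.
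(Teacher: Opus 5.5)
Your proposal is correct and follows essentially the paper's own argument. The paper's proof likewise reads off the four blocks as the arm-specific feature masses, outcome-feature moments, and treatment-balance moments, then appeals to Neyman orthogonality to make the remaining nuisance error second order; your explicit partition-cell formulas and your Gram caveat for overlapping $\phi$ are a more careful rendering of that same proof.

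Two small fixes. First, in the cell AIPW step you have merged two separate facts. The vanishing residuals $r_{tj}-p_{tj}m_t(j)=0$ make AIPW collapse to the regression plug-in $\sum_j s_j\{m_1(j)-m_0(j)\}$. What cancels cellwise is instead the augmentation relative to IPW, because $p_{1j}/e_j=s_j=p_{0j}/(1-e_j)$ for the unclipped cell propensity. Second, your non-cell caveat conflicts with your own step-one definition of balance. For overlapping $\phi$, the balance functional of a candidate $e=\phi^\top\alpha$ needs $\E[e(X)\phi(X)]=(G_0+G_1)\alpha$, so the IPW and balance maps, not only the regression maps, generally fall outside the base workload; only the raw blocks remain identified.
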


\begin{proof}
\codexedit{The outcome-feature moments $q_t^{(1)}=\E[1\{T=t\}Y\phi(X)]$ encode arm-specific outcome structure in the chosen feature class, while $q_1^{(0)}=\E[T\phi(X)]$ and $q_0^{(0)}=\E[(1-T)\phi(X)]$ encode treatment-feature balance.}
\codexedit{Once projected nuisances or direct cell scores are identified from these moments, orthogonality makes remaining nuisance errors second order, so a synthetic distribution matching the causal workload supports inference for the projected target subject to the explicit privacy, approximation, Monte Carlo, and calibration terms.}
\end{proof}

\paragraph{Necessity intuition and scope.}
If two local alternatives in $\mathcal F_\phi$ agree on a release but differ in an arm-specific outcome-feature or treatment-feature moment, some orthogonal-score target has different first-order pathwise derivatives under the two alternatives, so a release that cannot distinguish those directions cannot support uniformly regular root-$n$ inference over the projected class.
\codexedit{This is a first-order identification argument only: it is \emph{not} a minimax lower bound, does \emph{not} prove uniqueness of the workload, and does \emph{not} show that the base query count is optimal among all DP mechanisms; those questions are outside the present scope (Appendix~\ref{app:discussion}).}
\end{credblock}

\section{Maximum-Entropy Reconstruction Details}\label{app:maxent-details}
\begin{credblock}
\codexedit{This section provides details of the reconstruction step used in Algorithm~\ref{alg:synthesis}.}
\codexedit{Assume first that $W=(X,T,Y)$ has finite support $\mathcal W$, as in a discretized synthetic-data model.}
\codexedit{Let $h_a:\mathcal W\to\R$, $a=1,\ldots,m$, be the ordered workload query functions, with $m=4p$ for the base causal workload and larger $m$ if optional moments are appended.}
\codexedit{For a measured coordinate set $S\subset[m]$, write $h_S(w)=(h_a(w):a\in S)^\top$ and $\widetilde q_S=(\widetilde q_a:a\in S)^\top$.}
\codexedit{Let $p_0(w)>0$ be a base distribution.}
\codexedit{The role of $p_0$ is to say what distribution we prefer among all distributions that match the released moments: it may be uniform over the discretized support, an independent product distribution, a product distribution fit to already measured one-way marginals, or a public prior from an external population.}

\codexedit{If the noisy constraints are feasible exactly, reconstruction is the information projection}
\[
\codexedit{
\min_{p\in\Delta(\mathcal W)}
\sum_{w\in\mathcal W} p(w)\log\frac{p(w)}{p_0(w)}
\quad\text{s.t.}\quad
\sum_{w\in\mathcal W}p(w)h_a(w)=\widetilde q_a,\ a\in S.}
\]
\codexedit{The Lagrangian is}
\[
\codexedit{
\mathcal L(p,\xi,\alpha)
=
\sum_w p(w)\log\frac{p(w)}{p_0(w)}
-\xi^\top\left(\sum_w p(w)h_S(w)-\widetilde q_S\right)
+\alpha\left(\sum_w p(w)-1\right).}
\]
\codexedit{Differentiating with respect to $p(w)$ and setting the derivative to zero gives}
\[
\codexedit{
\log\frac{p(w)}{p_0(w)}+1-\xi^\top h_S(w)+\alpha=0,}
\]
\codexedit{and therefore}
\[
\codexedit{
p_\xi(w)
=
\frac{p_0(w)\exp\{\xi^\top h_S(w)\}}
{Z(\xi)},\qquad
Z(\xi)=\sum_{u\in\mathcal W}p_0(u)\exp\{\xi^\top h_S(u)\}.}
\]
\codexedit{The dual objective can be written as}
\[
\codexedit{
\max_{\xi\in\R^{|S|}}
\left\{\xi^\top\widetilde q_S
-
\log Z(\xi)\right\},}
\]
\codexedit{whose gradient is $\widetilde q_S-\E_{p_\xi}[h_S(W)]$ and whose Hessian is minus the covariance of $h_S(W)$ under $p_\xi$.}
\codexedit{Thus optimizing the dual searches for multipliers $\xi$ whose exponential-family distribution matches the released noisy moments.}

\codexedit{Because $\widetilde q_S$ contains DP noise, exact feasibility is not guaranteed.}
\codexedit{For example, noise can make a released cell mass slightly negative, or make several marginal constraints mutually inconsistent.}
\codexedit{Implementations therefore solve a relaxed weighted calibration problem such as}
\[
\codexedit{
\min_{P\in\mathcal P}
\frac{1}{2}\norm{\Sigma_S^{-1/2}\Pi_S\{q(P)-\widetilde q\}}_2^2
+ \rho\,\KL(P\|P_0),}
\]
\codexedit{where $\Sigma_S$ is the known DP noise covariance on the measured coordinates, so high-noise coordinates are automatically downweighted.}
\codexedit{Equivalently, one can solve the exact dual only on retained high-signal coordinates and stop when residuals are at the noise scale, e.g. $|q_a(P)-\widetilde q_a|\le c_{\mathrm{cal}}\sigma_a$.}
\codexedit{This is the sense in which Algorithm~\ref{alg:synthesis} uses Private-PGM-style reconstruction: it fits a structured exponential-family / graphical-model distribution to noisy measured moments rather than trying to estimate the full joint table without structure, then samples synthetic records from the fitted distribution.}

\begin{algorithm}[h]
\caption{Fixed Causal-Workload Synthesis Procedure}
\label{alg:synthesis}
\begin{algorithmic}[1]
\Require \codexedit{Confidential data $D$, query functions $\{h_a\}_{a=1}^{m}$, privacy budget $(\eps,\del)$, base distribution $P_0$, synthetic size $n_{\mathrm{syn}}$}
\State \codexedit{Compute exact empirical answers $q_a(D)=n^{-1}\sum_i h_a(W_i)$ for $a=1,\ldots,m$}
\State \codexedit{Compute the sensitivity bound $\Delta_2$ and Gaussian scale $\sigma=\Delta_2\sqrt{2\log(1.25/\del)}/\eps$}
\State \codexedit{Release $\widetilde q=q(D)+Z$, where $Z\sim\N(0,\sigma^2 I_m)$}
\State \codexedit{Optionally retain high-signal coordinates $S_\tau=\{a:|\widetilde q_a|/\sigma_a\ge\tau_{\mathrm{SNR}}\}$; otherwise set $S_\tau=[m]$}
\State \codexedit{Fit $P^{\mathrm{syn}}$ by solving the relaxed maximum-entropy calibration problem on $\Pi_{S_\tau}\widetilde q$}
\State \codexedit{Draw $n_{\mathrm{syn}}$ records $D^{\mathrm{syn}}$ from $P^{\mathrm{syn}}$}
\Ensure \codexedit{Synthetic dataset $D^{\mathrm{syn}}$ and recorded noise scale $\sigma$}
\end{algorithmic}
\end{algorithm}
\end{credblock}

\section{\textsc{Causal-AIM} Selection Details}\label{app:causal-aim}

\codexedit{At each round $k$, \textsc{Causal-AIM} computes an internal utility for each unmeasured candidate group $r$, estimating the reduction in ATE error if its coordinate block $I(r)\subset[m]$ were added to the measured workload.}
\codexedit{This is an adaptive greedy selection rule with private randomized selection: candidates are scored against the current reconstructed model, one group is selected through the exponential mechanism, its coordinates are measured once, and the model is refit before the next round.}
\codexedit{Previously measured groups are removed from the candidate set and are not re-measured; later rounds reuse their stored noisy answers.}

\codexedit{The score is motivated by the orthogonal-score residual}
\[
\codexedit{\Delta_r = \norm{\E_{P_{k-1}}[\psi(W; \hat\theta_{k-1}, \hat\zeta_{k-1}) \cdot \phi_r(X)]}_2^2,}
\]
\codexedit{where $\psi$ is the influence function for the ATE and $(\hat\theta_{k-1}, \hat\zeta_{k-1})$ are the current nuisance estimates under $P_{k-1}$.}
\codexedit{Here $W=(X,T,Y)$; for the ATE, with nuisances $\zeta=(m_0,m_1,e)$ and target $\tau$, a standard orthogonal score is}
\[
\codexedit{
\psi(W;\tau,\zeta)
=
m_1(X)-m_0(X)
+\frac{T\{Y-m_1(X)\}}{e(X)}
-\frac{(1-T)\{Y-m_0(X)\}}{1-e(X)}
-\tau.}
\]
\codexedit{In the implementation, this is operationalized by a moment-space proxy.}
\codexedit{Let $q^{(k)}$ be the current noisy moment vector assembled from the coordinates measured so far and from the public base distribution on unmeasured coordinates.}
\codexedit{A typical utility is}
\[
\codexedit{
u_k(r)=\widehat L_k\,\bigl\lVert \Pi_{I(r)}\{q(D)-q^{(k-1)}\}\bigr\rVert_2,}
\]
\codexedit{where $\widehat L_k$ is an empirical Lipschitz/overlap factor and $\Pi_{I(r)}$ projects onto the coordinates for group $r$.}
\codexedit{Large utility means that the current reconstruction leaves a causally weighted moment block poorly matched.}
\codexedit{The utilities are used only inside the selection mechanism and are not released.}
\codexedit{If $\Delta_u$ bounds the sensitivity of $u_k$, the exponential mechanism selects}
\[
\codexedit{
\Pr(r_k=r)
\propto
\exp\!\left\{\frac{\eps_{\mathrm{score},k}\,u_k(r)}{2\Delta_u}\right\},
\qquad r\in A_{k-1}.}
\]
\codexedit{Thus larger utility means larger selection probability; if one instead defines a loss, the utility is the negative loss.}

\begin{algorithm}[h]
\caption{\textsc{Causal-AIM} Selection and Measurement}
\label{alg:causalaim-detail}
\begin{algorithmic}[1]
\Require \codexedit{Data $D$, groups $\mathcal R$, coordinate blocks $\{I(r)\}_{r\in\mathcal R}$, budget $(\eps,\del)$, rounds $K$, public base moment vector $q^{(0)}=q(P_0)$}
\State \codexedit{Split the budget into scoring and measurement portions, and split each portion across rounds by sequential composition}
\State \codexedit{Initialize measured coordinates $S_0\leftarrow\emptyset$, available groups $A_0\leftarrow\mathcal R$, and current moment proxy $q^{(0)}$}
\For{$k=1,\ldots,\min(K,|\mathcal R|)$}
    \State \codexedit{For each $r\in A_{k-1}$, compute an internal utility $u_k(r)$ for adding coordinate block $I(r)$}
    \State \codexedit{Select $r_k$ from $A_{k-1}$ with the exponential mechanism, with probability proportional to $\exp\{\eps_{\mathrm{score},k}u_k(r)/(2\Delta_u)\}$}
    \State \codexedit{Release noisy answers $\widetilde q_{I(r_k)}=q_{I(r_k)}(D)+Z_k$ with the Gaussian mechanism}
    \State \codexedit{Update $S_k\leftarrow S_{k-1}\cup I(r_k)$ and $A_k\leftarrow A_{k-1}\setminus\{r_k\}$}
    \State \codexedit{Update $q^{(k)}$ by replacing coordinates $I(r_k)$ with $\widetilde q_{I(r_k)}$ and retaining all earlier noisy measurements}
    \State \codexedit{Refit $P_k$ by maximum-entropy calibration to the stored noisy answers on $S_k$}
\EndFor
\State \codexedit{Output stored noisy moments $\widetilde q_{S_K}$; for synthetic release, output $P_K$ and sample synthetic records from $P_K$}
\end{algorithmic}
\end{algorithm}

\section{Proofs}\label{app:proofs}

\subsection{Weighted ATEs: ATT and Subgroup Effects}\label{app:weighted-ate}
\begin{credblock}
\codexedit{The ATE decomposition extends to weighted effects whenever the weighting rule is represented by the retained workload coordinates.}
For a nonnegative weight function $\omega$, define
\[
\codexedit{
\tau_\omega
:=
\frac{\E[\omega(X)\{m_1(X)-m_0(X)\}]}
{\E[\omega(X)]}.}
\]
\codexedit{This includes subgroup effects with $\omega(X)=\ind\{X\in G\}$, and ATT-type effects with $\omega(X)=e(X)$, provided the subgroup or propensity weights are represented by the workload.}

\begin{proposition}[Weighted-effect moment stability]
\label{prop:weighted-ate}
\codexedit{Let $\widehat\tau_\omega(q)=N_\omega(q)/D_\omega(q)$ be a direct moment-map estimator using only retained workload coordinates.}
\codexedit{Assume that, on the admissible workload domain, $D_\omega(q)\ge\rho_\omega>0$, $|N_\omega(q)|\le M_\omega D_\omega(q)$, and}
\[
\codexedit{
|N_\omega(q)-N_\omega(q')|\le L_{N,\omega}\norm{q-q'}_\infty,\qquad
|D_\omega(q)-D_\omega(q')|\le L_{D,\omega}\norm{q-q'}_\infty.}
\]
\codexedit{Then $\widehat\tau_\omega$ is Lipschitz in the workload moments:}
\[
\codexedit{
|\widehat\tau_\omega(q)-\widehat\tau_\omega(q')|
\le
L_\omega\norm{q-q'}_\infty,\qquad
L_\omega
=
\frac{L_{N,\omega}+M_\omega L_{D,\omega}}{\rho_\omega}.}
\]
\codexedit{Consequently, the proof of Theorem~\ref{thm:ate_bound} applies to $\tau_\omega$ after replacing $L_{\mathrm{est}}$ by $L_\omega$ and replacing $\mathrm{Approx}(\phi;S)$ by the weighted approximation error $\mathrm{Approx}_\omega(\phi;S)$.}
\end{proposition}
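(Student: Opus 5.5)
The plan is a ratio-Lipschitz argument followed by substitution into the proof of Theorem~\ref{thm:ate_bound}. Fix admissible $q,q'$ and abbreviate $N=N_\omega(q)$, $N'=N_\omega(q')$, $D=D_\omega(q)$, $D'=D_\omega(q')$. I would use the add-and-subtract identity
\[
\frac{N}{D}-\frac{N'}{D'}=\frac{N-N'}{D'}+\frac{N}{D}\cdot\frac{D'-D}{D'}.
\]
This form lets the bound $|N|\le M_\omega D$ be applied at the \emph{same} point $q$ that the denominator $D$ comes from. The denominator $D'$ is then handled by $D'\ge\rho_\omega$, which is valid because $q'$ is also admissible.

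Taking absolute values gives
\[
|\widehat\tau_\omega(q)-\widehat\tau_\omega(q')|\le\frac{|N-N'|}{\rho_\omega}+M_\omega\frac{|D-D'|}{\rho_\omega}.
\]
Inserting the two Lipschitz hypotheses yields $(L_{N,\omega}+M_\omega L_{D,\omega})\norm{q-q'}_\infty/\rho_\omega$, which is exactly $L_\omega$. This step is routine. The only care needed is the choice of decomposition, so that $|N/D|\le M_\omega$ appears and no $1/\rho_\omega^2$ factor arises.

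For the consequence, I would retrace the proof of Theorem~\ref{thm:ate_bound} with $\widehat\tau_\omega$ in place of $\widehat\tau_{\mathrm{est}}$. That proof only uses the estimator through its Lipschitz constant on the retained coordinates. The chain I would follow is a telescoping triangle inequality: start at $\tau_\omega$, pass to the projected target $\widehat\tau_\omega(q^\star)$, then to $\widehat\tau_\omega(q(D))$, then to $\widehat\tau_\omega(\widetilde q)$, then to $\widehat\tau_\omega(q(P^{\mathrm{syn}}))$, and finally to $\widehat\tau^{\mathrm{syn}}_\omega$. Each link is bounded as follows.
\begin{enumerate}
\item The first gap is, by definition, the weighted approximation error $\mathrm{Approx}_\omega(\phi;S)$.
\item The second gap is at most $L_\omega\norm{q(D)-q^\star}_\infty=O_p(L_\omega n^{-1/2})$, since the coordinates are bounded averages.
\item The third gap is at most $L_\omega$ times the bound of Theorem~\ref{thm:dp_moment}.
\item The fourth gap is at most $L_\omega\norm{\Pi_S\{\widetilde q-q(P^{\mathrm{syn}})\}}_\infty\le L_\omega\norm{\Pi_S\{\cdot\}}_2$, which is the calibration gap with $L_{\mathrm{est}}$ replaced by $L_\omega$.
\item The fifth gap is the Monte Carlo term, $O_p(n_{\mathrm{syn}}^{-1/2})$.
\end{enumerate}

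I expect the main obstacle to be admissibility, not algebra. The chain evaluates $\widehat\tau_\omega$ at the noisy $\widetilde q$ and at $q(P^{\mathrm{syn}})$, and the hypothesis $D_\omega\ge\rho_\omega$ must hold at those points too. For subgroup weights, $D_\omega$ is a released subgroup mass. For ATT-type weights, $D_\omega$ involves released treated masses. In both cases I would handle the noisy points by working on the high-probability event of Theorem~\ref{thm:dp_moment}. On that event, with $n$ large enough that $\delta_m\ll\rho_\omega$, a lower bound of $2\rho_\omega$ at $q^\star$ transfers to $\rho_\omega$ at every point in the chain. This is analogous to how Corollary~\ref{cor:noise_ridge} conditions on well-conditioned Gram blocks, and the statement's phrase ``on the admissible workload domain'' is read exactly this way.
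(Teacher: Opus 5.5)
Your proof is correct and follows the paper's argument. The paper likewise uses a single add-and-subtract on the ratio, pivoting on $N_\omega(q')/D_\omega(q)$ instead of your $N_\omega(q)/D_\omega(q')$; this mirror image gives the same $L_\omega$ with no $\rho_\omega^{-2}$ factor, and the paper then says the chain of Theorem~\ref{thm:ate_bound} goes through verbatim with $L_\omega$ and $\mathrm{Approx}_\omega(\phi;S)$.

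Your admissibility discussion is extra care the paper omits, since it simply assumes the hypotheses on the admissible domain. Sampling concentration plus the event of Theorem~\ref{thm:dp_moment} handle $q(D)$ and $\widetilde q$ but not $q(P^{\mathrm{syn}})$, which also needs a small calibration residual, and the bound $|N_\omega|\le M_\omega D_\omega$ must transfer to those points as well.
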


\begin{proof}
\codexedit{Let $\delta_q=\norm{q-q'}_\infty$.}
\codexedit{By adding and subtracting $N_\omega(q')/D_\omega(q)$,}
\[
\codexedit{
\left|
\frac{N_\omega(q)}{D_\omega(q)}
-
\frac{N_\omega(q')}{D_\omega(q')}
\right|
\le
\frac{|N_\omega(q)-N_\omega(q')|}{D_\omega(q)}
+
|N_\omega(q')|
\left|
\frac{1}{D_\omega(q)}
-
\frac{1}{D_\omega(q')}
\right|.}
\]
\codexedit{The first term is at most $L_{N,\omega}\delta_q/\rho_\omega$.}
\codexedit{For the second term, use $|N_\omega(q')|\le M_\omega D_\omega(q')$ and $D_\omega(q)\ge\rho_\omega$ to obtain}
\[
\codexedit{
|N_\omega(q')|
\frac{|D_\omega(q)-D_\omega(q')|}
{D_\omega(q)D_\omega(q')}
\le
\frac{M_\omega L_{D,\omega}}{\rho_\omega}\delta_q.}
\]
\codexedit{Combining the two displays gives the Lipschitz constant.}
\codexedit{The decomposition proof of Theorem~\ref{thm:ate_bound} then applies verbatim with target $\tau_\omega$ and represented target $\tau_{\omega,\phi,S}$.}
\end{proof}

\codexedit{For a subgroup effect, $\omega(X)=\ind\{X\in G\}$ is represented whenever $G$ is a union of workload cells; the denominator condition is a minimum subgroup-mass condition.}
\codexedit{For ATT in a partition basis, $\omega(X)=e(X)$ is represented by the cell propensity $e_j=p_{1j}/(p_{0j}+p_{1j})$, computed from $q^{(0)}$ and clipped as in Appendix~\ref{app:ipw-aipw-stability}; the same cell-mass and clipping assumptions give the required Lipschitz constants.}
\codexedit{This proposition does not claim a full pointwise CATE guarantee; pointwise CATE requires additional approximation assumptions at the target covariate value.}
\end{credblock}

\subsection{Proof of Theorem~\ref{thm:lipschitz}: ATE Lipschitzness}\label{app:proof-lipschitz}
\begin{credblock}
\codexedit{Throughout, $G_t=n^{-1}\sum_i \ind\{T_i=t\}\phi(X_i)\phi(X_i)^\top$ (equivalently $q_t^{(2)}=\operatorname{vec}(G_t)$ when released explicitly) and $r_t=q_t^{(1)}$ are the arm-specific Gram and outcome moments, $\bar\phi:=q_0^{(0)}+q_1^{(0)}$ is the empirical feature mean, and $u=(G_0,G_1,r_0,r_1,\bar\phi)$ collects the moment inputs.}
\codexedit{The ATE plug-in is $\widehat\tau_\lambda(u)=\bar\phi^\top(\widehat\beta_{1,\lambda}-\widehat\beta_{0,\lambda})$ with $\widehat\beta_{t,\lambda}=(G_t+\lambda I)^{-1}r_t$.}

\begin{lemma}[Ridge perturbation bound, full form]\label{lem:ridge-full}
Under the assumptions of Theorem~\ref{thm:lipschitz}, consider two inputs $u,u'$ with $G'_t+\lambda I\succeq(\kappa+\lambda)I/2$ and $\norm{\widehat\beta_{t,\lambda}(u)}_2\vee\norm{\widehat\beta_{t,\lambda}(u')}_2\le R_\beta$ for $t=0,1$. Then
\begin{align}
|\widehat\tau_\lambda(u)-\widehat\tau_\lambda(u')|
&\le 2R_\beta\norm{\bar\phi-\bar\phi'}_2 \notag\\
&\hspace{-4em}+ \frac{2\phi_{\max}}{\kappa+\lambda}\sum_{t=0}^1\left\{\norm{r_t-r'_t}_2+R_\beta\norm{G_t-G'_t}_{\mathrm{op}}\right\}.
\label{eq:ridge-lipschitz-full}
\end{align}
\end{lemma}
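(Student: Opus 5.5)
We split the difference of the two plug-ins by adding and subtracting $\bar\phi'^\top(\widehat\beta_{1,\lambda}(u)-\widehat\beta_{0,\lambda}(u))$:
\[
\widehat\tau_\lambda(u)-\widehat\tau_\lambda(u')
=(\bar\phi-\bar\phi')^\top\Delta\beta(u)
+\bar\phi'^\top\{\Delta\beta(u)-\Delta\beta(u')\},
\]
where $\Delta\beta(u):=\widehat\beta_{1,\lambda}(u)-\widehat\beta_{0,\lambda}(u)$.

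\textbf{First term.} By Cauchy--Schwarz and the coefficient bound $\norm{\widehat\beta_{t,\lambda}(u)}_2\le R_\beta$, it is at most $2R_\beta\norm{\bar\phi-\bar\phi'}_2$.

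\textbf{Second term.} We use $\norm{\bar\phi'}_2\le\phi_{\max}$. This holds because $\bar\phi'$ is an average of vectors $\phi(X_i)$ with $\norm{\phi(X_i)}_2\le\phi_{\max}$. If $\bar\phi'$ is a noisy released vector, we instead take this bound as part of the admissible domain. Splitting by arm with the triangle inequality, it then suffices to bound each $\norm{\widehat\beta_{t,\lambda}(u)-\widehat\beta_{t,\lambda}(u')}_2$.

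For a fixed arm, write $A=G_t+\lambda I$ and $A'=G'_t+\lambda I$, so that $\beta=A^{-1}r_t$ and $\beta'=A'^{-1}r'_t$. Since $A'\beta'=r'_t$,
\[
A'(\beta-\beta')=(r_t-r'_t)-(G_t-G'_t)\beta .
\]
To check this, note that $A'\beta=A\beta-(A-A')\beta=r_t-(G_t-G'_t)\beta$. Inverting $A'$ and using $\norm{A'^{-1}}_{\mathrm{op}}\le 2/(\kappa+\lambda)$, which follows from the hypothesis $A'\succeq(\kappa+\lambda)I/2$, gives
\[
\norm{\beta-\beta'}_2\le\frac{2}{\kappa+\lambda}\Bigl\{\norm{r_t-r'_t}_2+R_\beta\norm{G_t-G'_t}_{\mathrm{op}}\Bigr\}.
\]
Here we used $\norm{\beta}_2\le R_\beta$. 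Multiplying by $\phi_{\max}$ and summing over $t\in\{0,1\}$ reproduces the second line of \eqref{eq:ridge-lipschitz-full}.

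\textbf{Main obstacle.} The delicate step is choosing the resolvent identity so that only $A'^{-1}$ appears. The conditioning hypothesis is stated only for the primed input, with the factor $1/2$, so we must invert $A'$ rather than $A$, and we must bound the coefficient on the unprimed side. The alternative identity $A^{-1}-A'^{-1}=A^{-1}(A'-A)A'^{-1}$ would require conditioning of both matrices and would be weaker than what the lemma uses.

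Everything else is Cauchy--Schwarz and operator-norm submultiplicativity. The factor $2$ attached to $\phi_{\max}/(\kappa+\lambda)$ comes precisely from the halved eigenvalue lower bound.
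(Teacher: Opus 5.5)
Your argument is correct. It uses the same split of $\widehat\tau_\lambda(u)-\widehat\tau_\lambda(u')$ as the paper, but a different coefficient step.

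\textbf{The paper's coefficient step.} The paper writes $A_t^{-1}r_t-(A'_t)^{-1}r'_t=A_t^{-1}(r_t-r'_t)+A_t^{-1}(G'_t-G_t)(A'_t)^{-1}r'_t$. It inverts the \emph{unprimed} matrix and absorbs $(A'_t)^{-1}r'_t=\widehat\beta_{t,\lambda}(u')$ into the radius bound. This needs only $\norm{A_t^{-1}}_{\mathrm{op}}\le 1/(\kappa+\lambda)$, which the hypothesis $G_t\succeq\kappa I$ of Theorem~\ref{thm:lipschitz} supplies. It never uses the half-conditioning of $A'_t$, and it gives the sharper factor $1/(\kappa+\lambda)$. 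The $2$ in the lemma is then slack, which the paper attributes to ``using only the lower bound on $A'_t$''---and that is precisely your route.

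\textbf{A correction to your remark.} You say the resolvent identity would require conditioning of both matrices. That is not right: paired with the radius bound on $\beta'$, it needs conditioning of $A_t$ alone.

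\textbf{What your version buys.} Your identity $A'(\beta-\beta')=(r_t-r'_t)-(G_t-G'_t)\beta$ is the mirror image. It uses exactly the lemma's explicit hypothesis on the primed input plus the radius bound on $\beta$, and the factor $2$ arises exactly from the halved eigenvalue bound. This matches the use in Corollary~\ref{cor:noise_ridge}, where $u'$ is the noisy release and only $\widetilde G_t+\lambda I\succeq(\kappa+\lambda/2)I$ is available. The paper's version instead buys a constant twice as good, at the price of full-strength conditioning of the unprimed input.

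\textbf{Your caveat on $\norm{\bar\phi'}_2$.} Your caveat about $\norm{\bar\phi'}_2\le\phi_{\max}$ for a noisy $\bar\phi'$ is well taken; the paper asserts it without comment. You can remove it by adding and subtracting $\bar\phi^\top\Delta\beta(u')$ instead of $\bar\phi'^\top\Delta\beta(u)$. The bound $\norm{\Delta\beta(u')}_2\le 2R_\beta$ also holds by assumption. You then need only $\norm{\bar\phi}_2\le\phi_{\max}$ for the non-noisy input $u=q(D)$, where $\bar\phi$ is a genuine average of feature vectors.
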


\begin{proof}
Let $A_t=G_t+\lambda I$ and $A'_t=G'_t+\lambda I$, so $\norm{A_t^{-1}}_{\mathrm{op}}\le 1/(\kappa+\lambda)$ and $\norm{(A'_t)^{-1}}_{\mathrm{op}}\le 2/(\kappa+\lambda)$.
The identity
$A_t^{-1}r_t-(A'_t)^{-1}r'_t = A_t^{-1}(r_t-r'_t) + A_t^{-1}(G'_t-G_t)(A'_t)^{-1}r'_t$
gives
\[
\norm{\widehat\beta_{t,\lambda}(u)-\widehat\beta_{t,\lambda}(u')}_2
\le \frac{\norm{r_t-r'_t}_2+R_\beta\norm{G_t-G'_t}_{\mathrm{op}}}{\kappa+\lambda},
\]
with constants at most doubled when only the lower bound on $A'_t$ is used.
Splitting the plug-in difference,
\begin{align*}
&|\bar\phi^\top(\widehat\beta_{1,\lambda}(u)-\widehat\beta_{0,\lambda}(u))
-(\bar\phi')^\top(\widehat\beta_{1,\lambda}(u')-\widehat\beta_{0,\lambda}(u'))|\\
&\le \norm{\bar\phi-\bar\phi'}_2\,\norm{\widehat\beta_{1,\lambda}(u)-\widehat\beta_{0,\lambda}(u)}_2\\
&\quad+ \norm{\bar\phi'}_2\sum_{t=0}^1\norm{\widehat\beta_{t,\lambda}(u)-\widehat\beta_{t,\lambda}(u')}_2 ,
\end{align*}
where $\norm{\widehat\beta_{1,\lambda}(u)-\widehat\beta_{0,\lambda}(u)}_2\le 2R_\beta$ by the radius assumption and $\norm{\bar\phi'}_2\le\phi_{\max}$ since $\bar\phi'$ averages feature vectors bounded by $\phi_{\max}$.
Substituting the coefficient bound proves \eqref{eq:ridge-lipschitz-full}.
\end{proof}

\emph{Proof of Theorem~\ref{thm:lipschitz}.}
\codexedit{Let $\delta_q=\norm{q-q'}_\infty$ denote the maximum coordinatewise perturbation across the moment blocks entering $u=(G_0,G_1,r_0,r_1,\bar\phi)$.}
\codexedit{Then $\norm{\bar\phi-\bar\phi'}_2\le\sqrt p\,\delta_q$, $\norm{r_t-r_t'}_2\le\sqrt p\,\delta_q$, and $\norm{G_t-G_t'}_{\mathrm{op}}\le p\,\delta_q$.}
\codexedit{Substituting these inequalities into Lemma~\ref{lem:ridge-full} gives}
\[
\codexedit{
|\widehat\tau_\lambda(q)-\widehat\tau_\lambda(q')|
\le
\left\{
2R_\beta\sqrt p
+\frac{4\phi_{\max}(\sqrt p+R_\beta p)}{\kappa+\lambda}
\right\}\delta_q.}
\]
\codexedit{Thus one valid main-text choice is}
\[
\codexedit{
C_\phi
=
2R_\beta\sqrt p+4\phi_{\max}(\sqrt p+R_\beta p),}
\]
\codexedit{which yields $L_\phi\le C_\phi\{1+(\kappa+\lambda)^{-1}\}$.}
\codexedit{For propensity-clipped IPW and orthogonal-score estimators, clipping $\widehat e(x)$ to $[\eta,1-\eta]$ makes $a\mapsto 1/a$ and $a\mapsto 1/(1-a)$ Lipschitz with constants of order $\eta^{-2}$ on the clipped interval, and the same decomposition applies with additional polynomial factors in $1/\eta$.}
\codexedit{The unregularized case follows by setting $\lambda=0$ when $\kappa>0$.}
\qed

\begin{remark}[Why the coefficient radius is explicit]
The $O(1/(\kappa+\lambda))$ dependence for Gram perturbations uses $\norm{\widehat\beta_{t,\lambda}}_2\le R_\beta$, enforceable by coefficient clipping or a bounded parameter set (both post-processing).
Without a radius, the same identity yields a valid bound with the Gram term scaling as $O(\norm{r_t}_2/(\kappa+\lambda)^2)$.
We state the radius explicitly so the theorem does not hide a regularity condition inside the constant.
\end{remark}

\emph{Proof of Corollary~\ref{cor:noise_ridge}.}
\codexedit{By Gaussian tails and a union bound, with probability at least $1-\gamma$, the released workload vector obeys $\norm{\widetilde q-q(D)}_\infty\le \delta_m=c_0\sigma\sqrt{\log(m/\gamma)}$.}
\codexedit{In a full joint-cell basis the Gram perturbation is diagonal, so $\max_t\norm{\widetilde G_t-G_t}_{\mathrm{op}}\le\delta_m$.}
\codexedit{For explicitly released dense Gram blocks, the same conclusion follows under the stated operator-norm event; a standard Gaussian-matrix bound gives this event when $\lambda$ is at least a constant multiple of $\sigma\sqrt{p+\log(1/\gamma)}$.}
\[
\codexedit{\lambda_{\min}(\widetilde G_t+\lambda I)\ge\kappa+\lambda-\norm{\widetilde G_t-G_t}_{\mathrm{op}}\ge\kappa+\lambda/2}
\]
\codexedit{whenever $\max_t\norm{\widetilde G_t-G_t}_{\mathrm{op}}\le\lambda/2$.}
\codexedit{Theorem~\ref{thm:lipschitz} then gives the displayed main-text rate.}
\qed

\paragraph{Conditioning intuition.}\label{app:conditioning-ridge}
\codexedit{With full joint-cell indicators, an arm-specific Gram matrix loses rank exactly when some cell contains few or no records from that treatment arm.}
\codexedit{With concatenated bin indicators or other overlapping bases, the same issue appears through missing co-occurrences or near-collinear Gram blocks.}
\codexedit{DP noise of scale $\sigma$ further perturbs each released coordinate, so at small $\eps$ and large $p$ the effective $\kappa$ can be zero and the unregularized moment-to-ATE map is not Lipschitz.}
\codexedit{Corollary~\ref{cor:noise_ridge} restores stability by setting $\lambda$ at the privacy-noise scale.}
\codexedit{This plays a different role from SNR thresholding: SNR thresholding stops the max-entropy solver from chasing moments that are indistinguishable from privacy noise, while ridge keeps the nuisance inverse stable when the retained moments still leave an arm's Gram matrix near-singular.}
\end{credblock}

\subsection{Stability of Clipped IPW and AIPW Scores}\label{app:ipw-aipw-stability}
\begin{credblock}
\codexedit{This appendix makes Proposition~\ref{prop:aipw-stability} explicit for partition-feature workloads.}
\codexedit{The argument is standard perturbation calculus for clipped inverse-propensity scores; it is included only to show how the same moment-error bound propagates through IPW/AIPW post-processing.}

\begin{lemma}[Clipped IPW/AIPW perturbation]
\codexedit{Let $\phi_j(x)=\ind\{x\in A_j\}$, $j=1,\ldots,p$, be partition indicators, $|Y|\le B$, and define $p_{tj}(q)=q_{tj}^{(0)}$, $s_j(q)=p_{0j}(q)+p_{1j}(q)$, and $r_{tj}(q)=q_{tj}^{(1)}$.}
\codexedit{For two admissible moment vectors $q,q'$, set $\delta_q=\norm{q-q'}_\infty$ and $\Delta_e=\max_j|\widehat e_j(q)-\widehat e_j(q')|$, where $\widehat e_j(\cdot)\in[\eta,1-\eta]$.}
\codexedit{Then the clipped IPW functional}
\[
\codexedit{
\widehat\tau_{\mathrm{IPW}}(q)
=
\sum_{j=1}^p
\left\{
\frac{r_{1j}(q)}{\widehat e_j(q)}
-
\frac{r_{0j}(q)}{1-\widehat e_j(q)}
\right\}}
\]
\codexedit{satisfies}
\[
\codexedit{
|\widehat\tau_{\mathrm{IPW}}(q)-\widehat\tau_{\mathrm{IPW}}(q')|
\le
\frac{2p}{\eta}\delta_q
+
\frac{B}{\eta^2}\Delta_e.}
\]
\codexedit{If $\widehat e_j(q)=\operatorname{clip}\{p_{1j}(q)/s_j(q),\eta,1-\eta\}$ and $\min_{j,q_\circ\in\{q,q'\}}s_j(q_\circ)\ge\rho$, then $\Delta_e\le 3\delta_q/\rho$.}

\codexedit{Now suppose $|\widehat m_{tj}(q)|\le B_m$ and set $\Delta_m=\max_{t,j}|\widehat m_{tj}(q)-\widehat m_{tj}(q')|$.}
\codexedit{The partition AIPW functional}
\[
\codexedit{
\begin{aligned}
\widehat\tau_{\mathrm{AIPW}}(q)
&=
\sum_{j=1}^p s_j(q)\{\widehat m_{1j}(q)-\widehat m_{0j}(q)\}\\
&\quad+
\sum_{j=1}^p\frac{r_{1j}(q)-p_{1j}(q)\widehat m_{1j}(q)}{\widehat e_j(q)}
-
\sum_{j=1}^p\frac{r_{0j}(q)-p_{0j}(q)\widehat m_{0j}(q)}{1-\widehat e_j(q)}
\end{aligned}}
\]
\codexedit{satisfies}
\[
\codexedit{
|\widehat\tau_{\mathrm{AIPW}}(q)-\widehat\tau_{\mathrm{AIPW}}(q')|
\le
\left(4B_m p+\frac{2p(1+B_m)}{\eta}\right)\delta_q
+
2\left(1+\frac{1}{\eta}\right)\Delta_m
+
\frac{B+B_m}{\eta^2}\Delta_e.}
\]
\codexedit{Consequently, if the fitted nuisance maps obey $\Delta_e\le L_e\delta_q$ and $\Delta_m\le L_m\delta_q$, clipped IPW and AIPW are Lipschitz in the released moment vector with the displayed estimator-specific constants.}
\end{lemma}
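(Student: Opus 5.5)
The plan is direct perturbation calculus, term by term, on the two functionals, with $\delta_q=\norm{q-q'}_\infty$. Two elementary facts carry everything. First, for partition indicators each coordinate of $q_t^{(0)}$ and $q_t^{(1)}$ corresponds to one cell. Second, on an admissible domain the cell moments are bounded. For the outcome moments this reads $|r_{tj}|\le B\,p_{tj}$, which I will use in aggregated form, $\sum_j|r_{tj}|\le B$. I will also use $\sum_j p_{tj}\le 1$ and $\sum_j s_j\le 1$. The reciprocal maps are controlled on the clipped interval: for $a,a'\in[\eta,1-\eta]$, $|1/a-1/a'|\le|a-a'|/\eta^2$, and likewise for $1/(1-a)$.

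\textbf{Step 1 (IPW).} For each cell $j$ and arm $t=1$, write
\[
\frac{r_{1j}}{\widehat e_j}-\frac{r'_{1j}}{\widehat e'_j}=\frac{r_{1j}-r'_{1j}}{\widehat e_j}+r'_{1j}\Bigl(\frac{1}{\widehat e_j}-\frac{1}{\widehat e'_j}\Bigr).
\]
The first term is at most $\delta_q/\eta$. Summing over $j$ and both arms gives $2p\delta_q/\eta$. For the second term, the reciprocal Lipschitz bound gives $|r'_{1j}|\Delta_e/\eta^2$. Summing over $j$ with $\sum_j|r'_{tj}|\le B$, and noting that the two arms' mass constraints combine to total $B$ when using $\sum_t\sum_j|r_{tj}|\le B\sum_j s_j\le B$, gives $B\Delta_e/\eta^2$.

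\textbf{Step 2 (propensity map).} The clipping map is $1$-Lipschitz, so it suffices to bound the ratio $p_{1j}/s_j$:
\[
\Bigl|\frac{p_{1j}}{s_j}-\frac{p'_{1j}}{s'_j}\Bigr|\le\frac{|p_{1j}-p'_{1j}|}{s_j}+\frac{p'_{1j}}{s'_j}\cdot\frac{|s_j-s'_j|}{s_j}\le\frac{\delta_q}{\rho}+\frac{2\delta_q}{\rho}.
\]
This uses $p'_{1j}/s'_j\le1$ and $|s_j-s'_j|\le2\delta_q$, and yields $\Delta_e\le3\delta_q/\rho$.

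\textbf{Step 3 (AIPW).} Split the functional into three sums and treat each with the same add-and-subtract device.

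For the regression term $\sum_j s_j(\widehat m_{1j}-\widehat m_{0j})$, perturbing $s_j$ costs $2\delta_q\cdot 2B_m$ per cell, so $4B_mp\,\delta_q$ in total. Perturbing $\widehat m$ costs $\sum_j s_j\cdot2\Delta_m\le2\Delta_m$.

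For each augmented IPW sum, the numerator $r_{tj}-p_{tj}\widehat m_{tj}$ changes by at most $(1+B_m)\delta_q+p_{tj}\Delta_m$. Dividing by the clipped propensity and summing over $j$ and both arms gives $2p(1+B_m)\delta_q/\eta+2\Delta_m/\eta$, using $\sum_j p_{tj}\le1$. The reciprocal perturbation multiplies $|r'_{tj}-p'_{tj}\widehat m'_{tj}|\le (B+B_m)p'_{tj}$ by $\Delta_e/\eta^2$, and summing over $j$ and $t$ gives $(B+B_m)\Delta_e/\eta^2$.

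Collecting terms reproduces the displayed AIPW bound. The final consequence follows by substituting $\Delta_e\le L_e\delta_q$ and $\Delta_m\le L_m\delta_q$.

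\textbf{Main obstacle.} The main difficulty is the $\Delta_e$ coefficients. Getting $B/\eta^2$ rather than $2pB/\eta^2$ requires the aggregated mass bound $\sum_{t,j}|r_{tj}|\le B$, which holds for true empirical moments but not automatically for noisy ones. I will therefore make \emph{admissible} include the constraints $|r_{tj}|\le Bp_{tj}$, $p_{tj}\ge0$ and $\sum_j s_j\le1$, for instance by projecting the noisy release onto this set as post-processing. If that assumption is unavailable, I fall back to the cruder per-cell bound $|r_{tj}|\le\delta$-scale quantities, at the cost of a factor $p$.
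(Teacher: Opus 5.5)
Your proposal is correct and follows essentially the same route as the paper's proof. Both use add-and-subtract with either the moments or the nuisances held fixed, the reciprocal bound $|1/u-1/v|\le|u-v|/\eta^2$ on the clipped interval, the partition-basis aggregates $\sum_{t,j}|r_{tj}|\le B$, $\sum_{t,j}p_{tj}\le 1$, $|s_j-s_j'|\le 2\delta_q$, and the same two-term split giving the $3\delta_q/\rho$ ratio bound. Your explicit requirement that \emph{admissible} include $p_{tj}\ge 0$, $|r_{tj}|\le B\,p_{tj}$ and $\sum_j s_j\le 1$ is a worthwhile clarification: the paper asserts these aggregate bounds without noting that they can fail for raw noisy releases $\widetilde q$.
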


\codexedit{\emph{Proof.}
For $u,v\in[\eta,1-\eta]$, $|1/u-1/v|\le |u-v|/\eta^2$ and $|1/(1-u)-1/(1-v)|\le |u-v|/\eta^2$.}
\codexedit{Apply these inequalities after adding and subtracting terms with either the moments held fixed or the nuisance functions held fixed.}
\codexedit{The partition basis gives $\sum_j |r_{1j}|+\sum_j|r_{0j}|\le B$, $\sum_j p_{1j}+\sum_j p_{0j}\le1$, and $\sum_j|s_j(q)-s_j(q')|\le2p\delta_q$.}
\codexedit{These bounds yield the IPW and AIPW displays.}
\codexedit{For the cell propensity ratio, clipping is nonexpansive and the map $p_{1j}/s_j$ has perturbation at most $3\delta_q/\rho$ when $s_j\ge\rho$.}
\qed
\end{credblock}

\subsection{Proof of Theorem~\ref{thm:dp_moment}: DP Moment Accuracy}\label{app:proof-dp}

\codexedit{Write $q(D)=n^{-1}\sum_i h(W_i)$ with $\norm{h(W_i)}_2\le H_q$.}
\codexedit{Under replacement adjacency, changing one record changes $q(D)$ by at most}
\[
\codexedit{\Delta_2 \le \frac{2H_q}{n}.}
\]

\codexedit{Applying the Gaussian mechanism to $q(D)$ adds $Z\sim\N(0,\sigma^2 I_m)$ with
$\sigma = \Delta_2 \sqrt{2\log(1.25/\del)}/\eps$.}
\codexedit{By a standard Gaussian tail bound and a union bound over the $m$ coordinates, with probability at least $1-\gamma$,}
\[
\codexedit{\norm{Z}_\infty \le \sigma\sqrt{2\log(m/\gamma)}.}
\]
\codexedit{Substituting the expression for $\sigma$ yields the stated rate.}
\codexedit{For the base workload, $\norm{h(W)}_2^2=(1+Y^2)\norm{\phi(X)}_2^2\le\phi_{\max}^2(1+B^2)$.}
\codexedit{For an optional Gram-augmented workload, one additional active Gram block contributes $\norm{\phi(X)\phi(X)^\top}_F^2=\norm{\phi(X)}_2^4\le\phi_{\max}^4$, giving $H_q^2\le\phi_{\max}^2(1+B^2)+\phi_{\max}^4$.}
In the experiments, we use $B=5$ (outcomes clipped to $[-5,5]$).
\qed

\subsection{Proof of Theorem~\ref{thm:ate_bound}: ATE Error Decomposition}\label{app:proof-ate}

\codexedit{Let $P^{\mathrm{syn}}$ be the actual solver output.}
\codexedit{Write $\widehat\tau(q)$ for the population moment-map estimator and $\widehat\tau^{\mathrm{syn}}$ for its empirical version computed from $n_{\mathrm{syn}}$ synthetic draws.}
\codexedit{Decompose the error via the triangle inequality:}
\begin{align}
|\widehat{\tau}^{\mathrm{syn}} - \tau|
&\le \underbrace{|\tau - \tau_{\phi,S}|}_{\text{(iii) approximation}}
+ \underbrace{|\tau_{\phi,S} - \widehat\tau(q^\star)|}_{\text{= 0 by definition}}
+ \underbrace{|\widehat\tau(q^\star) - \widehat\tau(q(D))|}_{\text{(i) sampling}} \notag \\
&\quad + \underbrace{|\widehat\tau(q(D)) - \widehat\tau(\widetilde q)|}_{\text{(ii) privacy}}
+ \underbrace{|\widehat\tau(\widetilde q)-\widehat\tau(q(P^{\mathrm{syn}}))|}_{\text{(v) calibration}}
+ \underbrace{|\widehat\tau(q(P^{\mathrm{syn}}))-\widehat\tau^{\mathrm{syn}}|}_{\text{(iv) Monte Carlo}},
\end{align}
\codexedit{where $\tau_{\phi,S}$ is the target represented by the retained workload coordinates.}

\emph{Term (i):} By the CLT, $\norm{q(D)-q^\star}_\infty = O_p(n^{-1/2})$ (concentration of bounded averages).
\codexedit{By the estimator Lipschitz condition, $|\widehat\tau(q^\star)-\widehat\tau(q(D))| \le L_{\mathrm{est}}O_p(n^{-1/2})$.}

\emph{Term (ii):} \codexedit{By Theorem~\ref{thm:dp_moment}, $\norm{\widetilde q - q(D)}_\infty = O_p(H_q\sqrt{\log m \cdot \log(1/\del)}/(n\eps))$.}
\codexedit{Applying the same Lipschitz condition gives the privacy term.}

\emph{Term (iii):} This is $\mathrm{Approx}(\phi;S) := |\tau - \tau_{\phi,S}|$, the bias from representing outcome and propensity models in a finite retained basis.
This term is zero when $m_t$ and $e$ are linear in $\phi$ and nonzero otherwise; its magnitude depends on the richness of $\phi$.

\emph{Term (iv):} The synthetic data estimator $\widehat\tau^{\mathrm{syn}}$ is computed from $n_{\mathrm{syn}}$ i.i.d.\ draws from $P^{\mathrm{syn}}$.
By the CLT for the synthetic population, this contributes $O_p(n_{\mathrm{syn}}^{-1/2})$.

\begin{credblock}
\emph{Term (v), calibration:}
\codexedit{If exact matching is infeasible because the measurements are noisy, define the relaxation residual directly as $\Delta_{\mathrm{cal}}(S):=\Pi_S\{\widetilde q-q(P^{\mathrm{syn}})\}$.}
\codexedit{The estimator Lipschitz condition bounds the calibration term by}
\[
\codexedit{
|\widehat\tau(\widetilde q)-\widehat\tau(q(P^{\mathrm{syn}}))|
\le L_{\mathrm{est}}\norm{\Delta_{\mathrm{cal}}(S)}_2
= \mathrm{CalGap}(S,\sigma).}
\]

Combining all five terms yields the stated bound.
\qed

\begin{corollary}[SNR thresholding controls the calibration gap]
\label{cor:snr_calgap}
\codexedit{Let $S_\tau=\{a:\ |\widetilde q_a|/\sigma_a\ge\tau_{\mathrm{SNR}}\}$ be the retained set, where $\sigma_a$ is the Gaussian-mechanism standard deviation of coordinate $a$.}
Suppose the max-entropy solver is run only on $S_\tau$ and stopped at noise-level tolerance
\begin{equation}
\codexedit{|\widetilde q_a-q_a(P^{\mathrm{syn}})| \le c_{\mathrm{cal}}\,\sigma_a
\qquad\text{for all } a\in S_\tau,}
\label{eq:cal-tolerance}
\end{equation}
for a fixed numerical constant $c_{\mathrm{cal}}$.
\codexedit{If $\sigma_a\le\bar\sigma$ on $S_\tau$, then}
$\codexedit{\mathrm{CalGap}(S_\tau,\sigma)\le L_{\mathrm{est}} c_{\mathrm{cal}}\bar\sigma\sqrt{m_{\mathrm{kept}}}}$ with $\codexedit{m_{\mathrm{kept}}}=|S_\tau|$; in the homoskedastic case $\bar\sigma=\sigma$, so $\codexedit{\mathrm{CalGap}(S_\tau,\sigma)=O(L_{\mathrm{est}}\sigma\sqrt{m_{\mathrm{kept}}})}$.
\end{corollary}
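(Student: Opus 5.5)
The bound follows directly from the definition of $\mathrm{CalGap}$ combined with the tolerance condition \eqref{eq:cal-tolerance}, using only an elementary comparison of $\ell_\infty$ and $\ell_2$ norms. Recall that $\mathrm{CalGap}(S_\tau,\sigma)=L_{\mathrm{est}}\norm{\Pi_{S_\tau}\{\widetilde q-q(P^{\mathrm{syn}})\}}_2$. I will write $\Delta:=\Pi_{S_\tau}\{\widetilde q-q(P^{\mathrm{syn}})\}\in\R^{m_{\mathrm{kept}}}$, whose coordinates are indexed by $a\in S_\tau$.

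\textbf{Step 1 (coordinatewise control).} By \eqref{eq:cal-tolerance}, each coordinate satisfies $|\Delta_a|\le c_{\mathrm{cal}}\sigma_a$. The hypothesis $\sigma_a\le\bar\sigma$ on $S_\tau$ then gives $\norm{\Delta}_\infty\le c_{\mathrm{cal}}\bar\sigma$.

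\textbf{Step 2 (norm comparison).} Since $\Delta$ has exactly $m_{\mathrm{kept}}=|S_\tau|$ coordinates,
\[
\norm{\Delta}_2=\Bigl(\sum_{a\in S_\tau}\Delta_a^2\Bigr)^{1/2}\le\sqrt{m_{\mathrm{kept}}}\,\norm{\Delta}_\infty\le c_{\mathrm{cal}}\bar\sigma\sqrt{m_{\mathrm{kept}}}.
\]
Multiplying by $L_{\mathrm{est}}$ gives the stated bound. In the homoskedastic Gaussian mechanism of Algorithm~\ref{alg:synthesis}, every coordinate has scale $\sigma$, so we may take $\bar\sigma=\sigma$ and read off the $O(L_{\mathrm{est}}\sigma\sqrt{m_{\mathrm{kept}}})$ form, treating $c_{\mathrm{cal}}$ as a numerical constant.

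\textbf{Main obstacle.} The arithmetic is trivial. The only point needing care is to confirm that the selection rule defining $S_\tau$ does not interfere with the argument. Step 1 uses only the solver tolerance on $S_\tau$, which holds for whatever set is retained, so the data-dependent choice of $S_\tau$ does not matter. Nor does the fact that thresholding depends on the noisy answers themselves; that dependence affects only which coordinates are dropped, and their effect is absorbed into $\mathrm{Approx}(\phi;S_\tau)$ in Theorem~\ref{thm:ate_bound}, as noted in Remark~\ref{rem:calibration_gap}. I would also remark that the bound is deterministic given the tolerance event, so no probability statement is needed.
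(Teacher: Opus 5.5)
Your proof is correct and matches the paper's: both sum the squared tolerance bounds over $S_\tau$ and then use $\sigma_a\le\bar\sigma$ together with $|S_\tau|=m_{\mathrm{kept}}$. The only cosmetic difference is the order of the two steps. The paper first records the intermediate bound $L_{\mathrm{est}}c_{\mathrm{cal}}(\sum_{a\in S_\tau}\sigma_a^2)^{1/2}$ and only then replaces $\sigma_a$ by $\bar\sigma$, whereas you bound by $\bar\sigma$ first via the $\ell_\infty$ norm and then compare $\ell_\infty$ with $\ell_2$.
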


\begin{proof}
By definition of $\mathrm{CalGap}$ and the tolerance \eqref{eq:cal-tolerance},
\[
\mathrm{CalGap}(S_\tau,\sigma)
\codexedit{= L_{\mathrm{est}}\Bigl(\sum_{a\in S_\tau}|\widetilde q_a-q_a(P^{\mathrm{syn}})|^2\Bigr)^{1/2}
\le L_{\mathrm{est}} c_{\mathrm{cal}}\Bigl(\sum_{a\in S_\tau}\sigma_a^2\Bigr)^{1/2},}
\]
and the claim follows from \codexedit{$\sigma_a\le\bar\sigma$} and $\codexedit{|S_\tau|=m_{\mathrm{kept}}}$.
The tolerance condition is essential: thresholding alone cannot bound arbitrary optimization error; the guarantee is for the SNR-thresholded, noise-tolerant calibration rule.
Moments excluded by $S_\tau$ are not counted in $\mathrm{CalGap}(S_\tau,\sigma)$ because the solver no longer matches them; their effect moves into $\mathrm{Approx}(\phi;S_\tau)$, which is the bias--variance tradeoff measured in the workload-dimension ablation.
\end{proof}
\end{credblock}

\section{Bias-Aware NA+MI Interval and the SNR Deployment Diagnostic}\label{app:bias-aware}
\begin{credblock}

{\color{green}This section gives the coverage-corrected interval motivated in} Appendix~\ref{app:discussion}\cred{; it turns the coverage--privacy paradox into a measurable diagnostic and a conservative correction}.
Throughout, $\widehat\tau_{\mathrm{NA+MI}}$ is the NA+MI point estimate, $\sigma_{\mathrm{NA+MI}}^2$ its Rubin-rules variance (which accounts for DP moment noise), and $\widehat{\mathrm{Approx}}(\phi)$ an estimate of the workload approximation bias defined below.

\begin{proposition}[Bias-aware NA+MI interval]
\label{prop:bias_aware}
Suppose the estimator admits the expansion
$\widehat\tau_{\mathrm{NA+MI}}-\tau = b_\phi+\sigma_{\mathrm{NA+MI}}Z+o_p(\sigma_{\mathrm{NA+MI}}+|b_\phi|)$ with $Z\Rightarrow\mathcal N(0,1)$, where $b_\phi$ is the leading workload approximation bias, and that the diagnostic is conservative: $\Pr\{|b_\phi|\le\widehat{\mathrm{Approx}}(\phi)+o_p(\sigma_{\mathrm{NA+MI}})\}\to 1$.
Define $\widetilde\sigma^2=\sigma_{\mathrm{NA+MI}}^2+\widehat{\mathrm{Approx}}(\phi)^2$ and $\mathrm{CI}_{\mathrm{corr}}(\alpha)=\widehat\tau_{\mathrm{NA+MI}}\pm z_{1-\alpha/2}\,\widetilde\sigma$.
Then for confidence levels with $z_{1-\alpha/2}\ge\sqrt{3}$ (including standard 95\% intervals), $\mathrm{CI}_{\mathrm{corr}}(\alpha)$ has asymptotic coverage at least $1-\alpha$.
When $b_\phi=0$ it reduces to the usual NA+MI interval; when $|b_\phi|\gg\sigma_{\mathrm{NA+MI}}$ it becomes conservative rather than collapsing around a biased center.
\end{proposition}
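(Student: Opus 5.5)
The plan is to reduce the coverage statement to a deterministic inequality about a shifted Gaussian interval, and then verify that inequality using the condition $z_{1-\alpha/2}\ge\sqrt3$. Write $s=\sigma_{\mathrm{NA+MI}}$, $A=\widehat{\mathrm{Approx}}(\phi)$ and $z=z_{1-\alpha/2}$. The true $\tau$ lies in $\mathrm{CI}_{\mathrm{corr}}(\alpha)$ exactly when $|\widehat\tau_{\mathrm{NA+MI}}-\tau|\le z\sqrt{s^2+A^2}$. Substituting the assumed expansion and dividing by $s$, this event becomes
\[
|\beta+Z+o_p(1+|\beta|)|\le z\sqrt{1+r^2},\qquad \beta=b_\phi/s,\ r=A/s .
\]
The conservativeness assumption on the diagnostic says that, with probability tending to one, $|\beta|\le r+o_p(1)$. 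By Slutsky's lemma, continuity of $\Phi$, and passing to subsequences along which $(\beta,r)$ converge (possibly to $+\infty$), it suffices to prove a deterministic bound. Namely, for every $r\ge0$ and every $|\beta|\le r$,
\[
g(\beta,r):=\Phi\bigl(z\sqrt{1+r^2}-\beta\bigr)-\Phi\bigl(-z\sqrt{1+r^2}-\beta\bigr)\;\ge\;1-\alpha=g(0,0).
\]
Since $\sigma_{\mathrm{NA+MI}}$ may shrink relative to $|b_\phi|$, I will write the $o_p$ remainders relative to $s+|b_\phi|$. The divergent case $r\to\infty$ then needs a separate check: there the half-width exceeds $zr\ge\sqrt3\,r$, which is far larger than the shift $|\beta|+O(1)\le r+O(1)$, so coverage tends to one.

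Next I reduce to the worst case over the bias. For fixed half-width $w$, the map $\beta\mapsto\Phi(w-\beta)-\Phi(-w-\beta)$ is even and decreasing in $|\beta|$, because the Gaussian density is symmetric and unimodal. So the worst case is $\beta=r$, and the claim reduces to the single-variable inequality
\[
f(r):=\Phi\bigl(z\sqrt{1+r^2}-r\bigr)-\Phi\bigl(-z\sqrt{1+r^2}-r\bigr)\ge f(0)\qquad\text{for all } r\ge0 .
\]
The two remarks at the end of the statement come for free. When $b_\phi=0$ and $A\to0$, the interval is the usual NA+MI interval. When $|b_\phi|\gg s$, the large-$r$ argument above shows the interval is conservative.

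The main obstacle is proving $f(r)\ge f(0)$ for all $r\ge0$, and this is where $z^2\ge3$ must enter. A Taylor expansion at $r=0$ is degenerate: the loss from the shift is $z\varphi(z)r^2$, and the gain from the wider interval is also $z\varphi(z)r^2$. So the second-order terms cancel, and the sign is decided at fourth order. I would first compute the $r^4$ coefficient of $f$, using $\varphi'(x)=-x\varphi(x)$ and $\varphi'''(x)=(3x-x^3)\varphi(x)$. My expectation is that it is a positive multiple of $z(z^2-3)\varphi(z)$, which is nonnegative exactly when $z\ge\sqrt3$; this would explain the threshold.

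For a global statement I would then study the sign of the derivative
\[
f'(r)=\varphi(u)\Bigl(\tfrac{zr}{\sqrt{1+r^2}}-1\Bigr)+\varphi(\ell)\Bigl(\tfrac{zr}{\sqrt{1+r^2}}+1\Bigr),\qquad u=z\sqrt{1+r^2}-r,\ \ell=-z\sqrt{1+r^2}-r .
\]
The goal is to show that $f$ is nondecreasing, or at least never drops below $f(0)$. When $zr\ge\sqrt{1+r^2}$ both terms are nonnegative, so the only delicate range is small $r$. There I would compare $\varphi(u)$ with $\varphi(\ell)$ through the ratio $\varphi(\ell)/\varphi(u)=e^{-2zr\sqrt{1+r^2}}$, and bound the resulting elementary function using $z^2\ge3$. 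If that route proves unwieldy, the fallback is a direct argument for $z\ge1.96$: check the compact range of small $r$ numerically and certify it with the fourth-order expansion, and handle large $r$ analytically.
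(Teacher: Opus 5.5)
Your reduction is exactly the paper's route: divide by $\sigma_{\mathrm{NA+MI}}$, take the worst case $|b_\phi|=\widehat{\mathrm{Approx}}(\phi)$ by symmetry and unimodality, then prove $f(r)\ge f(0)$ from the sign of $f'$ and the density ratio $\varphi(\ell)/\varphi(u)=e^{-2zr\sqrt{1+r^2}}$. Your asymptotic bookkeeping is more careful than the paper's, which simply drops the lower-order term; this covers the remainders relative to $\sigma_{\mathrm{NA+MI}}+|b_\phi|$ and the separate $r\to\infty$ regime.

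The gap is that you stop at the one step that carries the content. In the range $zr<\sqrt{1+r^2}$ you only say you would ``bound the resulting elementary function using $z^2\ge3$.'' Your fallback, numerics plus a fourth-order expansion at $z=1.96$, would certify a single confidence level, not all $z_{1-\alpha/2}\ge\sqrt3$ as the statement claims.

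The missing idea is a change of variables that turns the sign condition into a comparison of two power series. Put $y=zr/\sqrt{1+r^2}$, so $0\le y<1$ in the delicate range. Then $1+r^2=(1-y^2/z^2)^{-1}$ and $zr\sqrt{1+r^2}=y/(1-y^2/z^2)$. So $f'(r)\ge0$, which is equivalent to $e^{-2zr\sqrt{1+r^2}}\ge(1-y)/(1+y)$, becomes
\[
\operatorname{arctanh}(y)=\sum_{k\ge0}\frac{y^{2k+1}}{2k+1}\;\ge\;\sum_{k\ge0}\frac{y^{2k+1}}{z^{2k}}=\frac{y}{1-y^2/z^2}.
\]
This holds termwise because $z^{2k}\ge3^k\ge2k+1$, and both series converge since $y<1<z$. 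Hence $f$ is nondecreasing on $[0,\infty)$, and $f(r)\ge f(0)=1-\alpha$.

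Your predicted fourth-order coefficient is correct: it equals $\varphi(z)\,z(z^2-3)/6$. It also shows something the paper does not state. For $z<\sqrt3$, the worst-case coverage falls strictly below $1-\alpha$ for small $r>0$, so the $\sqrt3$ threshold is sharp for this interval. This sharpness is the same phenomenon as the $k=1$ term of the series comparison above, whose sign is that of $1/3-1/z^2$.
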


\begin{proof}
Condition on the event $|b_\phi|\le A+o_p(\sigma_{\mathrm{NA+MI}})$ with $A=\widehat{\mathrm{Approx}}(\phi)$.
Ignoring the lower-order term, coverage is bounded below by
$\Pr\{|\sigma_{\mathrm{NA+MI}}Z+b_\phi|\le z_{1-\alpha/2}\sqrt{\sigma_{\mathrm{NA+MI}}^2+A^2}\}$, which for fixed $A$ is minimized over $|b_\phi|\le A$ at $|b_\phi|=A$, where it equals
\[
f(a)=\Phi\bigl(zs(a)-a\bigr)+\Phi\bigl(zs(a)+a\bigr)-1,
\]
with $z=z_{1-\alpha/2}$, $a=A/\sigma_{\mathrm{NA+MI}}$, and $s(a)=\sqrt{1+a^2}$.
Differentiating,
\[
f'(a)=\phi\bigl(zs(a)-a\bigr)\Bigl(\tfrac{za}{s(a)}-1\Bigr)+\phi\bigl(zs(a)+a\bigr)\Bigl(\tfrac{za}{s(a)}+1\Bigr),
\]
where $\phi$ denotes the standard normal density.
If $za/s(a)\ge 1$ then $f'(a)\ge 0$ directly.
Otherwise $f'(a)\ge 0$ is equivalent to $\exp\{-2zas(a)\}\ge\frac{1-za/s(a)}{1+za/s(a)}$, i.e., writing $y=za/s(a)$, to $\operatorname{arctanh}(y)\ge y/(1-y^2/z^2)$.
For $z^2\ge 3$ this holds termwise: $\operatorname{arctanh}(y)=\sum_{k\ge0}y^{2k+1}/(2k+1)$, $y/(1-y^2/z^2)=\sum_{k\ge0}y^{2k+1}/z^{2k}$, and $z^{2k}\ge 3^k\ge 2k+1$ for $k\ge 1$.
Hence $f$ is nondecreasing on $a\ge0$ whenever $z\ge\sqrt{3}$, so $f(a)\ge f(0)=2\Phi(z)-1=1-\alpha$.
\end{proof}

\paragraph{Estimating $\widehat{\mathrm{Approx}}(\phi)$.}
Let $\widehat\psi_\phi(W)$ be the estimated orthogonal ATE score with nuisances fitted in the $\phi$ basis, centered at $\widehat\tau_{\mathrm{NA+MI}}$, and let $b(X)\in\R^r$ be a fixed diagnostic dictionary richer than $\phi$, scaled so that $\E[b(X)b(X)^\top]\approx I$ under the synthetic distribution.
With the residual score moment
$\widehat R_\phi=\bigl\lVert n_{\mathrm{syn}}^{-1}\sum_i\widehat\psi_\phi(W_i^{\mathrm{syn}})\,b(X_i^{\mathrm{syn}})\bigr\rVert_2$,
a practical conservative choice is $\widehat{\mathrm{Approx}}(\phi)=\widehat L_{\mathrm{diag}}\widehat R_\phi$, where $\widehat L_{\mathrm{diag}}$ is the same empirical Lipschitz/overlap constant used to score candidate features in \textsc{Causal-AIM}.
Large residual moments indicate that the current workload leaves systematic orthogonal-score structure unexplained{\color{green} in the regime at high }$\eps${\color{green} where} DP noise shrinks but approximation bias remains.
Both quantities are computable from the synthetic release and a public pilot or held-out covariate sample, so the ratio $\widehat{\mathrm{Approx}}(\phi)/\sigma_{\mathrm{NA+MI}}$ can be tracked at deployment.

\paragraph{Practical rule.}
Keep moment $j$ iff $\mathrm{SNR}_j:=|\widetilde q_j|/\sigma_j\ge\tau_{\mathrm{SNR}}$ (default $\tau_{\mathrm{SNR}}=3$) as the calibration filter, and report the corrected interval $\widehat\tau_{\mathrm{NA+MI}}\pm z_{1-\alpha/2}(\sigma_{\mathrm{NA+MI}}^2+\widehat{\mathrm{Approx}}(\phi)^2)^{1/2}$.
If $\widehat{\mathrm{Approx}}(\phi)\ll\sigma_{\mathrm{NA+MI}}$, the correction is negligible and DP noise dominates; if $\widehat{\mathrm{Approx}}(\phi)\gtrsim\sigma_{\mathrm{NA+MI}}$, the analysis is approximation-dominated{\color{green}, which is the diagnostic signature} of the coverage--privacy paradox at high $\eps$.

\paragraph{Connection to robust Bayes.}
The correction treats workload approximation as local model misspecification: NA+MI accounts for posterior uncertainty induced by DP noise, while the $\widehat{\mathrm{Approx}}(\phi)^2$ term enlarges the interval to cover discrepancy between the working workload model and the data-generating law.
\cred{This correction is the additive, workload-aware analogue of robust-Bayes and misspecification-aware posterior calibration \citep{bissiri2016genbayes,watson2016approxmodels,grunwald2017safebayes,miller2018coarsened}, which inflate posterior spread to remain valid under model misspecification.}
\end{credblock}

\begin{credblock}
\section{NA+MI: Detailed Pseudocode}\label{app:na-mi}

Algorithm~\ref{alg:nami} {\color{green}gives an implementation-level version of} the noise-aware multiple-imputation procedure of Section~\ref{sec:uq}.

\begin{algorithm}[h]
\caption{Noise-Aware Multiple Imputation (NA+MI)}
\label{alg:nami}
\begin{algorithmic}[1]
\Require \codexedit{Measured DP moments $\widetilde q_S$ on coordinates $S$}, mechanism noise covariance $\Sigma_S$, number of draws $M$, synthetic size $n_{\mathrm{syn}}$, level $\alpha$, optional SNR threshold $\tau_{\mathrm{SNR}}$
\State \codexedit{Compute the retained set $S_\tau=\{j\in S:|\widetilde q_j|/\sigma_j\ge\tau_{\mathrm{SNR}}\}$ once from $\widetilde q_S$ (take $S_\tau=S$ if thresholding is off)}
\For{$\codexedit{\ell}=1,\dots,M$}
    \State Draw $\codexedit{q_{S}^{(\ell)}\sim\N(\widetilde q_S,\Sigma_S)}$ \Comment{flat-prior Gaussian posterior on measured coordinates}
    \State $\codexedit{P^{\mathrm{syn},(\ell)}}\leftarrow$ max-entropy calibration of \eqref{eq:iprojection} on $\Pi_{S_\tau}\codexedit{q_{S}^{(\ell)}}$ (Algorithm~\ref{alg:synthesis})
    \State $\codexedit{D^{\mathrm{syn},(\ell)}}\leftarrow n_{\mathrm{syn}}$ ancestral-sampling draws from $\codexedit{P^{\mathrm{syn},(\ell)}}$
    \State $(\codexedit{\widehat\tau^{(\ell)},\widehat v^{(\ell)}})\leftarrow$ doubly robust ATE estimate and its variance estimate on $\codexedit{D^{\mathrm{syn},(\ell)}}$
\EndFor
\State $\bar\tau\leftarrow \codexedit{M^{-1}\sum_\ell\widehat\tau^{(\ell)}}$;\quad $W_M\leftarrow \codexedit{M^{-1}\sum_\ell\widehat v^{(\ell)}}$;\quad $B_M\leftarrow \codexedit{(M-1)^{-1}\sum_\ell(\widehat\tau^{(\ell)}-\bar\tau)^2}$
\State $T_M\leftarrow W_M+(1+1/M)B_M$;\quad $\nu_M\leftarrow (M-1)\bigl(1+\tfrac{W_M}{(1+1/M)B_M}\bigr)^2$ \Comment{Rubin's rules \citep{rubin1987mi}}
\Ensure Point estimate $\bar\tau$ and interval $\bar\tau\pm t_{\nu_M,1-\alpha/2}\sqrt{T_M}$
\end{algorithmic}
\end{algorithm}

The between-imputation component $B_M$ is what widens the interval as $\eps$ decreases: smaller $\eps$ means larger DP noise, more dispersed posterior draws $\codexedit{q_S^{(\ell)}}$, and hence more variable $\codexedit{\widehat\tau^{(\ell)}}$.
The bias-aware variant of Appendix~\ref{app:bias-aware} replaces $T_M$ by $T_M+\widehat{\mathrm{Approx}}(\phi)^2$.
\end{credblock}

\begin{credblock}
\section{Experimental Protocol and Supporting Results}\label{app:exp-details}

\begin{codexblock}

\subsection{Feature Construction and Defaults}

For each dataset, continuous covariates are discretized into $L=5$ quantile bins and the resulting indicator variables define $\phi$.
When quantile edges coincide, bins collapse, so the effective dimension is data-dependent; IHDP yields $p=48$ at $L=5$.
The small-$L$ default is consistent with practice in DP quantile estimation, where a handful of privately estimated quantiles per variable is reliable at moderate budgets \citep{gillenwater2021dpquantiles,kaplan2022dpapproxquantiles,lei2011dpmest}.

Unless stated otherwise, experiments use $n_{\mathrm{syn}}=10n$ synthetic records per release, $M=20$ MI draws, no max-entropy calibration ridge ($\alpha_{\mathrm{cal}}=0$), and no SNR thresholding in the main pipeline.
Outcomes are standardized by fixed public constants and clipped at $B=5$; constants and clip fractions are reported in Appendix~\ref{app:acs-dgp}.
The non-private oracle uses unstandardized, unclipped outcomes.
\cred{Estimated propensities in the DR analysis are clipped to $[0.02,0.98]$ (a conservative implementation bound, distinct from the positivity constant $\eta$ of Assumption~\ref{assump:causal} and from the DGP overlap parameter swept in the overlap ablation);} continuous-outcome DR nuisance fits use a fixed ridge linear model, and binary outcomes use logistic regression.
Main benchmark experiments use $n_{\mathrm{rep}}=500$ replications, adaptive and ACS studies use $100$, and ablations use $200$.
\dslp[red]{Every run logs its full configuration, code version, and data source.}

\subsection{Metrics}

All metrics are computed over $n_{\mathrm{rep}}$ independent replications.
Let $\widehat\tau^{(r)}$ be the ATE estimate, $\mathrm{CI}^{(r)}$ its 95\% confidence interval, and $\tau$ the true ATE.

\begin{itemize}
  \item \textbf{ATE bias}: $\mathrm{Bias}=\bigl|n_{\mathrm{rep}}^{-1}\sum_r\widehat\tau^{(r)}-\tau\bigr|$.
  \item \textbf{ATE RMSE}: $\mathrm{RMSE}=\{n_{\mathrm{rep}}^{-1}\sum_r(\widehat\tau^{(r)}-\tau)^2\}^{1/2}$.
  \item \textbf{CI coverage}: $\mathrm{Cov}=n_{\mathrm{rep}}^{-1}\sum_r\ind\{\tau\in\mathrm{CI}^{(r)}\}$, targeting the nominal 95\% level.
  \item \textbf{CI length}: $\mathrm{Len}=n_{\mathrm{rep}}^{-1}\sum_r|\mathrm{CI}^{(r)}|$.
  \item \textbf{Marginal fidelity}: $\mathrm{TVD}=d^{-1}\sum_{j=1}^d\mathrm{TV}(\hat P_j,\hat P_j^{\mathrm{syn}})$, the average total-variation distance across one-dimensional marginals.
\end{itemize}

\subsection{Experiment Grid and Ablations}

\begin{itemize}
  \item \textbf{Causal versus generic workloads}: compare ATE RMSE and bias for causal workload synthesis, MST, and AIM over $\eps\in\{0.5,1,2,5\}$ with $\delta=1/n^2$.
  \item \textbf{Coverage calibration}: compare naive intervals on generic and causal synthetic data against NA+MI intervals under the same privacy budgets.
  \item \textbf{Adaptive selection}: compare \textsc{Causal-AIM} against the fixed causal workload at the same total privacy budget, sweeping $K\in\{1,3,5,10\}$ adaptive rounds.
  \item \textbf{Workload dimension}: vary quantile-bin granularity $L\in\{3,5,10,20\}$, with and without SNR thresholding at $\tau_{\mathrm{SNR}}=3$.
  \item \textbf{MI draws}: vary $M\in\{5,10,20,50,100\}$.
  \item \textbf{Synthetic sample size}: vary $n_{\mathrm{syn}}/n\in\{1,2,5,10,50\}$.
  \item \textbf{Overlap}: vary the positivity constant $\eta$ in the ACIC DGP and measure the effect on RMSE and coverage.
\end{itemize}

\end{codexblock}

\end{credblock}

\section{ACS Semi-Synthetic DGP Details}\label{app:acs-dgp}

\begin{credblock}
\paragraph{Outcome standardization constants.}
For DP measurement, each dataset's outcome is standardized as $(Y-m_{\mathrm{pub}})/s_{\mathrm{pub}}$ using the fixed public constants below (treated as public benchmark metadata, consistent with the public clip bound $B$), then clipped at $B=5$; all reported estimates and intervals are converted back to original units.
The non-private oracle uses unstandardized, unclipped outcomes.
\begin{center}
\footnotesize
\begin{tabular}{lrrr}
\toprule
Dataset & $m_{\mathrm{pub}}$ & $s_{\mathrm{pub}}$ & clip fraction \\
\midrule
IHDP & 12.12 & 15.43 & \cred{$0.6\%$} \\
Twins & 0 & 1 & $0\%$ \\
ACIC & $-0.317$ & 5.706 & $0.1\%$ \\
LaLonde/NSW & 5{,}301 & 6{,}624 & $0.4\%$ \\
ACS & 0 & 1 & $1.4\%$ \\
\bottomrule
\end{tabular}
\end{center}
\end{credblock}

\cred{We use 20 demographic covariates from the 2018 California ACS via \texttt{folktables} \citep{ding2021retiring}: age, education, marital status, relationship, disability, employment status, citizenship, migration, Hispanic origin, ancestry, nativity, deafness, blindness, cognitive difficulty, sex, race, PUMA, state, class of worker, and place of birth; structurally missing categorical codes are encoded as their own category.}

\emph{Treatment assignment.}
Let $X_n$ denote column-standardized covariates.
We draw $\beta\sim\N(0,I_d)$, normalize to unit $\ell_2$ norm, and set the propensity score
$e(x) = \operatorname{expit}\!\bigl(\beta^\top x + 0.8\sin(x_1) - 0.4\,x_2 x_3\bigr)$,
\cred{rescaled to $[0.1, 0.9]$ to enforce overlap.}
Treatment is drawn as $T_i\sim\mathrm{Bernoulli}(e(X_i))$.

\emph{Outcome model.}
We draw $g\sim\N(0,I_d)$ (normalized) and define the baseline outcome
$\mu_0(x)=g^\top x + 0.6\cos(x_3) + 0.3(x_1^2 - x_2)$.
Individual treatment effects are
$\tau(x)=0.8 + 0.5\tanh(x_1) + 0.25\sin(x_4 - x_5)$.
Observed outcomes are $Y_i = \mu_0(X_i) + \tau(X_i)\,T_i + \varepsilon_i$ with $\varepsilon_i\sim\N(0,1.1^2)$.
\cred{The ground-truth ATE is the mean of the sampled potential-outcome differences, $\tau=n^{-1}\sum_i\{Y_i(1)-Y_i(0)\}$.}

\begin{credblock}
\section{Supplementary Experimental Figures}\label{app:figures}

All figures below supplement the main-text results in Section~\ref{sec:experiments}; they are ordered as benchmark diagnostics, ACS validation, design ablations, and the fidelity-versus-causal-utility diagnostic.
\cred{Bias patterns for the main benchmark comparison appear in Figure~\ref{fig:exp1_bias}.}

\begin{figure}[ht]
\centering
\includegraphics[width=\textwidth]{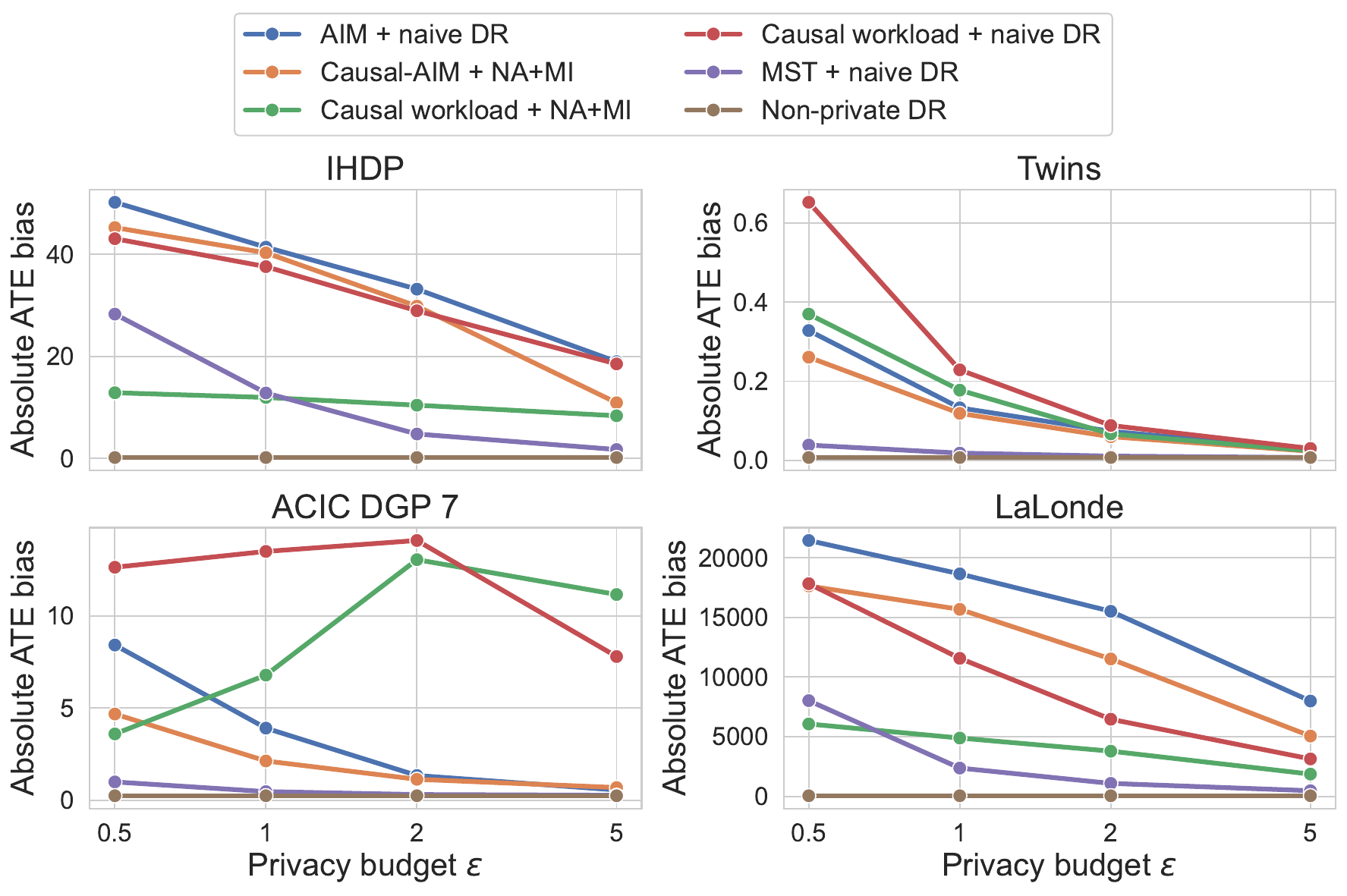}
\caption{\cred{Benchmark bias comparison:} \cred{Mean absolute ATE error ($n_{\mathrm{rep}}^{-1}\sum_r|\hat\tau^{(r)}-\tau|$)} across privacy budgets and four benchmark datasets ($n_{\mathrm{rep}}=500$).
Bias patterns by dataset and privacy level mirror the RMSE rankings in Figure~\ref{fig:exp1_rmse}.
\cred{At low $\eps$ ($\le 1$), all methods show substantial bias relative to the non-private oracle; MST + naive DR has the lowest bias on most datasets, \cred{with Causal + NA+MI lower on IHDP (and on LaLonde at $\eps=0.5$)}.}
As $\eps$ grows, all methods converge toward the oracle.}
\label{fig:exp1_bias}
\end{figure}

\begin{figure}[ht]
\centering
\includegraphics[width=\textwidth]{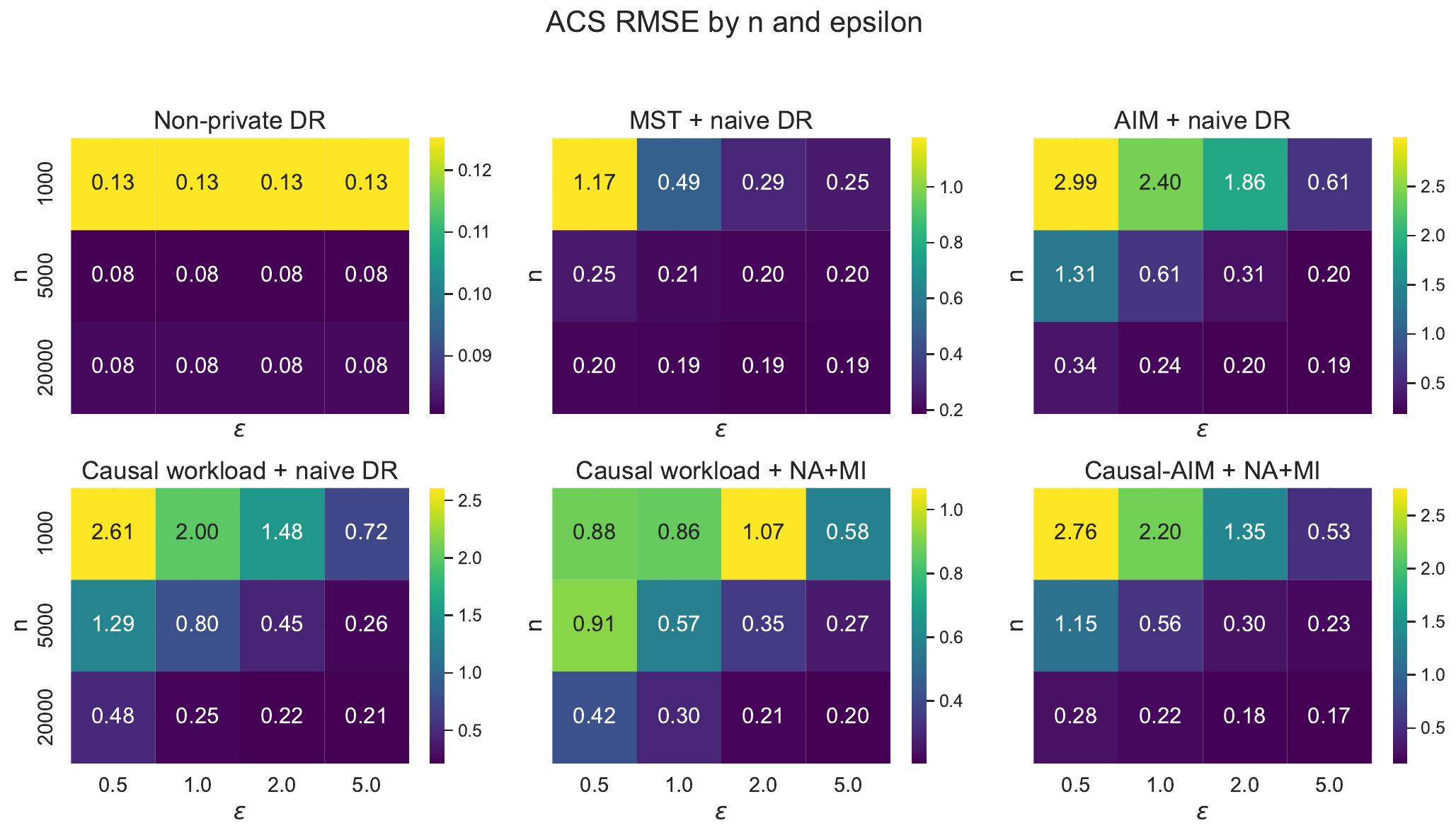}
\caption{\cred{ACS study:} ATE RMSE heatmaps on the ACS semi-synthetic study ($n_{\mathrm{rep}}=100$).
Rows index sample size $n\in\{1000,5000,20000\}$ and columns index $\eps\in\{0.5,1,2,5\}$.
Each panel shows one method.
At small $n$ and low $\eps$, all private methods have elevated RMSE; at large $n$ and $\eps$, all converge \cred{toward} the non-private oracle level.
\cred{NA+MI is competitive on RMSE at strict budgets (winning at $n{=}1000$, $\eps{=}0.5$) and is the only method with valid coverage (Figure~\ref{fig:exp4_acs_coverage}).}}
\label{fig:exp4_acs_rmse}
\end{figure}

\begin{figure}[ht]
\centering
\includegraphics[width=\textwidth]{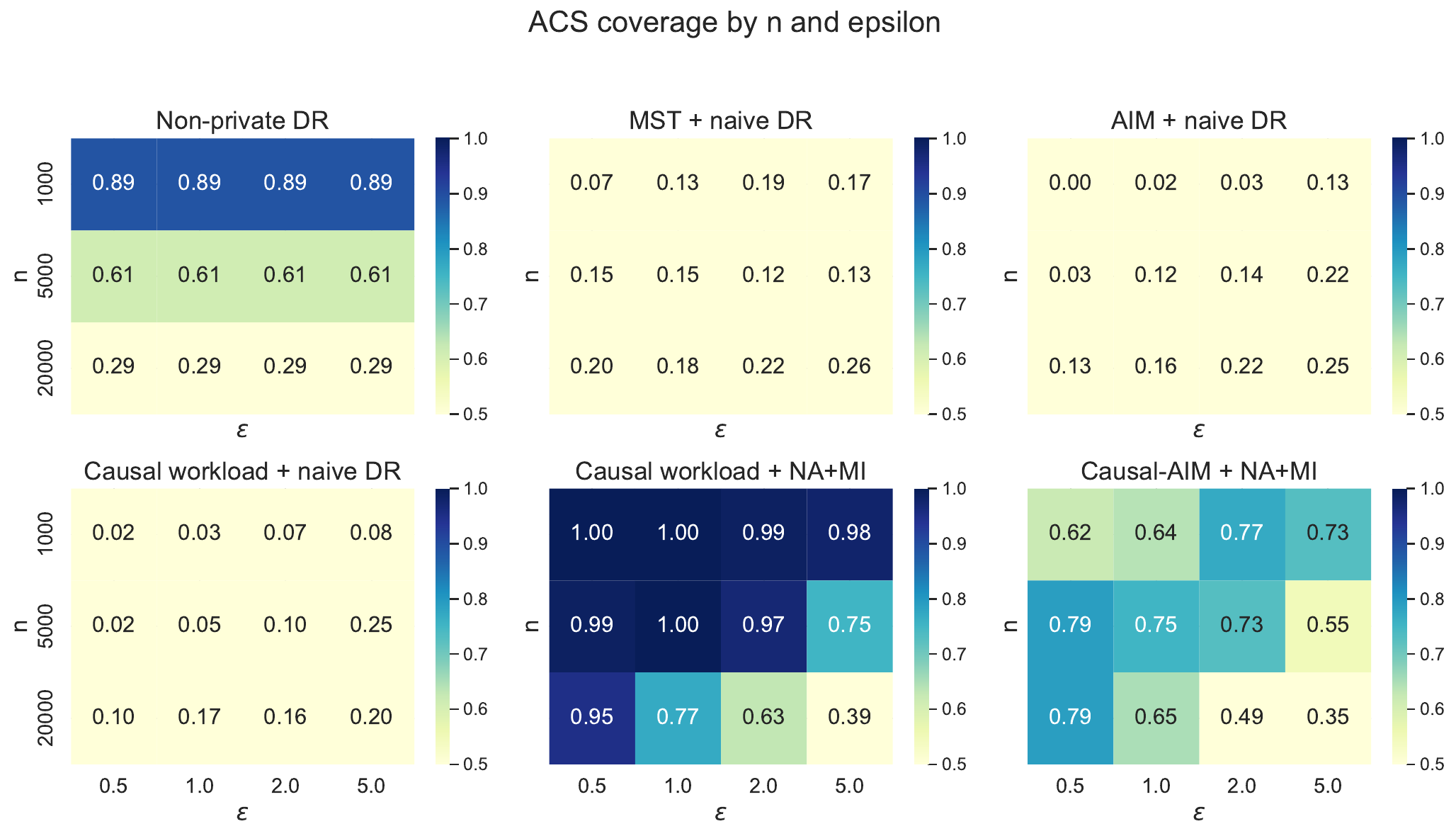}
\caption{\cred{ACS study:} Empirical 95\% CI coverage on the same ACS grid as Figure~\ref{fig:exp4_acs_rmse} ($n_{\mathrm{rep}}=100$).
Causal workload + NA+MI achieves coverage $1.00$ at $(n{=}1000,\,\eps{=}0.5)$, while MST + naive DR achieves only \cred{$0.07$}.
The coverage advantage of NA+MI is most pronounced at low $\eps$ and moderate $n$, \dslp[red]{where DP noise dominates the error}.}
\label{fig:exp4_acs_coverage}
\end{figure}

\begin{figure}[ht]
\centering
\includegraphics[width=0.78\textwidth]{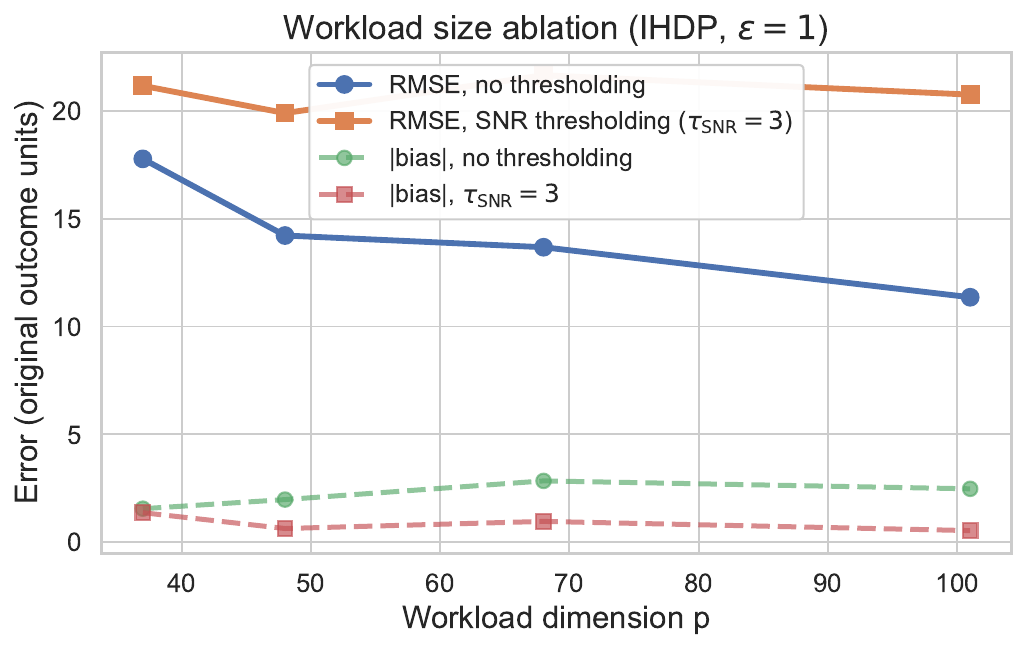}
\caption{\cred{Workload dimension on IHDP at $\eps=1$ ($n_{\mathrm{rep}}=200$), with and without SNR thresholding ($\tau_{\mathrm{SNR}}=3$). \cred{Without thresholding, RMSE decreases with $p$ in the tested range ($p\le 101$); with $\tau_{\mathrm{SNR}}=3$, RMSE is roughly flat, with lower bias and more variance}; \cred{the bias-dominated regime of Remark~\ref{rem:calibration_gap} is not reached on this dataset}.}}
\label{fig:ablation_dim}
\end{figure}

\begin{figure}[ht]
\centering
\includegraphics[width=\textwidth]{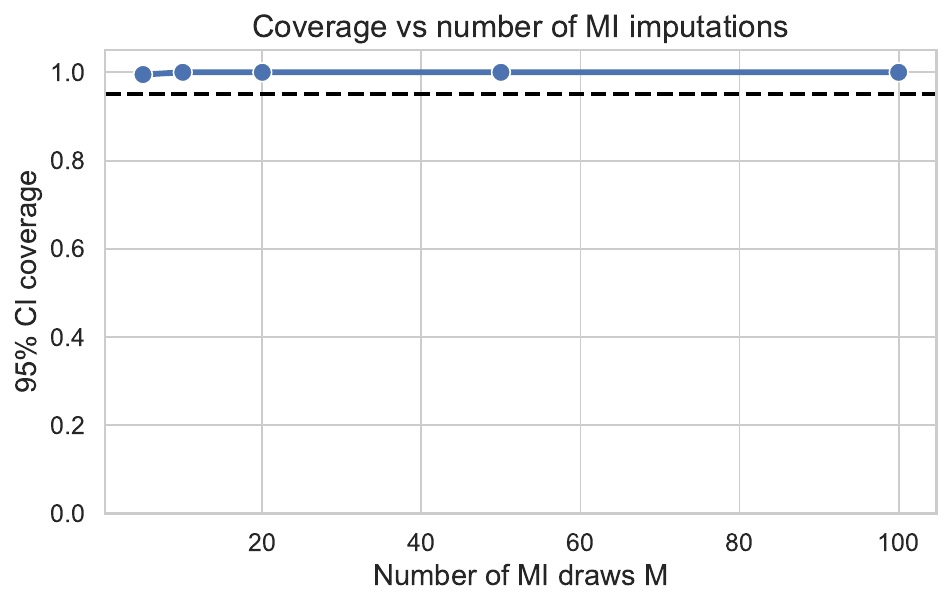}
\caption{MI draws ablation: empirical 95\% CI coverage versus number of imputation draws $M$ on IHDP at $\eps=1$ ($n_{\mathrm{rep}}=200$).
\cred{Coverage is at or above $0.995$ for every $M$ tested ($0.995$ at $M{=}5$, $1.000$ for $M\ge 10$): the interval is near-nominal already at small $M$.
We use $M{=}20$ as a conservative default for noise-aware MI.}}
\label{fig:ablation_mi}
\end{figure}

\begin{figure}[ht]
\centering
\includegraphics[width=\textwidth]{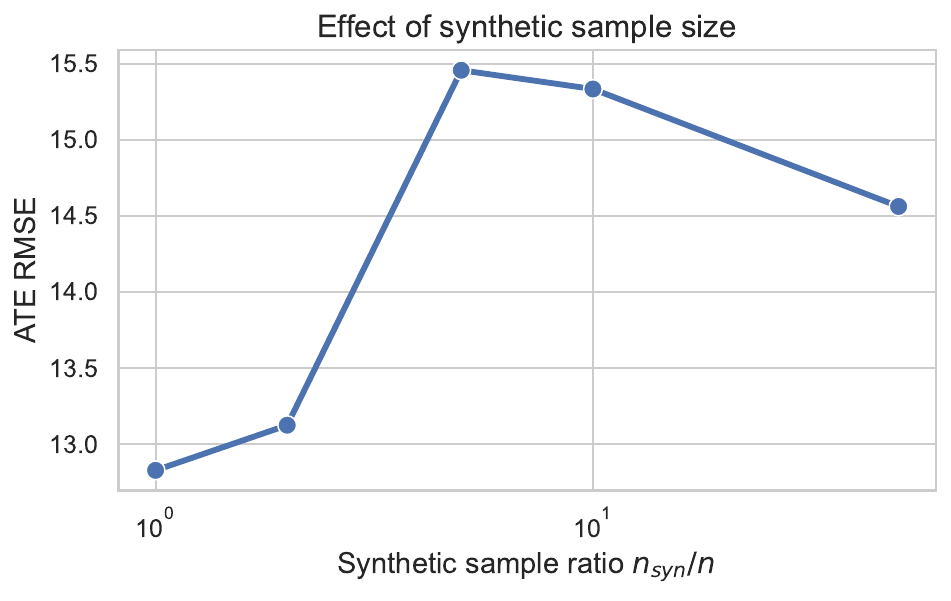}
\caption{Synthetic sample size ablation: ATE RMSE versus the ratio $n_{\mathrm{syn}}/n$ on IHDP at $\eps=1$ (\cred{$n_{\mathrm{rep}}=200$}).
\cred{RMSE varies modestly (range $12.8$--$15.5$) across ratios from $1$ to $50$, with no benefit from oversampling: the Monte Carlo term $O_p(n_{\mathrm{syn}}^{-1/2})$ in Theorem~\ref{thm:ate_bound} is dominated by DP noise already at $n_{\mathrm{syn}}=n$.
In practice, $n_{\mathrm{syn}}= n$ suffices.}}
\label{fig:ablation_nsyn}
\end{figure}

\begin{figure}[ht]
\centering
\includegraphics[width=\textwidth]{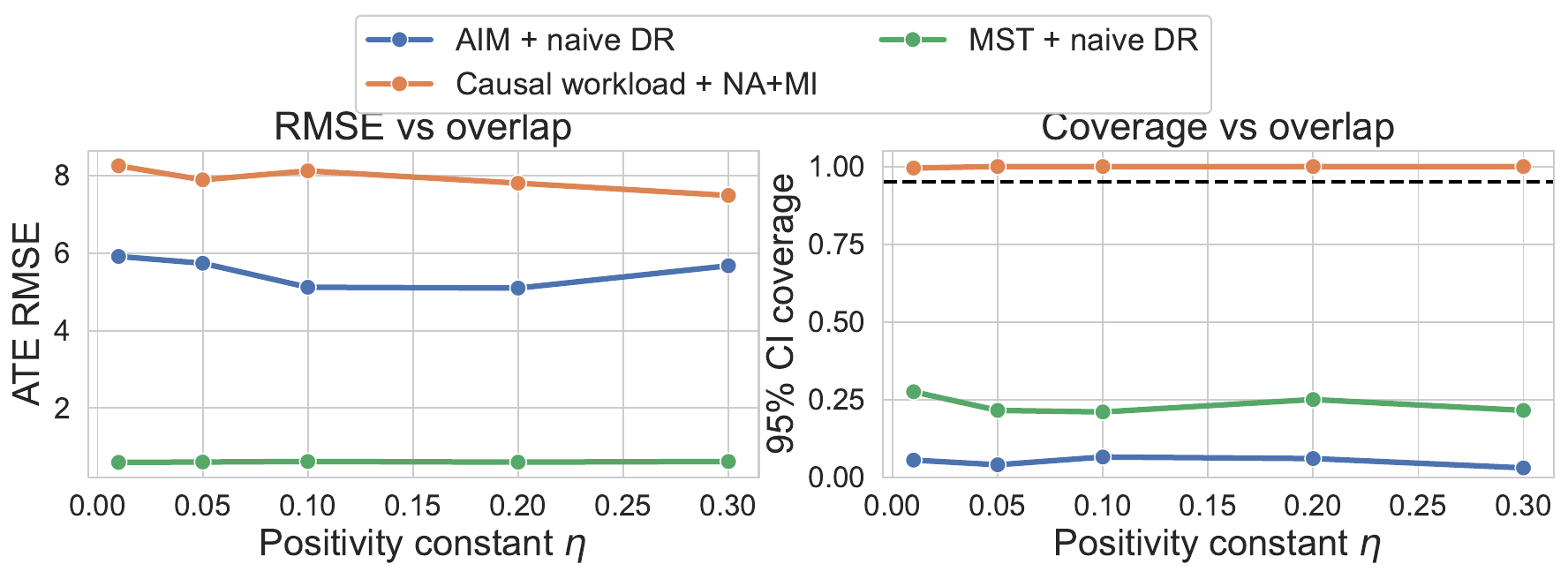}
\caption{Overlap ablation on ACIC DGP~7 at $\eps=1$ (\cred{$n_{\mathrm{rep}}=200$}).
The positivity constant $\eta$ varies from $0.3$ (strong overlap) to $0.01$ (near violation).
\cred{As overlap weakens, MST + naive DR maintains low RMSE ($0.60$--$0.62$) but poor coverage ($0.21$--$0.28$), while Causal workload + NA+MI has higher RMSE ($7.5$--$8.3$) with near-nominal coverage ($0.995$--$1.00$) at every $\eta$.}
This confirms that the coverage--RMSE tradeoff persists across overlap regimes and that NA+MI remains the only method with usable confidence intervals.}
\label{fig:ablation_overlap}
\end{figure}

\begin{figure}[ht]
\centering
\includegraphics[width=\textwidth]{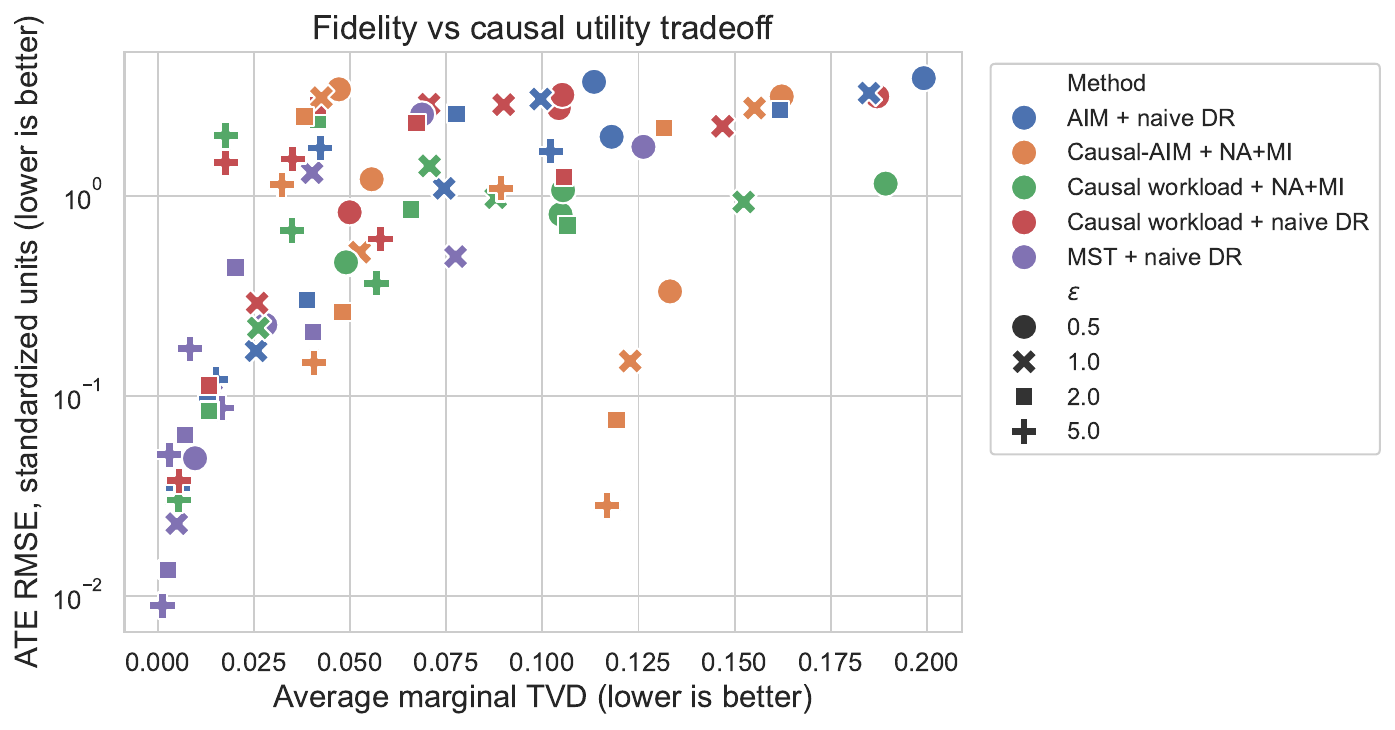}
\caption{Marginal fidelity versus causal utility across all method--dataset--$\eps$ configurations ($n_{\mathrm{rep}}=500$).
Each point represents one experimental cell; the $x$-axis is average per-marginal TVD and the $y$-axis is \cred{ATE RMSE in standardized outcome units (log scale), so datasets with different outcome scales are comparable}.
\cred{Generic workloads (MST) dominate on marginal fidelity (TVD) in nearly every configuration and on RMSE at $\eps\ge 2$; at $\eps=0.5$, causal workloads with NA+MI match or beat MST on RMSE on half the benchmarks (and on IHDP at $\eps=1$) despite worse TVD.
Distributional fidelity therefore does not order methods by causal utility; the case for causal workloads rests on noise-aware uncertainty quantification (Figure~\ref{fig:exp2_coverage}).}}
\label{fig:fidelity_tradeoff}
\end{figure}

\section{Supplementary Empirical Analyses}\label{app:promoted}

This appendix reports supplementary experiments; all use the defaults of Section~\ref{sec:experiments} at $n_{\mathrm{rep}}=200$ (scalability: $n_{\mathrm{rep}}=100$) unless noted.

\begin{codexblock}
\subsection{Multi-Estimand Reuse from One Synthetic Release}
From a \emph{single} DP synthetic release, we estimate the ATE, ATT, and a subgroup effect with NA+MI intervals and no additional privacy spending.
All eighteen dataset--budget--estimand cells attain nominal coverage (Table~\ref{tab:multi-estimand}).
\dslp[red]{Direct DP ATE estimators cannot offer this reuse: each answered query spends additional privacy budget.}

\begin{table}[!htbp]
\centering
\caption{Multi-estimand reuse from a single DP release: 95\% CI coverage (RMSE, original outcome units) of NA+MI intervals for ATE, ATT, and a subgroup effect, all computed from one synthetic dataset per replication ($n_{\mathrm{rep}}=200$).}
\label{tab:multi-estimand}
\begin{tabular}{llccc}
\toprule
Dataset & $\eps$ & ATE & ATT & Subgroup \\
\midrule
IHDP & 0.5 & 1.00 (15.98) & 1.00 (17.48) & 1.00 (15.44) \\
IHDP & 1 & 1.00 (15.63) & 1.00 (18.31) & 1.00 (14.96) \\
ACIC & 0.5 & 1.00 (4.36) & 1.00 (6.79) & 1.00 (4.45) \\
ACIC & 1 & 1.00 (7.57) & 1.00 (9.42) & 1.00 (7.59) \\
LaLonde/NSW & 0.5 & 1.00 (7{,}613) & 1.00 (8{,}042) & 1.00 (7{,}840) \\
LaLonde/NSW & 1 & 1.00 (6{,}688) & 1.00 (6{,}945) & 1.00 (6{,}849) \\
\bottomrule
\end{tabular}
\end{table}
\end{codexblock}

\subsection{Direct DP ATE Proxy Comparison}
\begin{table}[!htbp]
\centering
\caption{RMSE and coverage against output-perturbation \emph{proxies} of direct DP ATE estimators. The proxies apply DP only at the output rather than to the propensity fit, a weaker DP regime than the published methods \citep{schroder2025private,ohnishi2024covbal}; these results should therefore be read as a favorable contextual approximation of the direct-estimator family. Cells are RMSE / coverage, $n_{\mathrm{rep}}=200$.}
\label{tab:direct-dp}
\begin{tabular}{llccc}
\toprule
Dataset & $\eps$ & PrivATE-style & OA-style & Causal+NA+MI \\
\midrule
IHDP & 0.5 & 41.8 / 0.97 & 39.8 / 0.96 & 16.4 / 1.00 \\
IHDP & 1 & 18.6 / 0.98 & 22.0 / 0.96 & 14.8 / 1.00 \\
ACIC & 0.5 & 2.96 / 0.94 & 2.74 / 0.97 & 4.27 / 1.00 \\
ACIC & 1 & 1.47 / 0.93 & 1.34 / 0.96 & 8.30 / 1.00 \\
\bottomrule
\end{tabular}
\end{table}
The comparison (Table~\ref{tab:direct-dp}) is {\color{green}mixed}: the proxies win on single-estimand RMSE on ACIC, while Causal + NA+MI wins on IHDP and attains full coverage everywhere; and the synthetic-data release supports multiple estimands at shared privacy cost (Table~\ref{tab:multi-estimand}).

\subsection{Hybrid Causal and Generic Workloads}
A 50/50 split of the privacy budget between causal moments and generic one-way marginals ($n_{\mathrm{rep}}=200$; RMSE / coverage):
\begin{center}
\footnotesize
\begin{tabular}{llccc}
\toprule
Dataset & $\eps$ & Causal+NA+MI & Hybrid+NA+MI & MST + naive DR \\
\midrule
IHDP & 0.5 & 17.6 / 1.00 & 19.2 / 1.00 & 36.2 / 0.03 \\
IHDP & 1 & 14.8 / 0.99 & 18.7 / 1.00 & 18.5 / 0.01 \\
ACIC & 0.5 & 4.48 / 1.00 & 5.79 / 1.00 & 1.20 / 0.14 \\
ACIC & 1 & 8.13 / 1.00 & \textbf{5.17 / 1.00} & 0.64 / 0.22 \\
\bottomrule
\end{tabular}
\end{center}
The hybrid Pareto-improves on the fixed causal workload on ACIC at $\eps=1$ (lower RMSE at equal coverage) but trails on IHDP; it is an alternative when both point accuracy and coverage matter, not a new default.

\subsection{Adaptive Selection Operating Point}
On IHDP (\cred{$K\in\{1,2,3,5,10\}$,} $n_{\mathrm{rep}}=200$; RMSE / coverage), no $K$ recovers the fixed workload's calibration:
\begin{center}
\footnotesize
\begin{tabular}{lccccc}
\toprule
$\eps$ & $K{=}1$ & $K{=}2$ & $K{=}3$ & $K{=}5$ & $K{=}10$ \\
\midrule
0.5 & 42.2 / 0.02 & 50.7 / 0.11 & 50.8 / 0.20 & 50.3 / 0.31 & 49.1 / 0.60 \\
1 & 29.1 / 0.04 & 37.2 / 0.11 & 42.1 / 0.16 & 43.1 / 0.38 & 42.5 / 0.61 \\
2 & 14.6 / 0.17 & 22.7 / 0.14 & 31.5 / 0.11 & 34.4 / 0.44 & 39.8 / 0.64 \\
\bottomrule
\end{tabular}
\end{center}
Contrast with ACIC (Figure~\ref{fig:exp3_adaptive}): whether \textsc{Causal-AIM} helps is dataset-dependent\dslp[red]{, which is the operating-point message of Section~\ref{sec:experiments}}.

\subsection{Scalability in Covariate Dimension}
On ACS at $n=5000$, $\eps=1$, and $n_{\mathrm{rep}}=100$, increasing the number of covariates raises both RMSE and synthesis time, with mirror descent on the $\phi$-basis as the dominant cost.
\cred{The $d=40$ and $d=60$ configurations extend the 20 base ACS covariates with deterministic derived features (squares, pairwise interactions, and threshold indicators).}
\begin{center}
\footnotesize
\begin{tabular}{lcc}
\toprule
Covariates $d$ & RMSE & Median synthesis time \\
\midrule
20 & 0.56 & 2.9s \\
40 & 1.07 & 5.1s \\
60 & 1.38 & 8.5s \\
\bottomrule
\end{tabular}
\end{center}

\subsection{Calibration-Ridge Sensitivity}
On IHDP at $\eps=1$ ($n_{\mathrm{rep}}=200$), RMSE / coverage across \codexedit{max-entropy calibration ridge values $\alpha_{\mathrm{cal}}\in\{0.001,0.01,0.1,1.0\}$}: $14.9/1.00$, $14.2/1.00$, $15.3/1.00$, $18.6/1.00$.
\codexedit{Performance is insensitive to this solver regularization} over three orders of magnitude (mild degradation only at $\codexedit{\alpha_{\mathrm{cal}}=1}$); \codexedit{this ablation is distinct from the theoretical ridge parameter $\lambda$ in Corollary~\ref{cor:noise_ridge}, and SNR thresholding remains the main calibration filter (Remark~\ref{rem:calibration_gap}).}
\end{credblock}

\section{Additional Discussion, Limitations, and Future Work}\label{app:discussion}

\begin{codexblock}
\paragraph{Why generic fidelity is insufficient.}
Causal inference requires preserving conditional relationships such as $Y\mid T,X$, not only low-dimensional distributional marginals.
A synthetic dataset that matches $P(X,T)$ and $P(X,Y)$ but distorts $P(Y\mid T,X)$ can have good marginal fidelity and poor ATE accuracy.
The causal workload makes the target-specific requirement explicit by measuring the moments that identify the orthogonal score in the chosen feature basis.

\paragraph{Practical guidance.}
A practitioner should specify the target estimand, derive the corresponding orthogonal score and moment requirements, choose a feature map $\phi$ that approximates those moments, generate synthetic data with a fixed causal workload or \textsc{Causal-AIM}, and report NA+MI uncertainty.
On our benchmarks, richer feature maps helped throughout the tested range ($L$ up to 20 bins on IHDP), so the practical default is the richest $\phi$ the budget supports, using SNR thresholding and the measured $\mathrm{CalGap}$ as safeguards when moments approach the noise floor.

\paragraph{Coverage--RMSE tradeoff.}
The experiments reveal a tradeoff: MST often leads on point RMSE at moderate-to-loose budgets, but its naive confidence intervals are far below nominal coverage at strict budgets.
Causal workload + NA+MI is the conservative recommendation when calibrated uncertainty quantification is essential.
\textsc{Causal-AIM} + NA+MI is useful when lower RMSE is prioritized and the operating point is favorable, while hybrid causal + generic workloads can help when both point accuracy and coverage matter (Appendix~\ref{app:promoted}).

\paragraph{Coverage--privacy paradox.}
NA+MI coverage can decrease as $\eps$ increases because DP noise shrinks while workload approximation bias remains.
In ACIC, Causal + NA+MI coverage drops from $1.00$ at $\eps\le1$ to $0.90$ at $\eps=2$ and $0.17$ at $\eps=5$, while the other three main benchmarks remain at $0.99{-}1.00$.
The diagnostic ratio $\widehat{\mathrm{Approx}}(\phi)/\sigma_{\mathrm{NA+MI}}$ identifies this approximation-dominated regime, and Appendix~\ref{app:bias-aware} proves a conservative bias-aware interval that inflates variance by $\widehat{\mathrm{Approx}}(\phi)^2$.

\paragraph{Limitations.}
The implementation instantiates the framework on the AIM/Private-PGM family, with MST as the generic comparator; the workload-design principle should transfer to other select--measure--reconstruct generators, but we have not evaluated modern generator families beyond this class.
The feature map $\phi$ and discretization of continuous covariates introduce approximation error, especially when the true outcome surface is highly nonlinear.
Theorem~\ref{thm:ate_bound} gives a useful moment-level decomposition, but constants can be loose and the theorem does not fully analyze every downstream fitted-nuisance routine run on sampled synthetic rows.
Finally, valid NA+MI intervals can be wide at strict privacy budgets, especially on small benchmarks such as LaLonde/NSW.

\paragraph{Future work.}
Natural extensions include structured graphical-model solvers and convex relaxations for better calibration; continuous covariates through private kernel mean embeddings or random Fourier features; subgroup-specific and CATE workloads; automated DP-aware selection of $\phi$; and information-theoretic lower bounds linking overlap, workload dimension, and unavoidable privacy distortion.
\end{codexblock}

\end{document}